\newcommand*\Let[2]{\State #1 $\gets$ #2}
\newcommand\appendtographicspath[1]{%
  \g@addto@macro\Ginput@path{#1}%
}
\let\oldquote\quote
\let\endoldquote\endquote
\newcommand{\norm}[1]{\| #1 \|}
\def\eg{\textit{e.g.}}
\def\ie{\textit{i.e.}}
\newcommand{\quotem}[1]{%
``#1''
  }
\newcommand{\without}[1]{\backslash \lbrace #1 \rbrace}
\newcommand{\R}{\mathbb{R}}
\newcommand{\eqsp}{\;}
\newcommand{\Z}{\mathbb{Z}}
\newcommand{\N}{\mathbb{N}}
\newcommand{\M}{\mathcal{M}}
\newcommand{\vareps}{\varepsilon}
\newcommand{\argmin}{\operatorname{argmin} \ }
\newcommand{\PSNR}{$\operatorname{PSNR}$}
\newcommand{\SVP}{$\operatorname{SVP}$}
\newcommand{\NP}{$\operatorname{NP}$}
\newcommand{\NFA}{$\operatorname{NFA}$}
\newcommand{\KLT}{$\operatorname{KLT}$}
\newcommand{\MLE}{$\operatorname{MLE}$}
\DeclareDocumentCommand \mkmatrix {m} {
  \left( \begin{matrix} #1 \end{matrix} \right)}
\DeclareDocumentCommand \expec {o m} {
  \IfNoValueTF{#1}
  {\mathbb{E}\left[ #2 \right]}
  {\mathbb{E}_{#1} \left[ #2 \right]}}
\DeclareDocumentCommand \prob {o m} {
  \IfNoValueTF{#1}
  {\mathbb{P}\left[ #2 \right]}
  {\mathbb{P}_{#1} \left[ #2 \right]}}
\DeclareDocumentCommand \cov {o m} {
  \IfNoValueTF{#1}
  {\operatorname{Cov}\left[ #2 \right]}
  {\operatorname{Cov}_{#1} \left[ #2 \right]}}
\DeclareDocumentCommand \var {o m} {
  \IfNoValueTF{#1}
  {\operatorname{Var}\left[ #2 \right]}
  {\operatorname{Var}_{#1} \left[ #2 \right]}}
\newcommand{\autosim}{\mathcal{AS}}
\newcommand{\micro}{\mathbb{P}_0}
\newcommand{\autoprob}{\mathsf{AP}}
\newcommand{\autonfa}{\mathsf{ANFA}}
\newcommand{\acontrario}{\textit{a~contrario}}
\DeclareDocumentCommand \al {o m} {
  \begin{aligned}[#1] #2 \end{aligned}
}
\DeclareDocumentCommand \accolade {o m} {
  \left\lbrace \al[#1]{#2} \right.
}
  \DeclareDocumentCommand \seq {o o m m} {
    \IfNoValueTF{#2}
    {\IfNoValueTF{#1}
  {\left( #3_{#4} \right)_{#4 \in \N}}
  {\left( #3_{#4} \right)_{#4 \in #1}}}
{\left( #3_{#4} \right)_{#4 \in \llbracket #1, #2 \rrbracket}}
}
\newcommand{\NFAmax}{$\operatorname{NFA}_{\text{max}}$}
\newcommand{\NFAmaxmath}{\operatorname{NFA}_{\text{max}}}
\newcommand{\summ}[2]{\underset{#1}{\overset{#2}{\sum}}}
\newcommand{\vertt}[1]{\vert #1 \vert}
\newcommand{\veclet}[1]{\bm{\mathrm{#1}}}
\newcommand{\ind}[1]{\mathbb{1}_{#1}}
\DeclareDocumentCommand \assump {o m} {
  \begin{assumption}[A\ref{assump:#1}]
    \label{assump:#1}
    #2
  \end{assumption}
}
\newcommand*{\figuretitle}[1]{%
  \textit{\textbf{#1.}}
}
\newtheorem{thm}{Theorem} 
\newtheorem{mydef}{Definition} 
\newtheorem{prop}{Proposition} 
\newtheorem{assumption}{Assumption}
\newenvironment{proof}{\paragraph{Proof:}}{\hfill$\square$}
\title{Patch redundancy in images: a statistical testing framework and some applications}
\author{Valentin De Bortoli \\
  CMLA, ENS Cachan, CNRS, Université Paris-Saclay, 94235 Cachan, France \\
  \And
  Agnès Desolneux \\
  CMLA, ENS Cachan, CNRS, Université Paris-Saclay, 94235 Cachan, France \\
  \And
  Bruno Galerne \\
  Institut Denis Poisson, Universit\'{e} d'Orléans, Universit\'{e} de Tours, CNRS \\
  \And
  Arthur leclaire \\
    Univ. Bordeaux, IMB, Bordeaux INP, CNRS, UMR 5251, F-33400 Talence, France.\\}
\begin{document}
  \maketitle
\begin{abstract}
~In this work we introduce a statistical framework in order to analyze the spatial redundancy in natural images.
This notion of spatial redundancy must be defined locally and thus we give some examples of functions (auto-similarity and template similarity) which, given one or two images, computes a similarity measurement between patches. Two patches are said to be similar if the similarity measurement is small enough. To derive a criterion for taking a decision on the similarity between two patches we present an \acontrario \ model. Namely, two patches are said to be similar if the associated similarity measurement is unlikely to happen in a background model. Choosing Gaussian random fields as background models we derive non-asymptotic expressions for the probability distribution function of similarity measurements. We introduce a fast algorithm in order to assess redundancy in natural images and present applications in denoising, periodicity analysis and texture ranking.  
\end{abstract}

\keywords{patch, redundancy, statistical framework, \acontrario \ method, image denoising, texture, periodicity analysis.}

\section{Introduction}
\label{sec:intro}
In many image processing applications, using local information combined with the knowledge of long-range spatial arrangement is crucial. The spatial redundancy on sub-images called patches, encodes the small scale structure of the image as well as its large scale organization. More precisely, local information is encoded in the patch content and the large scale organization is contained in the redundancy of this information across the patches of the image.
For example, patch-based inpainting techniques, such as \cite{criminisi2004region, he2014image}, assign patches of a known region to patches of an unknown region. Namely, each patch position on the border of the unknown region is associated to an offset corresponding to the best patch according to the partial available information. In \cite{he2014image} the authors replace the search on the whole image 
by a search among the most redundant offsets in the known region. This allows the authors of \cite{he2014image} to retrieve long-range spatial structure in the unknown part of the image. Another famous application 
of spatial redundancy can be found in denoising, with the seminal work (Non-Local means) of Buades and coauthors \cite{buades2005non}, in which the authors propose to replace a noisy patch by the mean over all spatially redundant patches.

Last but not least, spatial redundancy is of crucial importance in exemplar-based texture synthesis. In this paper we define textures as images containing repeated patterns but also reflecting randomness in the arrangement of these patterns. Among textures, one important class is given by the microtextures in which no individual object can be clearly delimited. In the periodic case, a more precise definition will be given in Definition \ref{def:microtexture}.
These microtexture models can be described by Gaussian random fields \cite{van1991spot, galerne2011random, leclaire2015random, xia2014synthesizing}. 
  Parametric models using features such as wavelet transform coefficients \cite{portilla2000parametric}, scattering transform coefficients \cite{sifre2013rotation} or convolutional neural network outputs \cite{gatys2015texture} have been proposed in order to derive image models with more structure
  . On the other hand, non-parametric patch-based algorithms such as \cite{efros1999texture,efros2001image,kwatra2003graphcut, raad2015conditional, galerne2018texture} propose to use most similar patches in order to fill the new texture images, similarly to inpainting techniques.


All these techniques lift images in spaces with dimensions higher than the original image space
, and make use of the redundancy of the lifting to extract important structural information. There exist two main types of lifting: feature extraction or patch extraction. Feature extraction relies on the use of filters, linear or non-linear, which aim at selecting substantial local information. Among popular kernels are oriented and multiscale filters, which happened to be identified as early processing in mammal vision systems \cite{daugman1985uncertainty, hubel1959receptive}. These last years have seen the rise of neural networks in which the filter dictionary is no longer given as an input but learned through a data-driven optimization procedure \cite{simonyan2014vgg}. On the other hand, patch-based methods rely on the assumption that image processing tasks are simplified when conducted in the higher dimensional patch space. 

Every analysis performed in a lifted space, built via feature extraction or patch extraction, relies on the comparison of points in this space. 
In patch-based lifted spaces, we aim at finding dissimilarity functions such that two patches are visually close if the dissimilarity measurement between them is small. In this paper we focus on the square Euclidean distance but other choices could be considered 
  \cite{wang2003multiscale,wang2004image,debortoli2018gaussian,deledalle2012compare}.

This leads us to consider a statistical hypothesis testing framework to assess similarity (or dissimilarity) between patches. The null hypothesis is defined as the absence of local structural similarities in the image. Reciprocally the alternative hypothesis is defined as the presence of such similarities. There exists a wide variety of tractable models exhibiting no similarity at long-range, like Gaussian random fields  \cite{van1991spot, galerne2011random, leclaire2015random, xia2014synthesizing} or spatial Markov random fields \cite{cross1983markov}, whereas sampling and inference in very structured models rely on optimization procedures and may be computationally expensive, their distribution being the limit of some Markov chain \cite{zhu1998filters,lu2015learning} or some stochastic optimization procedure \cite{bruna2018multiscale}. This encourages us to consider an \acontrario \ approach, \ie \ we do not consider the alternative hypothesis and focus on rejecting the null hypothesis. This framework was successfully applied in many areas of image processing \cite{davy2018reducing, desolneux2000meaningful, desolneux2001edge, almansa2003vanishing, cao2004application} and aims at identifying structure events in images. This statistical model takes its roots in the fundamental work of the Gestalt theory \cite{desolneux2007gestalt}. One of its principle, the non-accidentalness principle \cite{lowe2012perceptual} or Helmholtz principle \cite{zhu1999embedding, desolneux2001edge}, states that no structure is perceived in a noise model. 
To be precise, in our case of interest, we want to assess that no spatial redundancy is perceived in microtexture models. This methodology allows us to only design a locally structured background model to define a null hypothesis. Combining \acontrario \ principles and patch-based measures, we propose an algorithm to identify auto-similarities in images.

We then turn to the implementation of such an algorithm and illustrate the diversity of its possible applications with three examples: denoising, lattice extraction, and periodicity ranking of textures. In our denoising application we propose a modification of the celebrated Non-Local means algorithm \cite{buades2005non} (NL-means) by inserting a threshold in the selection of similar patches. Using an \acontrario \ model we are able to give probabilistic control on the patch reconstruction.

We then focus on periodicity detection and, more precisely, lattice extraction. Periodicity in images was described as an important feature 
in early mathematical vision~\cite{haralick1973textural}. Most of the proposed methods to analyze periodicity rely on global measurements such as the modulus of the Fourier transform  \cite{matsuyama1983structural} or the autocorrelation \cite{lin1997extracting}. These global techniques are widely used in crystallography where lattice properties, such as the angle between basis vectors, are fundamental  \cite{mevenkamp2015unsupervised, sang2014revolving}. Since all of our measurements are local, we are able to identify periodic similarities even in images which are not periodic but present periodic parts, for instance if two crystal structures are present in a single crystallography image. We draw a link between the introduced notion of auto-similarity and the inertia measurement in co-occurence matrices \cite{haralick1973textural}. We then introduce our lattice proposal algorithm which combines a detection map, \ie \ the output of our redundancy detection algorithm, and graphical model techniques, as in \cite{park2009deformed}, in order to extract lattice basis vectors. 

Our last application concerns texture ranking. Since the definition of texture is broad and covers a wide range of images, it is a natural question to identify criteria in order to distinguish textures. In \cite{liu2004computational}, the authors use a classical measure for distinguishing textures:~regularity. In this work, we narrow this criterion and restrict ourselves to the study of periodicity in texture images. The proposed graphical model inference naturally gives a quantitative measurement for texture periodicity ranking. We give an example of ranking on 25 images of the Brodatz set.

Our paper is organized as follows. An \acontrario \ framework for local similarity detection is 
proposed in Section \ref{sec:a_contrario_framework}. In the \acontrario \ framework, a background model, corresponding to the null hypothesis, is required. The consequence of choosing Gaussian models as background models is investigated and a redundancy detection algorithm is proposed in Section \ref{sec:gauss-model-cons}. The rest of the paper is dedicated to some examples of application of the proposed framework. After reviewing one of the most popular method in image denoising we introduce a denoising algorithm in Section~\ref{sec:nl-means-contrario-1} and present our experimental results in Section \ref{sec:expe-results}. Local dissimilarity measurements can be used as periodicity detectors. The link between the locality of the introduced functions and the literature on periodicity detection problems is investigated in Section \ref{sec:existing_algorithms}. An algorithm for detecting lattices in images is given in Section \ref{sec:algorithm and properites} and numerical results are presented in Section \ref{sec:experimental-results}. In our last experiment in Section \ref{sec:texture-rank}, we introduce a criterion for measuring texture periodicity. We conclude our study and discuss future work in Section \ref{sec:conclusion}.
\section{An a contrario framework for auto-similarity}
\label{sec:similarity functions}

We first introduce a notion of dissimilarity between patches of an input image.
\begin{mydef}[Auto-similarity]
  Let $u$ be an image defined over a domain $\Omega = \llbracket 0,M-1 \rrbracket^2 \subset \Z^2$, with $M \in \N \backslash \{ 0\}$. Let $\omega \subset \Z^2$ be a patch domain. We introduce ${P_{\omega}(u)= (\dot{u}(\veclet{y}))_{\veclet{y} \in \omega}}$ the patch at position $\omega$ in the periodic extension of $u$ to $\Z^2$, denoted by~$\dot{u}$. We define the auto-similarity with patch domain $\omega$ and offset $\veclet{t}\in \Z^2$ by
  \begin{equation}
    \autosim(u,\veclet{t},\omega) = \norm{P_{\veclet{t}+\omega}(u) - P_{\omega}(u)}_2^2 \eqsp .
  \end{equation}
  \label{def:autosim}
\end{mydef}
The auto-similarity computes the distance between a patch of $u$ defined on a domain $\omega$ and the patch of $u$ defined by the domain $\omega$ shifted by the offset vector $\veclet{t}$.

  In what follows, we introduce an \acontrario \ framework on the auto-similarity
  . This framework will allow us to derive an algorithm for detecting spatial redundancy in natural images.
\label{sec:a_contrario_framework}
In this section we fix an image domain $\Omega \subset \Z^2$ and a patch domain $\omega \subset \Omega$. We recall that our final aim is to design a criterion that will answer the following question: are two given patches similar? This criterion will be given by the comparison between the value of a dissimilarity function and a threshold $a$. We will define the threshold $a$ so that few similarities are identified in the null hypothesis model, \ie \ similarity does not occur ``just by chance''. Thus we can reformulate the initial question: is the similarity output of a dissimilarity function between two patches small enough? Or, to be more precise, how can we set the threshold $a$ in order to obtain a criterion for assessing similarity between patches?

This formulation agrees with the \acontrario \ framework \cite{desolneux2007gestalt} which states that geometrical and/or perceptual structure in an image is meaningful if it is a rare event in a background model. This general principle is sometimes called the Helmholtz principle \cite{zhu1999embedding} or the non-accidentalness principle \cite{lowe2012perceptual}. Therefore, in order to control the number of similarities identified in the background model, we study the probability density function of the auto-similarity 
function with input random image $U$ over $\Omega$. We will denote by $\micro$ the probability distribution of $U$ over $\R^{\Omega}$, the images over $\Omega$. We will assume that $\micro$ is a microtexture model, see Definition~\ref{def:microtexture} below for a precise definition of such a model. 
We define the following significant event which encodes spatial redundancy: $\autosim(u,\veclet{t},\omega) \leq a(\veclet{t})$,
where $a$, the threshold function, is defined over the offsets ($\veclet{t} \in \Z^2$) but also depends on other parameters such as $\omega$ or $\micro$ 
. The dependency of $a$ with respect to $\veclet{t}$ cannot be omitted. For instance, even in a Gaussian white noise $W$, the probability distribution function of $\autosim(W, \veclet{t}, \omega)$ depends on $\veclet{t}$. 


The Number of False Alarms (\NFA ) is a crucial quantity in the \acontrario \ methodology.
A false alarm is defined as an occurrence of the significant event in the background model~$\micro$. We recall that in our model the significant event is patch redundancy. This test must be conducted for every possible configurations of the significant event, \ie \ in our case we test every possible offset $\veclet{t}$. The \NFA \ is then defined as the expectation of the number of false alarms over all possible configurations
. Bounding the \NFA \ ensures that the probability of identifying $k$ offsets with spatial redundancy is also bounded, see Proposition \ref{prop:a_contrario_bound}. In what follows we give the definition of the \NFA \ in the spatial redundancy context.

\begin{mydef}[\NFA]
  Let $U \sim \micro$, where $\micro$ is a background microtexture model.
  We define the auto-similarity probability map $\autoprob$ for any $\veclet{t} \in \Omega$, $\omega \subset \Omega$ and $a \in \R^{\Omega}$ by
        \begin{equation}\autoprob(\veclet{t},\omega, a) =  \prob[0]{ \autosim(U,\veclet{t},\omega) \leq a(\veclet{t})}  \label{eq:def_autoprob} \eqsp .\end{equation}
      We define the auto-similarity expected number of false alarms $\autonfa$ by 
  \begin{equation}
    \label{eq:NFA}
  \autonfa(\omega, a) = \sum_{\veclet{t} \in \Omega} \autoprob(\veclet{t}, \omega, a) \eqsp .
  \end{equation}
  \label{def:NFA}
\end{mydef}  

Note that $\autoprob(\veclet{t}, \omega, a)$ corresponds to the probability that $\omega + \veclet{t}$ is similar to $\omega$ in the background model $U$.
  For any $\veclet{t} \in \Omega$, the cumulative distribution function of the auto-similarity random variable $\autosim(U,\veclet{t},\omega)$ under $\micro$ evaluated at value $\alpha(\veclet{t})$ is given by $\autoprob(\veclet{t},\omega,\alpha(\veclet{t}))$. We denote by ${q \mapsto \autoprob^{-1}(\veclet{t},\omega,q)}$ the inverse cumulative distribution function, potentially defined by a generalized inverse ($ \autoprob^{-1}(\veclet{t},\omega,q) = \inf \{\alpha(\veclet{t}) \in \R, \ \autoprob(\veclet{t}, \omega, \alpha(\veclet{t})) \geq q \}$), of the auto-similarity random variable for a fixed offset $\veclet{t}$, with $q \in (0,1)$ a quantile. 
  We now have all the tools to control the number of detected offsets in the background model.

  \begin{mydef}[Detected offset]
    Let $u \in \R^{\Omega}$ be an image, $\omega \subset \Omega$ a patch domain, and $a \in \R^{\Omega}$. An offset $\veclet{t}$ is said to be detected with respect to $a$, 
    if $\autosim(u,\veclet{t}, \omega) \leq a(\veclet{t})$. 
    \label{def:detec_offset}
  \end{mydef}
Note that a detected offset in $U \sim \micro$ 
corresponds to a false alarm in the \acontrario \ model. 
In what follows we suppose that the cumulative distribution function of $\autosim(U,\veclet{t}, \omega)$ is invertible for every $\veclet{t} \in \Omega$. This ensures that for any $\veclet{t} \in \Omega$ and $q \in (0,1)$ we have
  \begin{equation}
    \label{eq:invertibility}
    \autoprob\left(\veclet{t}, \omega, \autoprob^{-1}\left(\veclet{t},\omega, q\right)\right) = q \eqsp .
  \end{equation}

\begin{prop}
  \label{prop:a_contrario_bound}
  Let $\operatorname{NFA}_{\text{max}} \geq 0$ and for all $\veclet{t} \in \Omega$ define $ a(\veclet{t}) = \autoprob^{-1}\left(\veclet{t}, \omega, \operatorname{NFA}_{\text{max}} / |\Omega|\right)$. We have that for any $n \in \N \without{0}$,
  \begin{equation*}
    \autonfa(\omega, a) = \operatorname{NFA}_{\text{max}} \quad \text{and} \quad \prob[0]{ \text{\quotem{at least $n$ offsets are detected in $U$}}} \leq \frac{\operatorname{NFA}_{\text{max}}}{n} \eqsp.
  \end{equation*}


\end{prop}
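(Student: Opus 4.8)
The plan is to reduce both assertions to a single computation of the expected number of detected offsets in the background model, and then to invoke Markov's inequality. No structural property of $\micro$ beyond the standing invertibility hypothesis \eqref{eq:invertibility} will be needed.

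First I would verify the identity $\autonfa(\omega,a)=\NFAmaxmath$. Since $\NFAmaxmath\geq 0$ and (implicitly) $\NFAmaxmath\leq|\Omega|$, the quantile $q=\NFAmaxmath/|\Omega|$ lies in $[0,1]$, so $a(\veclet{t})=\autoprob^{-1}(\veclet{t},\omega,q)$ is well defined for every $\veclet{t}\in\Omega$. Applying the invertibility relation \eqref{eq:invertibility} offset by offset gives $\autoprob(\veclet{t},\omega,a)=\autoprob\bigl(\veclet{t},\omega,\autoprob^{-1}(\veclet{t},\omega,q)\bigr)=q$ for each $\veclet{t}$; summing over the $|\Omega|$ offsets and using the definition \eqref{eq:NFA} of $\autonfa$ yields $\autonfa(\omega,a)=|\Omega|\cdot(\NFAmaxmath/|\Omega|)=\NFAmaxmath$.

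Next I would introduce the integer-valued random variable counting detected offsets, $N(U)=\sum_{\veclet{t}\in\Omega}\ind{\autosim(U,\veclet{t},\omega)\leq a(\veclet{t})}$, and observe that the event \quotem{at least $n$ offsets are detected in $U$} is precisely $\{N(U)\geq n\}$. By linearity of expectation — which requires no independence among the indicators, a point worth stressing since the entries of $U$ are typically strongly correlated in a microtexture model — one has $\expec[0]{N(U)}=\sum_{\veclet{t}\in\Omega}\prob[0]{\autosim(U,\veclet{t},\omega)\leq a(\veclet{t})}=\sum_{\veclet{t}\in\Omega}\autoprob(\veclet{t},\omega,a)=\autonfa(\omega,a)=\NFAmaxmath$ by the previous paragraph. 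Since $N(U)\geq 0$, Markov's inequality gives $\prob[0]{N(U)\geq n}\leq\expec[0]{N(U)}/n=\NFAmaxmath/n$ for every $n\in\N\without{0}$, which is the desired bound.

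The argument is short and the only delicate point is the well-posedness of the threshold $a$: one must ensure that $\autoprob^{-1}(\veclet{t},\omega,\cdot)$ is evaluated inside $[0,1]$ and, more importantly, that substituting it back into $\autoprob$ returns the value $\NFAmaxmath/|\Omega|$ exactly (an equality, not merely the inequality the generalized inverse would give in general) — which is precisely what hypothesis \eqref{eq:invertibility} guarantees. I would also remark that the bound is informative only when $n>\NFAmaxmath$, and that this choice of $a$ is the largest threshold function achieving $\autonfa(\omega,a)=\NFAmaxmath$, which is what motivates its definition.
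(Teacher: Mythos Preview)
Your proof is correct and follows essentially the same route as the paper: compute $\autonfa(\omega,a)$ by substituting the definition of $a(\veclet{t})$ and invoking the invertibility relation \eqref{eq:invertibility}, then bound the tail probability via Markov's inequality applied to the counting variable $N(U)=\sum_{\veclet{t}\in\Omega}\ind{\autosim(U,\veclet{t},\omega)\leq a(\veclet{t})}$. Your additional remarks on well-posedness and on the informativeness of the bound are sound but go beyond what the paper records.
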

\begin{proof}
  Using \eqref{eq:NFA}, and $a(\veclet{t}) = \autoprob^{-1}\left(\veclet{t}, \omega, \operatorname{NFA}_{\text{max}} / |\Omega|\right)$, we get
  \[ \autonfa(\omega, a) = \summ{\veclet{t} \in \Omega}{}{\autoprob(\veclet{t},\omega, a)} = \summ{\veclet{t} \in \Omega}{}{\autoprob\left(\veclet{t}, \omega, \autoprob^{-1}\left(\veclet{t},\omega, \operatorname{NFA}_{\text{max}} / \vertt{\Omega}\right)\right)} = \operatorname{NFA}_{\text{max}} \eqsp ,
  \]
  where the last equality is obtained using \eqref{eq:invertibility}.
    Concerning the upper-bound, we have, using the Markov inequality and \eqref{eq:def_autoprob}, for any $n \in \N \without{0}$
\begin{align*}
    \prob[0]{ \text{\quotem{\small at least $n$ offsets are detected in $U$}}} &= \prob[0]{\sum_{\veclet{t} \in \Omega}{}{\mathbb{1}_{\autosim(U, \veclet{t}, \omega) \leq a(\veclet{t})}} \ge n} \\ &\leq \frac{\sum_{\veclet{t} \in \Omega}{}{\expec{\mathbb{1}_{\autosim(U, \veclet{t}, \omega) \leq a(\veclet{t})}}}}{n}  \leq \frac{\operatorname{NFA}_{\text{max}}}{n} \eqsp ,
\end{align*}
where $\mathbb{1}_{\autosim(U, \veclet{t}, \omega) \leq a(\veclet{t})} = 1$ if $\autosim(U, \veclet{t}, \omega) \leq a(\veclet{t})$ and $0$ otherwise.
\end{proof}

Thus, setting $a$ as in Proposition \ref{prop:a_contrario_bound}, we have that an offset $\veclet{t} \in \Omega$ is detected for an image~$u \in \R^{\Omega}$ if
\begin{equation}\autosim(u,\veclet{t},\omega) \leq \autoprob^{-1}\left(\veclet{t},\omega, \operatorname{NFA}_{\text{max}} / \vertt{\Omega}\right) \eqsp . \label{eq:icdf_ineq}\end{equation}
This \acontrario \ detection framework 
can then be simply rewritten as 1) computing the auto-similarity function with input image $u$, 2) thresholding the obtained dissimilarity map with the inverse cumulative distribution function of the computed dissimilarity function under $\micro$. 
The computed threshold depends on the offset and Proposition \ref{prop:a_contrario_bound} ensures probabilistic guarantees on the expected number of detections under $\micro$. 
Using the inverse property of the inverse cumulative distribution function and \eqref{eq:icdf_ineq}, we obtain that an offset is detected if and only if
\begin{equation}\prob[0]{\autosim(U,\veclet{t},\omega) \leq \autosim(u,\veclet{t},\omega)}=  \autoprob\left(\veclet{t}, \omega, \autosim(u,\veclet{t},\omega)\right) \leq \operatorname{NFA}_{\text{max}} /\vertt{\Omega} \eqsp . \label{eq:true_detec}\end{equation}
Therefore, the thresholding operation can be conducted either on $\autosim(u,\veclet{t}, \omega)$, see \eqref{eq:icdf_ineq}, or on $\autoprob\left(\veclet{t}, \omega, \autosim(u,\veclet{t},\omega)\right)$, see \eqref{eq:true_detec}.
This property will be used in Section \ref{sec:detection-algorithm} to define a similarity detection algorithm based on the evaluation of $\autosim(u,\veclet{t}, \omega)$.
\section{Gaussian model and detection algorithm}
\label{sec:gauss-model-cons}

\subsection{Choice of background model}
\label{sec:choice-backgr-model}

In this section we compute $\autoprob \left( \veclet{t}, \omega, \alpha \right)$, \ie \ the cumulative distribution function of the similarity function under the null hypothesis model, with a Gaussian background model. Indeed, if the background model is simply a Gaussian white noise the similarities identified by the \acontrario \ algorithm are the ones that are not likely to be present in the Gaussian white noise image model. 
More generally, we consider stationary Gaussian random fields defined in the following way: 
we introduce an image $f$ over $\R^{\Omega}$ which contains the microtexture information we want to discard in our \acontrario \ model. 
In what follows we give the definition of the microtexture model associated to $f$.
\begin{mydef}[Microtexture model]
  \label{def:microtexture}
  Let $f \in \R^{\Omega}$, we define the associated microtexture model $U$ by setting, $U = f * W$, where $*$ is the periodic convolution operator over $\Omega$ given by $v * w(\veclet{x}) = \sum_{\veclet{y} \in \Omega} \dot{v}(\veclet{y}) \dot{w}(\veclet{x} - \veclet{y})$ and $W$ is a white noise over $\Omega$, \ie \ $(W(\veclet{x}))_{\veclet{x} \in \Omega}$ are i.i.d. $\mathcal{N}(0,1)$ random variables.
\end{mydef}
  Given an image $u \in \R^{\Omega}$, a microtexture model can be derived considering
  \begin{equation}m_u = \sum_{\veclet{x} \in \Omega} u(x)/|\Omega|  \eqsp , \quad \text{and} \quad  U = |\Omega|^{-1/2} ( u  - m_u)* W \eqsp . \label{eq:gaussian_model}\end{equation}
Note that if $U$ is given by \eqref{eq:gaussian_model} we have for any $\veclet{x}, \veclet{y} \in \Omega$
\begin{equation}
  \expec{U(\veclet{x})} = 0\quad \text{and} \ \cov{U(\veclet{x}), U(\veclet{y})} = |\Omega|^{-1} \sum_{\veclet{z} \in \Omega}(\dot{u}(\veclet{z}) - m_u)(\dot{u}(\veclet{z - (y-x)}) - m_u) \eqsp .
\end{equation}
We refer to \cite{galerne2011random} for a mathematical study of this model.

\begin{figure}[h]
  \centering
  \subfloat[]{\includegraphics[width=.3\linewidth]{./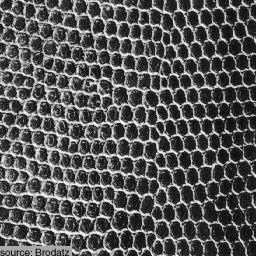}}\hfill
  \subfloat[]{\includegraphics[width=.3\linewidth]{./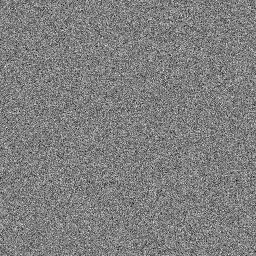}}\hfill
  \subfloat[]{\includegraphics[width=.3\linewidth]{./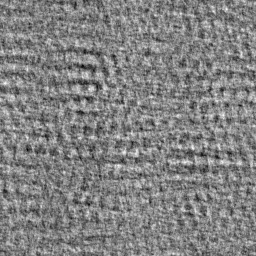}}\hfill  
  \caption{\figuretitle{Examples of microtexture models} In~(a) we present an original $256 \times 256$ image. In~(b) and~(c) we derive two microtexture models. In~(b) we present a Gaussian white noise and in~(c) the microtexture model given by \eqref{eq:gaussian_model}. Note that (c) shows more local structure than (b).}
  \label{fig:illus_dmap}
\end{figure}

\subsection{Detection algorithm}
\label{sec:detection-algorithm}
In this section, $\Omega$ is a finite square domain in $\Z^2$. We fix $\omega \subset \Omega$. We also define $f$, a 
function over $\Omega$. We consider the Gaussian random field $U = f * W$, where $W$ is a Gaussian white noise over $\Omega$. We denote by $\Gamma_f$ the autocorrelation of $f$, \ie \ $\Gamma_f = f * \check{f}$ where for any $\veclet{x} \in \Omega$, $\check{f}(\veclet{x}) = f(-\veclet{x})$. We introduce the offset correlation function~$\Delta_f$ defined for any $\veclet{t}, \veclet{x} \in \Omega$ by
\begin{equation}
  \label{eq:delta_fun}
  \Delta_f(\veclet{t}, \veclet{x}) = 2\Gamma_f(\veclet{x}) - \Gamma_f(\veclet{x+t}) - \Gamma_f(\veclet{x-t}) \eqsp . \end{equation}
The following proposition, proved in \cite{debortoli2018gaussian}, gives the explicit probability distribution function of the squared $\ell^2$ auto-similarity.
\begin{prop}[Squared $\ell^2$ auto-similarity function exact probability distribution function]
  Let $\Omega = \llbracket 0, M-1 \rrbracket^2$ with $M \in \N \backslash \{ 0 \}$, $\omega \subset \Omega$, $f \in \R^{\Omega}$ and $U = f* W$ where $W$ is a Gaussian white noise over $\Omega$.
  Then, for any $\veclet{t} \in \Omega$, $\autosim(U,\veclet{t},\omega)$ has the same distribution as $\sum_{k=0}^{|\omega| - 1}{\lambda_k(\veclet{t},\omega)Z_k}$, 
  with $Z_k$ independent chi-square random variables with parameter 1 and $\lambda_k(\veclet{t},\omega)$ the eigenvalues of the covariance matrix $C_{\veclet{t}}$ associated with function $\Delta_f(\veclet{t},\cdot)$ restricted to $\omega$, defined in \eqref{eq:delta_fun}, i.e \ for any $\veclet{x_1}, \veclet{x_2} \in \omega$, 
  $C_{\veclet{t}}(\veclet{x_1, x_2}) = \Delta_f(\veclet{t}, \veclet{x_1 - x_2})$.
  \label{prop:squared_exact}
\end{prop}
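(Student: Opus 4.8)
The plan is to recognize $\autosim(U,\veclet{t},\omega)$ as the squared Euclidean norm of a single centered Gaussian vector, to compute that vector's covariance matrix in closed form, and then to diagonalize it. Concretely, I would introduce the $\vertt{\omega}$-dimensional random vector $V = (V(\veclet{y}))_{\veclet{y} \in \omega}$ with $V(\veclet{y}) = \dot{U}(\veclet{y}+\veclet{t}) - \dot{U}(\veclet{y})$. By Definition \ref{def:autosim} we have $\autosim(U,\veclet{t},\omega) = \norm{P_{\veclet{t}+\omega}(U) - P_{\omega}(U)}_2^2 = \norm{V}_2^2$. Since $U = f*W$ with $W$ a Gaussian white noise over the finite domain $\Omega$, the field $\dot{U}$ is a centered Gaussian field, so $V$ is a centered Gaussian vector and it suffices to identify its covariance matrix.

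Next I would carry out the covariance computation. Using that $W$ has i.i.d.\ $\mathcal{N}(0,1)$ entries one gets, for all $\veclet{a}, \veclet{b}$, $\cov{\dot{U}(\veclet{a}), \dot{U}(\veclet{b})} = (f*\check{f})(\veclet{a}-\veclet{b}) = \Gamma_f(\veclet{a}-\veclet{b})$, i.e.\ $U$ is a stationary Gaussian field with autocorrelation $\Gamma_f$. Expanding $\cov{V(\veclet{x_1}), V(\veclet{x_2})}$ into its four covariance terms, applying this identity, and using that $\Gamma_f$ is even, one obtains $\cov{V(\veclet{x_1}), V(\veclet{x_2})} = 2\Gamma_f(\veclet{x_1-x_2}) - \Gamma_f(\veclet{x_1-x_2+t}) - \Gamma_f(\veclet{x_1-x_2-t})$, which is exactly $\Delta_f(\veclet{t}, \veclet{x_1-x_2}) = C_{\veclet{t}}(\veclet{x_1, x_2})$ by \eqref{eq:delta_fun}. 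Hence $V \sim \mathcal{N}(0, C_{\veclet{t}})$.

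The last step is the diagonalization. Being a covariance matrix, $C_{\veclet{t}}$ is symmetric positive semidefinite, so there is an orthogonal matrix $Q$ with $C_{\veclet{t}} = Q \operatorname{diag}(\lambda_0(\veclet{t},\omega), \dots, \lambda_{\vertt{\omega}-1}(\veclet{t},\omega)) Q^\top$ and $\lambda_k(\veclet{t},\omega) \geq 0$. The vector $G = Q^\top V$ is then centered Gaussian with diagonal covariance, hence has independent components $G_k \sim \mathcal{N}(0, \lambda_k(\veclet{t},\omega))$; writing $G_k = \sqrt{\lambda_k(\veclet{t},\omega)}\,\xi_k$ with $\xi_k$ i.i.d.\ $\mathcal{N}(0,1)$ and using that $Q$ preserves the Euclidean norm gives $\autosim(U,\veclet{t},\omega) = \norm{V}_2^2 = \norm{G}_2^2 = \sum_{k=0}^{\vertt{\omega}-1} \lambda_k(\veclet{t},\omega)\, Z_k$ with $Z_k = \xi_k^2$ i.i.d.\ chi-square random variables with parameter $1$, which is the claimed identity in distribution.

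I do not expect a genuine obstacle here: the argument is a standard diagonalization of a Gaussian quadratic form. The only step that requires attention is the covariance bookkeeping in the middle — correctly handling the periodic shifts inside $\dot{U}$, the signs of the four terms, and the evenness of $\Gamma_f$ needed to merge the two cross terms into the symmetric form appearing in $\Delta_f$. This is precisely the computation carried out in \cite{debortoli2018gaussian}.
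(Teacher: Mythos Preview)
Your proposal is correct and is exactly the standard argument: write $\autosim(U,\veclet{t},\omega)=\norm{V}_2^2$ for the centered Gaussian vector $V=(\dot{U}(\cdot+\veclet{t})-\dot{U}(\cdot))\vert_\omega$, identify its covariance as $C_{\veclet{t}}(\veclet{x_1},\veclet{x_2})=\Delta_f(\veclet{t},\veclet{x_1}-\veclet{x_2})$ via the stationarity of $U$, and orthogonally diagonalize. The paper does not actually prove this proposition here but defers to the companion paper \cite{debortoli2018gaussian}, where this same diagonalization-of-a-Gaussian-quadratic-form argument is carried out; one small remark is that the evenness of $\Gamma_f$ is not needed in the covariance computation itself (the four cross terms already combine directly into $\Delta_f$), though it does confirm the symmetry $C_{\veclet{t}}(\veclet{x_1},\veclet{x_2})=C_{\veclet{t}}(\veclet{x_2},\veclet{x_1})$.
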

As a consequence if $f =\delta_0$, \ie \ $U$ is a Gaussian white noise, and $\{ \veclet{x} + \veclet{t}, \veclet{x} \in \omega\} \cap \omega = \emptyset$, \ie \ there is no overlapping between the patch domain $\omega$ and its shifted version, then $\autosim(U,\veclet{t},\omega)$ is a chi-square random variable with parameter $|\omega|$.

In order to compute the cumulative distribution function of a quadratic form of Gaussian random variables we must deal with two issues: 1) the computation of the eigenvalues $\lambda_k(\veclet{t}, \omega)$ might be time-consuming and efficient methods must be developed ; 2) the exact computation of the cumulative distribution function of a quadratic form of Gaussian random variables requires the use of heavy integrals, see \cite{imhof1961computing}. In \cite{debortoli2018gaussian} a projection method is introduced in order to easily compute approximated eigenvalues, with equality when $\omega = \Omega$. 
The so-called Wood F method (see \cite{wood1989f, bodenham2016comparison}) shows the best trade-off between accuracy and computational cost to approximate the cumulative distribution function of quadratic forms in Gaussian random variables with given weights. It is a moment method of order 3, fitting a Fisher-Snedecor distribution to the empirical one. Note that in \cite{liu2009chisquare} another moment method of order 3 is proposed. In what follows, we assume that we can compute the cumulative distribution function  of $\autosim(U,\veclet{t},\omega)$ and we refer to \cite{debortoli2018gaussian} for further details.

In Algorithm \ref{alg:auto-similaritydetection} we propose an \acontrario \ framework for spatial redundancy detection. We suppose that $u$ and $\omega$ are provided by the user. 
Using Proposition \ref{prop:a_contrario_bound} and \eqref{eq:true_detec} 
, we say that an offset is detected if  $\autoprob\left(\veclet{t}, \omega, \autosim(u,\veclet{t},\omega)\right) \leq \operatorname{NFA}_{\text{max}} /\vertt{\Omega}$. The value \NFAmax \ is supposed to be set by the user. 
The background model used in the auto-similarity detection is the one given in \eqref{eq:gaussian_model}.
Therefore, Proposition \ref{prop:squared_exact} and the discussion that follows can be used to compute an approximation of $\autoprob(\veclet{t}, \omega, \autosim(u, \veclet{t},\omega))$. In Figure \ref{fig:illus_dmap} we apply Algorithm \ref{alg:auto-similaritydetection} to a texture image.


\begin{algorithm}
  \caption{Auto-similarity detection
    \label{alg:auto-similaritydetection}}
  \begin{algorithmic}[1]
    \Function{autosim-detection}{$u$, $\omega$, \NFAmax}
    \For{$\veclet{t} \in \Omega$}
    \Let{val}{$\autosim(u,\veclet{t},\omega)$}
    \Let{$P_{map}(\veclet{t})$}{$\autoprob(\veclet{t}, \omega, \text{val})$} \Comment{$\autoprob(\veclet{t}, \omega, \text{val})$ approximation detailed in Section \ref{sec:detection-algorithm}}
    \Let{$D_{map}(\veclet{t})$}{$\mathbb{1}_{P_{map}(\veclet{t}) \leq \text{\NFAmax} / |\Omega |}$} 
    \EndFor    
      \State \Return{the images $P_{map}$, $D_{map}$}
    \EndFunction
  \end{algorithmic}
\end{algorithm}

\begin{figure}
  \centering
  \subfloat[]{\includegraphics[width=.19\linewidth]{./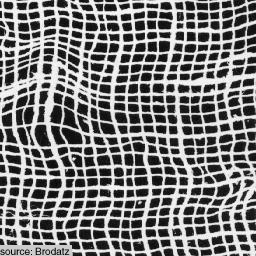}}\hfill
  \subfloat[]{\includegraphics[width=.19\linewidth]{./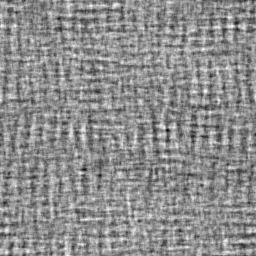}}\hfill
  \subfloat[]{\includegraphics[width=.19\linewidth]{./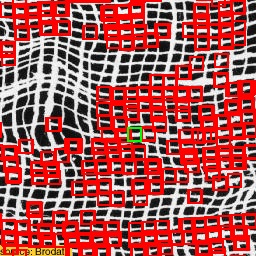}}\hfill  
  \subfloat[]{\includegraphics[width=.19\linewidth]{./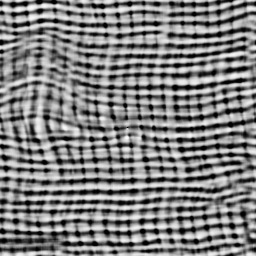}}\hfill
  \subfloat[]{\includegraphics[width=.19\linewidth]{./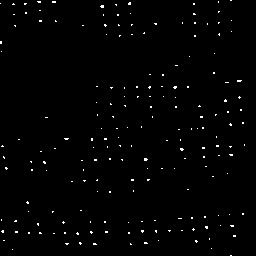}}\hfill
  \caption{\figuretitle{Outputs of Algorithm \ref{alg:auto-similaritydetection}} In~(a) we present an original $256 \times 256$ image. In~(b) we present the associated microtexture model given by \eqref{eq:gaussian_model}. In~(c) the green patch is the input patch, \ie \ $P_{\omega}(u)$. In this experiment \NFAmax \ is set to $1$. In~(d), respectively (e), we present the output $P_{map}$, respectively $D_{map}$, of Algorithm \ref{alg:auto-similaritydetection}. In~(c) we show in red the patches corresponding to the identified offsets in $P_{map}$.}
  \label{fig:illus_dmap}
\end{figure}

\section{Denoising}
\label{sec:denoising}
\subsection{NL-means and a contrario framework}
\label{sec:nl-means-contrario-1}
In this section we apply the \acontrario \ framework to the context of image denoising and propose a simple modification of the celebrated image denoising algorithm Non-Local Means (NL-means). This algorithm was introduced in the seminal paper of Buades et al. \cite{buades2005non} and was inspired by the work of Efros and Leung in texture synthesis \cite{efros1999texture}. It was also independently introduced in \cite{awate2006unsupervised}. This algorithm relies on the simple idea that denoising operations can be conducted in the lifted patch space. In this space the usual Euclidean distance acts as a good similarity detector and we can obtain a denoised patch by averaging all the patches with weights that depend on this Euclidean distance. Usually the weight function is set to have exponential decay, but it was suggested in \cite{goossens2008improved, salmon2010two, duval2010parameter} to use compactly supported weight functions in order to avoid the loss of isolated details. Since its introduction, many algorithms derived from NL-means have been proposed in order to embed the algorithm in general statistical frameworks \cite{duval2011bias, lebrun2013nonlocal} or to take into account the underlying geometry of the patch space \cite{houdard2017high}. Among the state-of-the-art denoising algorithms, see \cite{lebrun2012secrets} for a review, we consider Block-Matching and 3D Filtering (BM3D) \cite{dabov2007image} to compare our algorithm with.

There exist several works combining \acontrario \ models and denoising tasks. Coupier et al. in \cite{coupier2005image} propose to combine morphological filters and a testing hypothesis framework to remove impulse noise. In \cite{delon2013patch} Delon and Desolneux compare different statistical frameworks to perform denoising with Gaussian noise or impulse noise. The \acontrario \ model was also successfully used to deal with speckle noise \cite{fablet2005speckle} and quasi-periodic noise \cite{sur2015contrario}, and rely on the thresholding of wavelet or Fourier coefficients. In \cite{kervrann2008local}, Kervrann and Boulanger derive approximated probabilistic thresholds using $\chi_2$ probability distribution functions. In \cite{wue2013probabilistic} the authors propose a testing framework in order to estimate thresholds. The expressions they derive also relies on an approximation of the probability distribution of the squared Euclidean norm between two patches in Gaussian white noise. 

Following a standard extension procedure of the NL-means algorithm we consider a threshold version of it, see Algorithm \ref{alg:nlmeans_thres}. In what follows we fix a ``clean'', or original, image $u_0$ defined over $\Omega$, a finite rectangular domain of $\Z^2$, a noisy image $u = u_0 + \sigma w$, with $w$ a realization of a standard Gaussian random field $W$ and $\sigma >0$ the standard deviation of the noise. In all of our experiments we suppose that $\sigma$ is known. Note that there exist several algorithms to estimate $\sigma$ from real images, see \cite{ponomarenko2007noise} for instance. Our goal is to retrieve $u_0$ based on the information in $u$. We consider the lifted version of $u$ in a patch space. Let $\omega_0$ be a centered $8 \times 8$ patch domain. For a patch window $\omega = \veclet{x} + \omega_0$ the patch search window $T$ will be defined by
\begin{equation}T = \left\lbrace \veclet{t} \in \Z^2, \ \veclet{t} + \omega \subset \Omega, \ \| \veclet{t} \|_{\infty} \leq c \right\rbrace \eqsp , \label{eq:patch_search_window}
\end{equation}
with $c \in \N$. $|T|$ denotes the cardinality of $T$. There exists a large literature concerning the setting of $c$ and $\omega_0$, see \cite{duval2010parameter}. Note that the locality of the patch window was assessed to be a crucial feature of NL-means \cite{grewenig2011rotationally}. Suppose we have a collection of denoised patches $\hat{p}(u, \omega)$ for all patch domains  $\omega$, we obtain a pixel at position $\veclet{x}$ in the denoised image $\hat{u}$ using the following average, see \cite{buades2011non}, 
  \begin{equation}
    \hat{u}(\veclet{x}) =  \left| \lbrace \veclet{t} \in \Omega, \ \text{s.t} \ \veclet{x} \in \veclet{t} + \omega \subset \Omega \rbrace \right|^{-1}
  \sum_{\veclet{t} \in \Omega, \ \text{s.t} \ \veclet{x} \in \veclet{t} + \omega \subset \Omega} \hat{p}(u,\veclet{t} + \omega)(\veclet{x})   \eqsp . \label{eq:mean_denoising}
  \end{equation}
 We now introduce our modification of NL-means. We suppose that we are provided a threshold function $a$. The choice of such a function is discussed in Proposition \ref{prop:a_contrario_bound_nlmeans}.

\begin{algorithm}
  \caption{NL-means threshold
    \label{alg:nlmeans_thres}}
  \begin{algorithmic}[1]
    \Function{NL-means-threshold}{$u$, $\sigma$, $\omega_0$, $c$, $a$}
    \For{$\veclet{x} \in \Z^2, \ \veclet{x} + \omega_0 \subset \Omega$}
    \Let{$\omega$}{$\veclet{x} + \omega_0$}
    \Let{$T$}{defined by \eqref{eq:patch_search_window}}
    \Let{$N_{\omega}(u)$}{0}
    \Let{$\hat{p}(u, \omega)$}{0}
    \For{$\veclet{t} \in T$}
    \If{$\autosim(u,\veclet{t},\omega) \leq \sigma^2 a(\veclet{t})$} \Comment{always true for $\veclet{t} =0$}
    \Let{$\hat{p}(u, \omega)$}{$\frac{N_{\omega}(u)}{N_{\omega}(u) + 1} \hat{p}(u,\omega) + \frac{1}{N_{\omega}(u)+1} P_{\veclet{t} + \omega}(u)$ \Comment{$P_{\omega}(u)$ is given in Definition \ref{def:autosim}} 
    }
    \Let{$N_{\omega}(u)$}{$N_{\omega}(u) +1$}
    \EndIf
    \EndFor
    \EndFor
    \Let{$\hat{u}$}{defined by \eqref{eq:mean_denoising}}
      \State \Return{$\hat{p}(u, \cdot)$, $\hat{u}$}
    \EndFunction
  \end{algorithmic}
\end{algorithm}

Note here that the output denoised version of the patch $\hat{p}(u, \omega)$ verifies the following equation
\begin{equation*}
  \hat{p}(u, \omega) = \sum_{\veclet{t} \in T} \lambda_{\veclet{t}} P_{\veclet{t} + \omega}(u) \eqsp , \qquad \lambda_{\veclet{t}} = \frac{\ind{\autosim(u,\veclet{t},\omega) \leq a(\veclet{t})}}{\sum_{\veclet{s} \in T} \ind{\autosim(u,\veclet{s},\omega) \leq a(\veclet{s})}}  \eqsp .
\end{equation*}
In the original NL-means method, we have 
\begin{equation}\lambda_{\veclet{t}} = \frac{\exp\left( -\frac{\autosim(u, \veclet{t}, \omega)}{h^2}\right)}{\sum_{\veclet{t} \in T} \exp\left( -\frac{\autosim(u, \veclet{t}, \omega)}{h^2}\right)} \eqsp . \label{eq:original_nlmeans}\end{equation} Setting $h$ is not trivial and depends on many parameters (patch size, search window size, content of the original image). 
As in Algorithm \ref{alg:nlmeans_thres}, we denote $N_{\omega}(u) = \sum_{\veclet{t} \in T} \ind{\autosim(u, \veclet{t}, \omega) \leq a(\veclet{t}) } $. The following proposition, similar to Proposition \ref{prop:a_contrario_bound}, gives a method for setting $a$. We say that an offset $\veclet{t}$ is a false alarm in a Gaussian white noise if the associated patch is not used in the denoising algorithm. In Proposition \ref{prop:a_contrario_bound_nlmeans} we choose $a$ in order to control the number of false alarms with high probability.

\begin{prop}
  \label{prop:a_contrario_bound_nlmeans}
  Let $\operatorname{NFA}_{\text{max}} \in [0,|T|]$, $T$ given in \eqref{eq:patch_search_window} and let $a \in \R^{\Omega}$ be defined for any $\veclet{t} \in \Omega$ by \[a(\veclet{t}) = \autoprob^{-1}\left(\veclet{t}, \omega, 1 - \operatorname{NFA}_{\text{max}} / |T|\right) \eqsp ,\]with background model being a Gaussian white noise $W$, \ie \ $f= \delta_0$ in  Definition \ref{def:microtexture}. Let $T$ be defined in \eqref{eq:patch_search_window} and $N_{\omega}(W) \in \lbrace 0, \dots, T \rbrace$ the random number of selected patches used to denoise the patch $P_{\omega}(W)$, see Algorithm \ref{alg:nlmeans_thres}. Then for any $n \in \N \without{0}$ it holds that
  \[ \prob[0]{|T| - N_{\omega}(W) \geq  n} \leq \frac{\NFAmaxmath}{n}  \eqsp .\]
\end{prop}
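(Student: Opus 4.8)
The plan is to follow almost verbatim the argument of Proposition~\ref{prop:a_contrario_bound}, with ``detected offset'' replaced by ``discarded offset'', i.e.\ an offset whose patch is \emph{not} used in Algorithm~\ref{alg:nlmeans_thres} (a false alarm in the sense introduced just before the statement). First I would rewrite the number of discarded offsets as a sum of indicators. Since, by construction, $N_{\omega}(W) = \sum_{\veclet{t} \in T} \ind{\autosim(W,\veclet{t},\omega) \le a(\veclet{t})}$ and $|T| = \sum_{\veclet{t}\in T} 1$, we obtain
\[
|T| - N_{\omega}(W) \;=\; \sum_{\veclet{t}\in T} \ind{\autosim(W,\veclet{t},\omega) > a(\veclet{t})} \;\ge\; 0 \eqsp ,
\]
a nonnegative integer-valued random variable, hence amenable to Markov's inequality.

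Second, I would compute its expectation offset by offset. For $\veclet{t}\in T$, the definition \eqref{eq:def_autoprob} of $\autoprob$ gives
\[
\expec{\ind{\autosim(W,\veclet{t},\omega) > a(\veclet{t})}} \;=\; 1 - \autoprob(\veclet{t},\omega,a(\veclet{t})) \eqsp .
\]
Substituting $a(\veclet{t}) = \autoprob^{-1}(\veclet{t},\omega, 1 - \NFAmaxmath/|T|)$ and invoking the invertibility identity \eqref{eq:invertibility} (licit since $\NFAmaxmath \in [0,|T|]$ makes $1 - \NFAmaxmath/|T| \in [0,1]$ and the cumulative distribution function of $\autosim(W,\veclet{t},\omega)$ is invertible for $\veclet{t}\neq 0$), this equals $1 - (1 - \NFAmaxmath/|T|) = \NFAmaxmath/|T|$. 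The only offset requiring separate attention is $\veclet{t}=0$: there $\autosim(W,\veclet{t},\omega)=0$ almost surely and $a(\veclet{t})=0$, so the corresponding indicator vanishes identically and contributes less than $\NFAmaxmath/|T|$. Hence $\expec{\ind{\autosim(W,\veclet{t},\omega) > a(\veclet{t})}} \le \NFAmaxmath/|T|$ for every $\veclet{t}\in T$, and summing over the $|T|$ offsets yields $\expec{|T| - N_{\omega}(W)} \le \NFAmaxmath$.

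Finally, I would apply Markov's inequality to the nonnegative random variable $|T| - N_{\omega}(W)$: for every $n\in\N\without{0}$,
\[
\prob[0]{|T| - N_{\omega}(W) \ge n} \;\le\; \frac{\expec{|T| - N_{\omega}(W)}}{n} \;\le\; \frac{\NFAmaxmath}{n} \eqsp ,
\]
which is the claim. I do not expect a genuine obstacle here: the only point needing care is the control of the per-offset false-alarm probability $\prob[0]{\autosim(W,\veclet{t},\omega) > a(\veclet{t})} = \NFAmaxmath/|T|$, which rests on the standing invertibility assumption \eqref{eq:invertibility}, on the hypothesis $\NFAmaxmath \in [0,|T|]$ guaranteeing that the requested quantile is well defined, and on noting that the degeneracy at $\veclet{t}=0$ is harmless.
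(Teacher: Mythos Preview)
Your proof is correct and follows the same approach as the paper: write $|T|-N_{\omega}(W)$ as a sum of indicator variables, bound its expectation termwise using the definition of $a(\veclet{t})$ and the invertibility property~\eqref{eq:invertibility}, then apply Markov's inequality. The paper's version is terser (it does not isolate the degenerate offset $\veclet{t}=0$ nor comment on the quantile being well defined), but the argument is identical.
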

\begin{proof}
Using the Markov inequality, we have \[
    \prob[0]{|T| - N_{\omega}(W) \geq n}
                                  \leq \frac{|T| - \sum_{\veclet{t} \in T} \expec{\ind{\autosim(W,\veclet{t}, \omega) \leq a(\veclet{t})}}}{n}
                                    \leq \frac{\NFAmaxmath}{n} \eqsp .
 \]
                                  \end{proof}

  In this case the null hypothesis $\micro$ is given by a standard Gaussian random field, which is a special case of the Gaussian random field models introduced in Section \ref{sec:gauss-model-cons}.
 In the next proposition, using the \acontrario \ framework, we obtain probabilistic guarantees on the distance between the reconstructed patch $\hat{p}(u, \omega)$ and the true patch $P_{\omega}(u_0)$. 
  \begin{prop}
    \label{prop:reconstruction}
    Let $U = u_0 + \sigma W$, where $W$ is a standard Gaussian white noise over $\Omega$, $u_0 \in \R^{\Omega}$ and $\sigma >0$.
    Let $\veclet{x} \in \Omega$ and $\omega = \veclet{x} + \omega_0$ be a fixed patch and let $\NFAmaxmath \in [0,|T|]$.
    We introduce the random set $\hat{T} = \lbrace \veclet{t} \in T, \ \autosim(U,\veclet{t}, \omega) \leq \sigma^2 a(\veclet{t}) \rbrace$ (the selected offsets) with  $a(\veclet{t}) = \autoprob^{-1}\left(\veclet{t}, \omega, 1 - \NFAmaxmath/|T|\right)$ as in Proposition \ref{prop:a_contrario_bound_nlmeans} and $T$ defined in \eqref{eq:patch_search_window}. Let $a_T = \max_{\veclet{t} \in T} a(\veclet{t})$.
     Then for any $a_W>0$, setting $\vareps_{W} = 1- \prob{\| P_{\omega}(W) \|_2^2 \leq a_W \ | \ \hat{T}}$, we have
    \begin{equation}
      \label{eq:upp_bound_nlmeans}
      \prob{\| \hat{p}(U, \omega) - P_{\omega}(u_0) \|_2 \leq \sigma (a_T^{1/2} + a_W^{1/2}) \ | \ \hat{T}} \geq 1 - \vareps_W\eqsp .
    \end{equation}
  \end{prop}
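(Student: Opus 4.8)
The plan is to exploit that $\hat p(U,\omega)$ is a \emph{convex combination} of the selected noisy patches, and to decompose each selected patch relative to the noisy reference patch $P_\omega(U)$ rather than relative to the unknown clean patch. First I would record, from the online-mean update in Algorithm~\ref{alg:nlmeans_thres} (equivalently from the expression for $\lambda_{\veclet{t}}$), that $\hat p(U,\omega) = |\hat T|^{-1}\sum_{\veclet{t}\in\hat T}P_{\veclet{t}+\omega}(U)$, where $\hat T$ is nonempty since $\veclet{0}\in\hat T$ always (the comment in the algorithm). Writing $U=u_0+\sigma W$ and using linearity of patch extraction, $P_\omega(U)-P_\omega(u_0)=\sigma P_\omega(W)$, so
\[
  \hat p(U,\omega)-P_\omega(u_0)=\frac{1}{|\hat T|}\sum_{\veclet{t}\in\hat T}\bigl(P_{\veclet{t}+\omega}(U)-P_\omega(U)\bigr)+\sigma P_\omega(W)\eqsp.
\]

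Second, for each $\veclet{t}\in\hat T$ the membership condition $\autosim(U,\veclet{t},\omega)\le\sigma^2 a(\veclet{t})$ is exactly $\norm{P_{\veclet{t}+\omega}(U)-P_\omega(U)}_2\le\sigma\,a(\veclet{t})^{1/2}\le\sigma\,a_T^{1/2}$. Hence, by the triangle inequality applied to the displayed identity,
\[
  \norm{\hat p(U,\omega)-P_\omega(u_0)}_2\le\sigma\,a_T^{1/2}+\sigma\,\norm{P_\omega(W)}_2\eqsp,
\]
and this bound holds surely. Consequently the event $\{\norm{P_\omega(W)}_2^2\le a_W\}$ is contained in the event $\{\norm{\hat p(U,\omega)-P_\omega(u_0)}_2\le\sigma(a_T^{1/2}+a_W^{1/2})\}$. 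Third, this inclusion is preserved under conditioning on $\hat T$, so
\[
  \prob{\norm{\hat p(U,\omega)-P_\omega(u_0)}_2\le\sigma(a_T^{1/2}+a_W^{1/2})\mid\hat T}\ge\prob{\norm{P_\omega(W)}_2^2\le a_W\mid\hat T}=1-\vareps_W\eqsp,
\]
which is \eqref{eq:upp_bound_nlmeans}.

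There is no serious analytic obstacle here; the only genuinely substantive step — and the one I would emphasize — is the choice of centering. Trying to control $P_{\veclet{t}+\omega}(U)-P_\omega(u_0)$ directly is hopeless, because it contains the term $P_{\veclet{t}+\omega}(u_0)-P_\omega(u_0)$, which concerns the unknown clean image and over which the \acontrario\ selection rule gives no control whatsoever. Splitting at $P_\omega(U)$ instead turns the "selected" condition into a genuine bound on $P_{\veclet{t}+\omega}(U)-P_\omega(U)$, and isolates a single residual term $\sigma P_\omega(W)$ that is pure noise; its $\ell^2$ norm is precisely what the auxiliary level $a_W$ is introduced to bound. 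One subtlety worth noting is that, under the conditioning on $\hat T$, the law of $\norm{P_\omega(W)}_2^2$ departs from the unconditional $\chi^2(|\omega_0|)$ law (since $\hat T$ depends on $W$ through $\autosim(U,\veclet{s},\omega)$ for all $\veclet{s}\in T$), which is exactly why $\vareps_W$ is stated as a conditional probability rather than quantified explicitly.
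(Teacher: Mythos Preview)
Your proof is correct and follows essentially the same approach as the paper: both hinge on centering at the noisy reference patch $P_\omega(U)$ so that the selection condition bounds $\|P_{\veclet{t}+\omega}(U)-P_\omega(U)\|_2$ by $\sigma a_T^{1/2}$, leaving only the residual $\sigma\|P_\omega(W)\|_2$ to be controlled via $a_W$. The only cosmetic difference is that you apply the triangle inequality directly to the convex-combination decomposition of $\hat p(U,\omega)-P_\omega(u_0)$, whereas the paper first bounds each individual $\|P_{\veclet{t}+\omega}(U)-P_\omega(u_0)\|_2$ and then passes to the average via an intersection of events; your route is slightly more direct but the argument is the same.
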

  \begin{proof}
    We have for any $\veclet{t} \in \hat{T}$
    \[\al{\| P_{\veclet{t} + \omega}(U) - P_{\omega} (u_0) \|_2 &\leq \| P_{\veclet{t} + \omega}(U) - P_{\omega}(U) + P_{\omega}(U) - P_{\omega}(u_0) \|_2 \\
        &\leq \| P_{\veclet{t} + \omega}(U) - P_{\omega}(U) \|_2 + \| P_{\omega}(U) - P_{\omega}(u_0) \|_2 \\
        &\leq \sigma a_T^{1/2} +\sigma  \| P_{\omega}(W) \|_2 \eqsp .} \]
    This gives the following event inclusion for any $\veclet{t} \in \hat{T}$,
    \begin{equation*}
       \left\lbrace \| P_{\omega}(W) \|_2 \leq a_W^{1/2}  \right\rbrace \subset \left\lbrace \| P_{\veclet{t} + \omega}(U) - P_{\omega} (u_0) \|_2 \leq \sigma ( a_T^{1/2} + a_W^{1/2} )  \right\rbrace \eqsp ,
   \end{equation*}
    We also have that by definition of $\vareps_W$
    \begin{align*}
      &\prob{\| \hat{p}(U, \omega) - P_{\omega}(u_0) \|_2 \leq \sigma (a_T^{1/2}+a_W^{1/2})  \ | \  \hat{T}}  \\ &\phantom{aaaaaaaaaaaaaaa}
                                                                                                              \geq \prob{\bigcap_{\veclet{t} \in \hat{T}}  \lbrace \| P_{\veclet{t} + \omega}(U) - P_{\omega}(u_0) \|_2^2  \leq \sigma^2 (a_T^{1/2}+a_W^{1/2})^2 \rbrace \ | \   \hat{T} } \\ &\phantom{aaaaaaaaaaaaaaa}
      \geq \prob{ \| P_{\omega}(W) \|_2^2 \leq a_W \ | \  \hat{T} } \geq 1 - \vareps_W \eqsp .
    \end{align*}
  \end{proof}

    In our applications we use Algorithm \ref{alg:nlmeans_thres} with $a(\veclet{t}) = \autoprob^{-1}\left(\veclet{t}, \omega, 1 - \NFAmaxmath/|T|\right)$. Therefore we need to compute $a(\veclet{t}) = \autoprob^{-1}\left(\veclet{t}, \omega, 1 - \NFAmaxmath/|T|\right)$ with a Gaussian white noise background model. We recall that in Section \ref{sec:detection-algorithm}, using Proposition \ref{prop:squared_exact}, we give a method to compute this quantity in general Gaussian settings. 
In the case of a Gaussian white noise, the next proposition shows that the eigenvalues can be computed without approximation.

    \begin{prop}
      \label{prop:eigenvalues}
      Let $\veclet{t} = (t_x, t_y) \in \Z^2 \without{0}$, $C_{\veclet{t}}$ as in Proposition \ref{prop:squared_exact} with $f = \delta_0$ and $\omega = \llbracket 0, p-1 \rrbracket^2$, with $p \in \N$. 
      We have, expressing $C_{\veclet{t}}$ in the basis corresponding to the raster scan order on the $x$-axis
      \begin{equation*}
        C_{\veclet{t}} = \mkmatrix{B_0 & B_1 & \dots & B_{p-1} \\
          B_1^{\top} & B_0 & \ddots & \vdots \\
          \vdots & \ddots & B_0 & B_1 \\
          B_{p-1}^{\top} & \dots & B_1^{\top} & B_0} + 2 \mathrm{Id} \eqsp ,  \quad \begin{cases} B_{\ell} = D_{|t_y|} \in \M_{p}(\R) & \text{if } \ell = |t_x| \\
        B_{\ell} = 0 & \text{otherwise} \end{cases}
    \end{equation*}
    where $D_j$ is a zero matrix with ones on the $j$-th diagonal
    . The eigenvalues of $C_{\veclet{t}}$ are given by $\lambda_{m,k} = 4 \sin^2\left( \frac{k\pi}{2m} \right)$ with multiplicity $r_{m,k}$ where $ m \in \llbracket 2, q + 1 \rrbracket$, $k \in \llbracket 1, m - 1 \rrbracket$ and $q = \lceil \frac{p}{|t_x| \vee |t_y|} \rceil $. For any $m \in \llbracket 2, q+1 \rrbracket$, $k \in \llbracket 1, m-1 \rrbracket$ it holds 
    \begin{enumerate}[label=(\alph*)]
        \item for any $k' \in \llbracket 1, m-1 \rrbracket$, $r_{m,k} = r_{m,k'} \eqsp ;$
        \item $r_{m,k} = 2 |t_x| |t_y| \ \text{if} \ 2\leq m < q \eqsp ;$
        \item $r_{m,k} = r_x r_y \ \text{if} \ m = q +1  \eqsp ;$
        \item $ \sum_{m=2}^{q+1} \sum_{k=1}^{m-1} r_{m,k} = p^2 \eqsp ,$
      \end{enumerate}
      with $r_x = \left( \lceil \frac{p}{|t_x|} \rceil  -q \right)|t_x| + |t_x| - p_x$, where $p_x = |t_x|\lceil \frac{p}{|t_x|} \rceil -p$. We define $r_y$ in the same manner. A similar proposition holds if $t_y \neq 0$.
    \end{prop}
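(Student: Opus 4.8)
The plan is to compute $C_{\veclet t}$ explicitly for $f = \delta_0$, recognise it as $2\Id$ minus the adjacency matrix of a disjoint union of path graphs, read off its spectrum from the (classical) spectra of paths, and finish by counting the paths of each length. For $f = \delta_0$ we have $\Gamma_f = \delta_0$, so \eqref{eq:delta_fun} gives, for $\veclet t \neq 0$, that $\veclet x \mapsto \Delta_f(\veclet t, \veclet x)$ equals $2$ at $\veclet x = 0$, equals $-1$ at $\veclet x = \pm \veclet t$, and vanishes elsewhere. Hence $C_{\veclet t} = 2\Id - A$, where $A$ is the symmetric $\{0,1\}$ matrix indexed by $\omega$ with $A(\veclet x_1, \veclet x_2) = 1$ iff $\veclet x_1 - \veclet x_2 \in \{\veclet t, -\veclet t\}$. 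Listing the indices of $\omega$ in raster scan order on the first coordinate puts $C_{\veclet t}$ into a block form of the stated shape, the nonzero off-diagonal $p \times p$ blocks sitting at block-distance $|t_x|$ from the diagonal and being equal to $D_{|t_y|}$ or its transpose depending on the signs of $t_x, t_y$; this sign bookkeeping is immaterial for the spectrum. Using the reflections $x \mapsto p-1-x$, $y \mapsto p-1-y$ and the coordinate swap (all automorphisms of $G$ preserving $\omega$), I may assume $t_x \geq t_y \geq 0$ and $t_x \geq 1$, so that $q = \lceil p/t_x \rceil$; the degenerate cases where a component of $\veclet t$ vanishes are strictly simpler, so I describe the generic case $t_x \geq t_y \geq 1$.

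Next I would exploit the structure of $A$: it is the adjacency matrix of the graph $G$ on vertex set $\omega$ in which $\veclet x$ is joined to $\veclet x \pm \veclet t$ whenever the latter belongs to $\omega$. Since $\veclet x \mapsto \veclet x + \veclet t$ is a partial injection of the finite set $\omega$ admitting no cycle ($\veclet x + n\veclet t \neq \veclet x$ for $n \geq 1$), $G$ is a disjoint union of path graphs, and because the vertices of a path form a sub-interval of an arithmetic progression inside the box $\omega$, a path meeting $\veclet x$ and $\veclet x + (\ell-1)\veclet t$ contains all intermediate points. Permuting the basis so as to list the vertices path by path makes $C_{\veclet t}$ block diagonal with one block $2\Id_n - A(P_n)$ per path $P_n$ with $n$ vertices. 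Recalling that $A(P_n)$ has eigenvalues $2\cos\frac{k\pi}{n+1}$, $k = 1,\dots,n$ (eigenvector $j \mapsto \sin\frac{jk\pi}{n+1}$), that block has eigenvalues $2 - 2\cos\frac{k\pi}{n+1} = 4\sin^2\frac{k\pi}{2(n+1)}$, i.e.\ exactly $\lambda_{m,k}$ for $m = n+1$ and $k = 1,\dots,m-1$. Writing $N_m$ for the number of paths with $m-1$ vertices, each such path thus contributes one copy of each of $\lambda_{m,1},\dots,\lambda_{m,m-1}$, so $r_{m,k} = N_m$ for every admissible $k$ — this is (a) — and $\sum_m \sum_k r_{m,k} = \sum_m (m-1)N_m = |\omega| = p^2$, which is (d). It remains to compute $N_m$ and to check that no path has more than $q$ vertices.

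Finally I would count the paths via $R_\ell := |\bigcap_{i=0}^{\ell-1}(\omega - i\veclet t)|$, the number of forward segments $\veclet x, \veclet x + \veclet t, \dots, \veclet x + (\ell-1)\veclet t$ contained in $\omega$. A one-line computation with $\omega = \llbracket 0, p-1\rrbracket^2$ gives $R_\ell = \max(0, p - (\ell-1)t_x)\max(0, p - (\ell-1)t_y)$, which equals the quadratic $(p - (\ell-1)t_x)(p - (\ell-1)t_y)$ for $1 \leq \ell \leq q$ and vanishes for $\ell > q$; in particular $q$ is the largest number of vertices of a path. A maximal path has $\geq \ell$ vertices iff its starting vertex $\veclet x$ satisfies $\veclet x - \veclet t \notin \omega$ and $\veclet x + (\ell-1)\veclet t \in \omega$, and there are exactly $R_\ell - R_{\ell+1}$ such vertices (remove, after re-indexing, those for which $\veclet x - \veclet t$ also lies in $\omega$). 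Therefore
\[ N_\ell = (R_\ell - R_{\ell+1}) - (R_{\ell+1} - R_{\ell+2}) = R_\ell - 2R_{\ell+1} + R_{\ell+2}, \]
the discrete second difference of $R$. For $1 \leq \ell \leq q-2$ all three terms are in the quadratic regime, so $N_\ell$ equals twice the leading coefficient, namely $N_\ell = 2t_xt_y$; with $\ell = m-1$ this gives (b). For $\ell = q$ one has $R_{q+1} = R_{q+2} = 0$, hence $N_q = R_q = (p-(q-1)t_x)(p-(q-1)t_y)$, and since $r_x = (\lceil p/t_x\rceil - q)t_x + t_x - p_x$ telescopes to $p - (q-1)t_x$ — and likewise $r_y = p - (q-1)t_y$ — this is exactly $N_q = r_x r_y$, i.e.\ (c).

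The main obstacle is the first structural step: realising that with $f = \delta_0$ the matrix $C_{\veclet t}$ is $2\Id$ minus the adjacency matrix of a disjoint union of path graphs, so that its whole spectrum is dictated by the multiset of path lengths. After that the one-dimensional spectrum is classical and the path counts collapse to a second difference of an explicit quadratic; the only points requiring mild care are the boundary regime (longest paths, where one must verify that the $r_x r_y$ corner starting vertices all genuinely lie in $\omega$, which amounts to $r_x \leq t_x$) and the reductions handling negative or vanishing components of $\veclet t$.
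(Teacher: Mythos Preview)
Your proposal is correct and follows the same structural idea as the paper: both arguments observe that $C_{\veclet t}=2\Id-A$ where $A$ is the adjacency matrix of the graph on $\omega$ with edges $\veclet x\sim\veclet x\pm\veclet t$, decompose this graph into maximal arithmetic progressions (paths) with step $\veclet t$, and read off the eigenvalues $4\sin^2(k\pi/2m)$ from each path with $m-1$ vertices.

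The differences are in execution. The paper constructs the sine eigenvectors $X_{\veclet x_0,k}$ explicitly and checks linear independence via the discrete sine transform; you simply quote the classical spectrum of $A(P_n)$, which is shorter but carries the same content. For the multiplicities, the paper parameterises paths by their ``pre-starting'' point $\veclet x_0\in\Omega_0=(\Omega-\veclet t)\cap\Omega^c$, splits $\Omega_0$ into the three boundary strips $\Omega_x,\Omega_y,\Omega_{x,y}$, and counts $J^{-1}(m)$ in each strip case by case. Your counting via $R_\ell=\max(0,p-(\ell-1)t_x)\max(0,p-(\ell-1)t_y)$ and the second difference $N_\ell=R_\ell-2R_{\ell+1}+R_{\ell+2}$ is more compact: (b) becomes ``second difference of a quadratic equals twice the leading coefficient'', and (c) drops out of $N_q=R_q$ once $R_{q+1}=0$. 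The two countings are of course equivalent (your $R_\ell-R_{\ell+1}$ is exactly the number of starting vertices $\veclet x$ with $\veclet x-\veclet t\notin\omega$ and $\veclet x+(\ell-1)\veclet t\in\omega$, which is what the paper computes strip by strip), but your version avoids the three-way case split and makes (a) and (d) immediate. The caveat you flag about $r_x\le t_x$ is in fact automatic, since $R_{q+1}=0$ already forces every length-$q$ segment to be maximal.
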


    \begin{proof}
      The proof is postponed to Appendix A.
    \end{proof}

    This property allows us to compute exactly the eigenvalues appearing in Proposition \ref{prop:squared_exact}. In Figure~\ref{fig:thres_wn} we illustrate that $a(\veclet{t})$ for fixed patch size ($8 \times 8$) and patch search window ($21 \times 21$). Thus in our implementation we suppose that $a(\veclet{t}) = \autoprob^{-1}\left(\veclet{t}, \omega, 1 - \NFAmaxmath / |T|\right)$ is constant and set its value to the mean of $a(\veclet{t})$ over $\veclet{t} \in T$.

    \begin{figure}
      \label{eigenvalues}
      \centering
  \subfloat[]{\includegraphics[width=0.4\linewidth]{./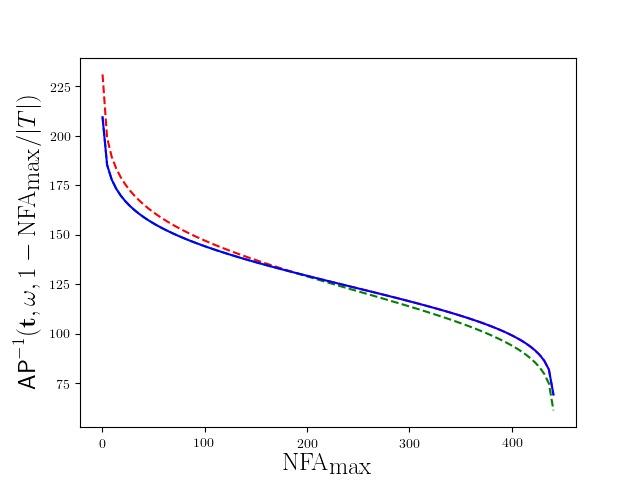}} \qquad
  \subfloat[]{\includegraphics[width=0.4\linewidth]{./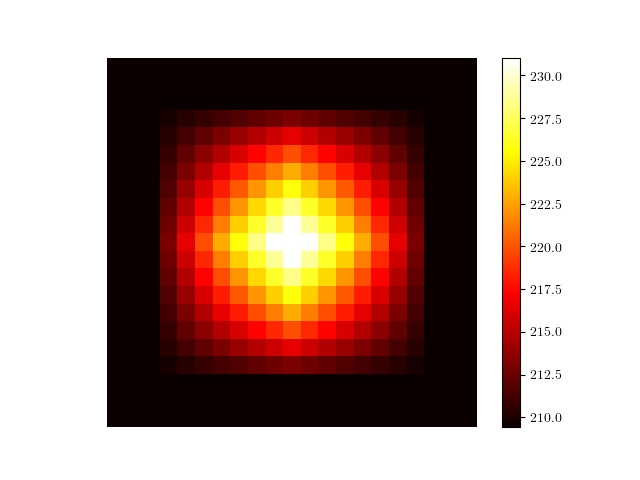}} \hfill
  \caption{\figuretitle{Thresholds dependency in $\NFAmaxmath$} In (a) we display $a(\veclet{t}) =\autoprob^{-1}\left(\veclet{t}, \omega, 1 - \NFAmaxmath / |T|\right)$ as a function of $\NFAmaxmath$. The patch size is fixed to $8 \times 8$ and the offsets $\veclet{t}$ satisfy $\| \veclet{t} \|_{\infty} \leq 10$, hence $|T| = 21^2=441$. The red dashed line is given by $\max_{\veclet{t} \in T}a(\veclet{t})$ and the green dashed line by $\min_{\veclet{t} \in T}a(\veclet{t})$. The blue line represents the value obtained considering the simplifying assumption that patch domains do not overlap, see Proposition \ref{prop:squared_exact} and the remark that follows
    . The maximal increase between the maximum of $a(\veclet{t})$ and the minimum of $a(\veclet{t})$ is of $13.0 \%$. In (b) we display the mapping $\veclet{t} \ \mapsto \ a(\veclet{t})$ for $\NFAmaxmath = 0.5$, the central pixel corresponds to $\veclet{t=0}$. Note that 
        $a(\veclet{t})$ decreases as $\|\veclet{t}\|$ increases and is constant when, $\|\veclet{t}\|_{\infty} \geq  8$.}
      \label{fig:thres_wn}
    \end{figure}

    \subsection{Some experimental results}
    \label{sec:expe-results}
    
    In the following paragraph we present and comment some results of our threshold NL-means algorithm, see Algorithm \ref{alg:nlmeans_thres}. We recall that we use $a(\veclet{t}) = \sum_{\veclet{t} \in T} \autoprob^{-1}\left(\veclet{t}, \omega, 1 - \NFAmaxmath / |T|\right) / |T|$. In Figure \ref{fig:denoised res} we present a first comparison with the NL-means algorithm. Perceptual results as well as Peak Signal to Noise Ratio (\PSNR ) measurements \footnote{$\operatorname{PNSR}(u,v) = 10 \log_{10} \left( \frac{\max_{\Omega} u^2}{\| u - v\|_2^2}\right) \eqsp .$} are commented. We also present the running time of the original NL-means algorithm and ours. The experiments were conducted with the following computer specifications: 16G RAM, 4 Intel Core i7-7500U CPU (2.70GHz). Results on other images than Barbara are displayed in Figure \ref{fig:nl_means_comp_vis}.
    \begin{figure}
  \centering
  \subfloat[]{\resizebox{.23\textwidth}{!}{\begin{tikzpicture}[spy using outlines={rectangle, yellow,magnification=3, connect spies}]
  \node {\pgfimage[interpolate=true,height=2.5cm]{./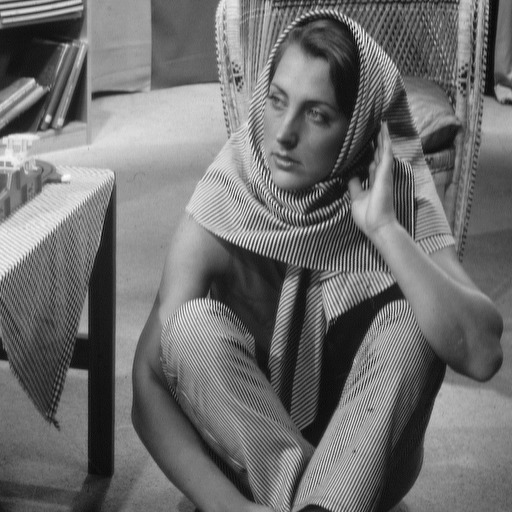}};
 \coordinate (spypoint) at (-0.8, 0.3);
 \coordinate (spyviewer) at (0.75,-0.75);
 \spy[width=1.5cm,height=1.5cm] on (spypoint) in node [fill=white] at (spyviewer);
  
\end{tikzpicture}
  \subfloat[]{\resizebox{.23\textwidth}{!}{\begin{tikzpicture}[spy using outlines={rectangle, yellow,magnification=3, connect spies}]
  \node {\pgfimage[interpolate=true,height=2.5cm]{./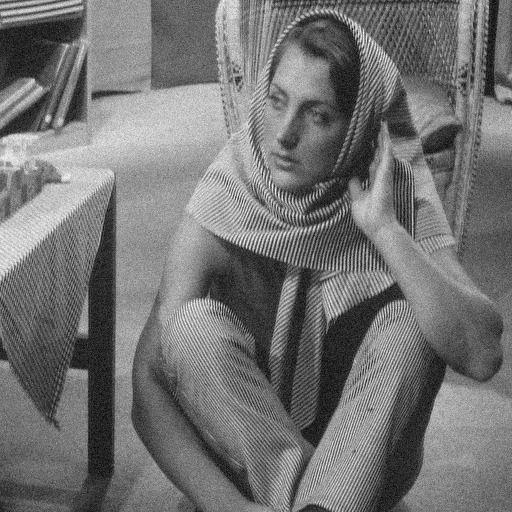}};
 \coordinate (spypoint) at (-0.8, 0.3);
 \coordinate (spyviewer) at (0.75,-0.75);
 \spy[width=1.5cm,height=1.5cm] on (spypoint) in node [fill=white] at (spyviewer);
  
\end{tikzpicture}
  \subfloat[\tiny $\operatorname{PSNR} = 29.81$, $\delta_t = 0.46s$]{\resizebox{.23\textwidth}{!}{\begin{tikzpicture}[spy using outlines={rectangle, yellow,magnification=3, connect spies}]
  \node {\pgfimage[interpolate=true,height=2.5cm]{./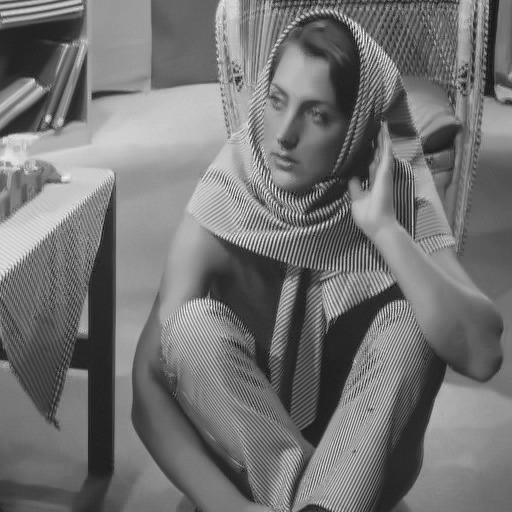}};
 \coordinate (spypoint) at (-0.8, 0.3);
 \coordinate (spyviewer) at (0.75,-0.75);
 \spy[width=1.5cm,height=1.5cm] on (spypoint) in node [fill=white] at (spyviewer);
  
\end{tikzpicture}
  \subfloat[\tiny $\operatorname{PSNR} = 29.29$, $\delta_t = 0.37s$]{\resizebox{.23\textwidth}{!}{\begin{tikzpicture}[spy using outlines={rectangle, yellow,magnification=3, connect spies}]
  \node {\pgfimage[interpolate=true,height=2.5cm]{./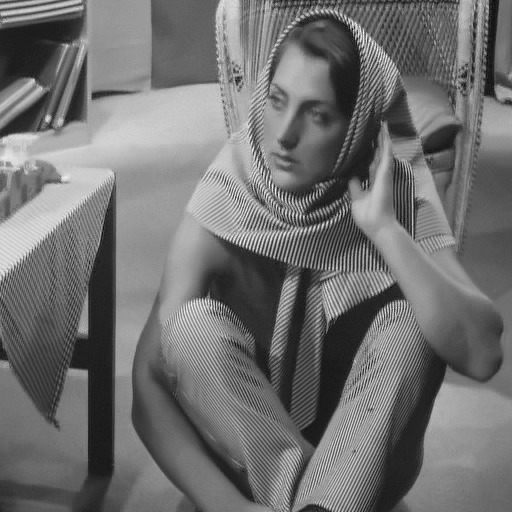}};
 \coordinate (spypoint) at (-0.8, 0.3);
 \coordinate (spyviewer) at (0.75,-0.75);
 \spy[width=1.5cm,height=1.5cm] on (spypoint) in node [fill=white] at (spyviewer);
  
\end{tikzpicture}
  \caption{\figuretitle{Visual results} In (a) we present an original image (Barbara) scaled between $0$ and $255$. In (b) we add Gaussian white noise with $\sigma = 10$. We recall that the patch domain is fixed to $\omega_0$ being a $8 \times 8$ square. In (c) we present the denoised results with NL-means threshold, Algorithm \ref{alg:nlmeans_thres}, where $\NFAmaxmath = 4.41$, which corresponds to 1\% of rejected patches in the search window of a Gaussian white noise. In (d) we present the results obtained with the traditional NL-means algorithm with $h = 0.13 \sigma |\omega|$ (optimal $h$ for this noise level and this image with regard to the \PSNR \ measure). The results are the same on the texture area for (c) and (d). The perceptual results on the zoomed region are satisfying, even though some regions are too smooth compared to the original image (a). 
    In (c) and (d), $\delta_t$ is the running time of the algorithm. We can observe that our algorithm is slightly slower than NL-means.}
  \label{fig:denoised res}
\end{figure}

\captionsetup[subfigure]{labelformat=empty}
\begin{figure}
  \centering
  \subfloat[]{\includegraphics[width=0.22\linewidth]{./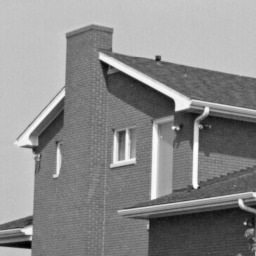}} \hfill
  \subfloat[]{\includegraphics[width=0.22\linewidth]{./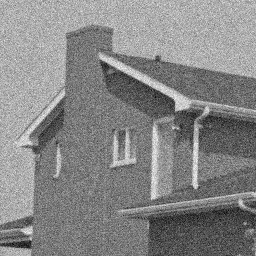}} \hfill
  \subfloat[\scriptsize $\operatorname{PSNR} =31.67$, $\delta_t = 0.21s$]{\includegraphics[width=0.22\linewidth]{./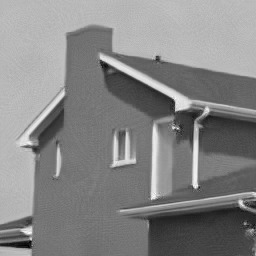}} \hfill
  \subfloat[\scriptsize $\operatorname{PSNR} =30.81$, $\delta_t = 0.07s$]{\includegraphics[width=0.22\linewidth]{./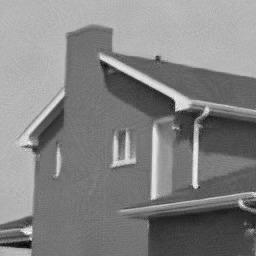}} \hfill \\
  \subfloat[]{\includegraphics[width=0.22\linewidth]{./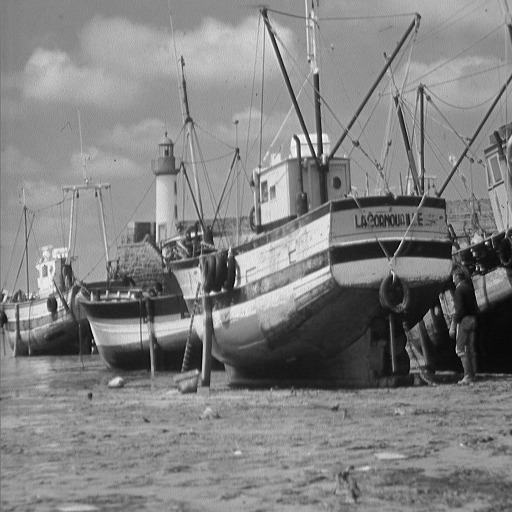}} \hfill
  \subfloat[]{\includegraphics[width=0.22\linewidth]{./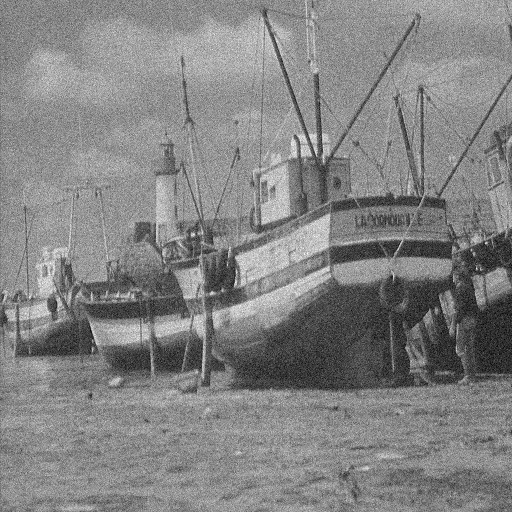}} \hfill
  \subfloat[\scriptsize $\operatorname{PSNR} =29.12$, $\delta_t = 0.46s$]{\includegraphics[width=0.22\linewidth]{./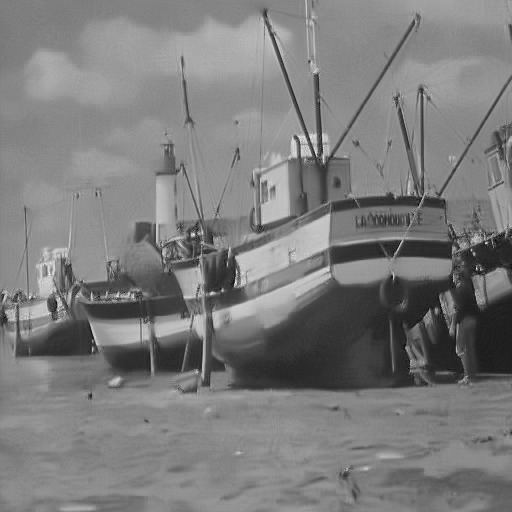}} \hfill
  \subfloat[\scriptsize $\operatorname{PSNR} =28.44$, $\delta_t = 0.39s$]{\includegraphics[width=0.22\linewidth]{./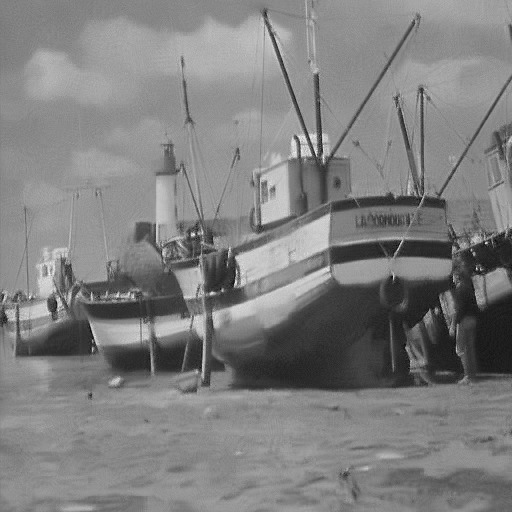}} \hfill \\
  \subfloat[]{\includegraphics[width=0.22\linewidth]{./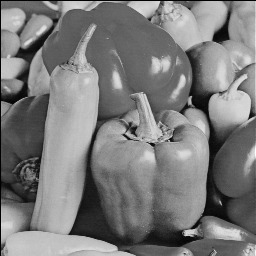}} \hfill
  \subfloat[]{\includegraphics[width=0.22\linewidth]{./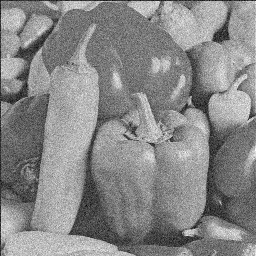}} \hfill
  \subfloat[\scriptsize $\operatorname{PSNR} =29.43$, $\delta_t = 0.22s$]{\includegraphics[width=0.22\linewidth]{./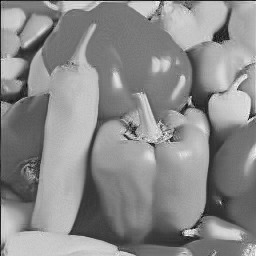}} \hfill
  \subfloat[\scriptsize $\operatorname{PSNR} =29.03$, $\delta_t = 0.07s$]{\includegraphics[width=0.22\linewidth]{./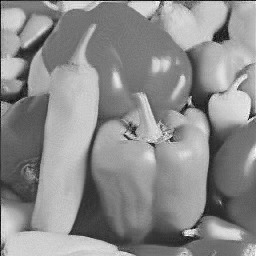}} \hfill \\
  \subfloat[]{\includegraphics[width=0.22\linewidth]{./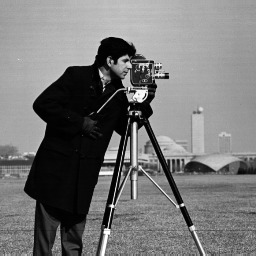}} \hfill
  \subfloat[]{\includegraphics[width=0.22\linewidth]{./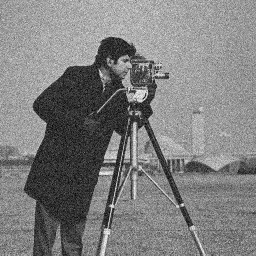}} \hfill
  \subfloat[\scriptsize $\operatorname{PSNR} =28.82$, $\delta_t = 0.22s$]{\includegraphics[width=0.22\linewidth]{./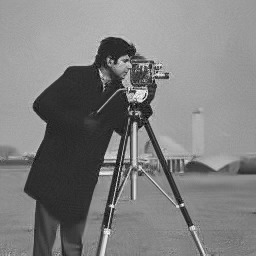}} \hfill
  \subfloat[\scriptsize $\operatorname{PSNR} =28.68$, $\delta_t = 0.09s$]{\includegraphics[width=0.22\linewidth]{./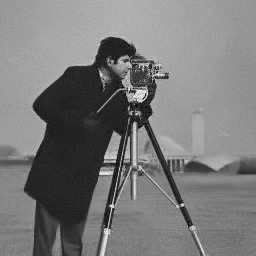}} \hfill \\  
  \caption{\figuretitle{NL-means comparison} In this figure we compare Algorithm \ref{alg:nlmeans_thres} with the traditional NL-means algorithm. Here $\omega_0$ is fixed to be a $8 \times 8$ square. The first column contains clean images, the second column represents the same images corrupted by a Gaussian noise with $\sigma = 20$. The third column is the output of Algorithm \ref{alg:nlmeans_thres} with $\NFAmaxmath$ fixed to $4.41$ and the last column is the output of the NL-means algorithm for the optimal value of $h$ (with regards to the PSNR), see \eqref{eq:original_nlmeans}. Perceptual results and \PSNR \ are comparable, even though our algorithm yields slightly better \PSNR \ values. We also present the running times $\delta_t$ of both algorithm. Our algorithm is slower than NL-means as it computes the threshold before running the NL-means algorithm.}
  \label{fig:nl_means_comp_vis}
\end{figure}
\captionsetup[subfigure]{labelformat=parens}

If the threshold $a(\veclet{t})$ is high, \ie \ $\NFAmaxmath \ll |T|$ then almost no patch is rejected, which means that almost all patches are used in the denoising process. In consequence, the output denoised image is very smooth. This smoothness is a correct guess for constant patches. However, this proposition does not hold when the region contains details. Indeed, in this case details are lost due to the averaging process. By setting a conservative threshold, \eg \ $\NFAmaxmath / |T| \approx 0.1$, for example, we reject all the patches for which the structure does not strongly match the one of the input patch, see Figure \ref{fig:number}. This conservative property of the algorithm ensures that we can control the loss of information in the denoised image, see Proposition \ref{prop:reconstruction}. However, if no patch, other than the input patch itself, is detected as similar we highly overfit the original noise. Many algorithms such as BM3D, see \cite{dabov2007image}, solve this problem by treating this case as an exception, applying a specific denoising method in this situation. 
We show the differences between our version of NL-means and BM3D in Figure \ref{fig:bm3D} .

\begin{figure}
  \centering
  \subfloat[$\NFAmaxmath / |T| = 0.2$]{\includegraphics[width=.28\linewidth]{./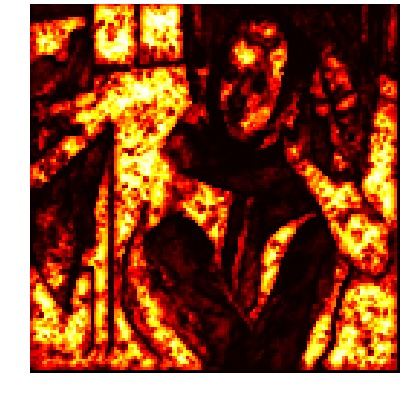}} \hfill  
  \subfloat[$\NFAmaxmath / |T| = 0.1$]{\includegraphics[width=.28\linewidth]{./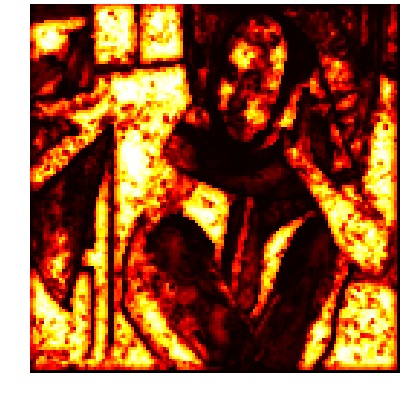}} \hfill
  \subfloat[$\NFAmaxmath / |T| = 0.01$]{\includegraphics[width=.33\linewidth]{./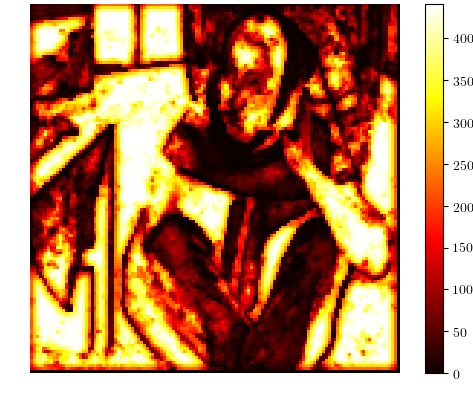}} \hfill
  \caption{\figuretitle{Number of detections} In this figure we present, for each denoised pixel, the number of detected offsets used to compute the denoised patch, \ie \ the cardinality of $\hat{T}$, see Proposition \ref{prop:reconstruction}. A white pixel means that the number of detected offsets is maximal and a black pixel means that the number of detected offsets is $1$, \ie \ the patch is not denoised. As $\NFAmaxmath$ decreases the number of detected offsets increases. Note that $|\hat{T}|$ is maximal, \ie \ equals to $21^2 = 441$, for constant regions. For $\NFAmaxmath / |T| = 0.1$, pixels located in textured neighborhoods use in average $20$ to $40$ patches to perform denoising.}
  \label{fig:number}
\end{figure}

\begin{figure}
  \centering
  \subfloat[original]{\includegraphics[width=0.24\linewidth]{./img/barbara.jpg}} \hfill
  \subfloat[BM3D]{\includegraphics[width=0.24\linewidth]{./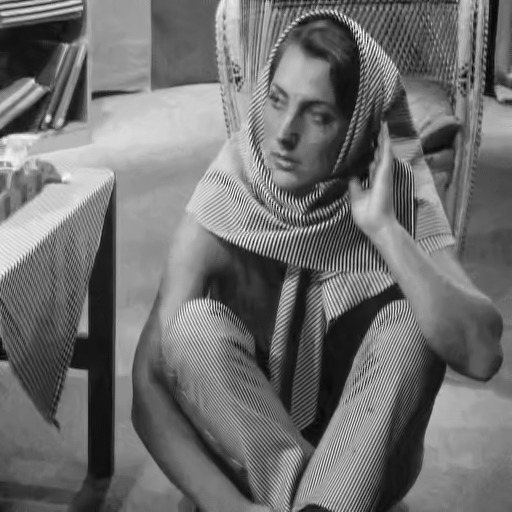}} \hfill
  \subfloat[$\NFAmaxmath / |T| = 0.01$]{\includegraphics[width=0.24\linewidth]{./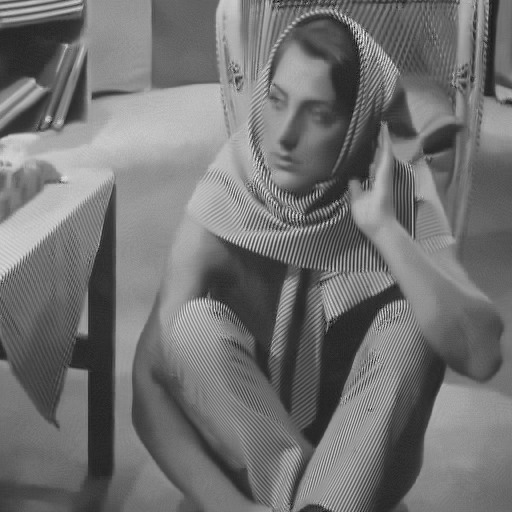}} \hfill
  \subfloat[$\NFAmaxmath / |T| = 0.1$]{\includegraphics[width=0.24\linewidth]{./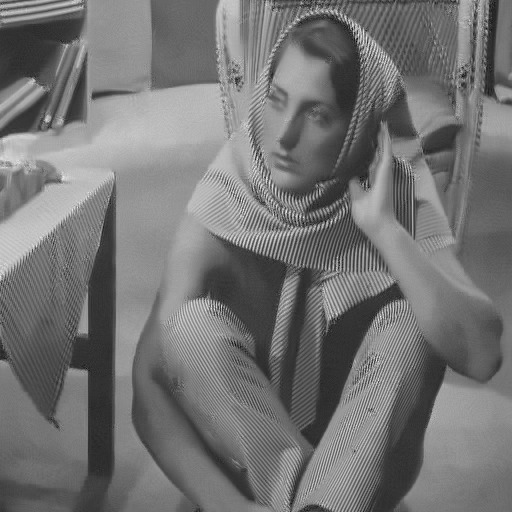}} \hfill
  \caption{\figuretitle{Comparison with BM3D} We compare Algorithm \ref{alg:nlmeans_thres} to BM3D \cite{dabov2007image}. The original image (Barbara) is presented in (a). We consider a noisy version of the input image with $\sigma =20$. In (b) we present the ouput of BM3D, with default parameters, see \cite{lebrun2012analysis}. The result in (c) corresponds to the output of Algorithm \ref{alg:nlmeans_thres} with $\NFAmaxmath / |T| = 0.01$. The output (c) is too blurry compared to (b). In order to correct this behavior we set $\NFAmaxmath / |T| = 0.1$ in (d), \ie \ increase the global threshold and some improvements are noticeable. 
    However the image remains blurry and artifacts due to the overfitting of the noise appear, this is known as the \textit{rare patch effect} in \cite{duval2011bias}. For instance, some patches in the scarf are not denoised anymore.}
  \label{fig:bm3D}
\end{figure}
In Figure \ref{fig:PSNR}, we show that Algorithm \ref{alg:nlmeans_thres} performs better than the original NL-means algorithm. By setting $\NFAmaxmath / |T| = 0.01$ we obtain that the \PSNR \ of the denoised image is better than the one of NL-means for nearly every value of $h$.
\begin{figure}
  \centering
  \subfloat[$\sigma = 10$]{\includegraphics[width=.333\linewidth]{./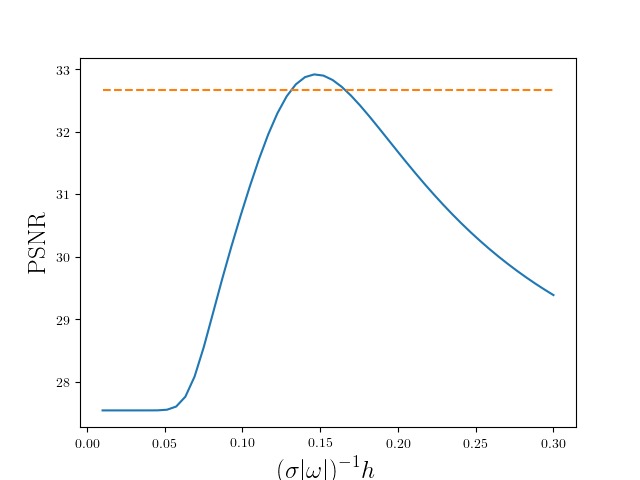}} \hfill  
  \subfloat[$\sigma = 20$]{\includegraphics[width=.333\linewidth]{./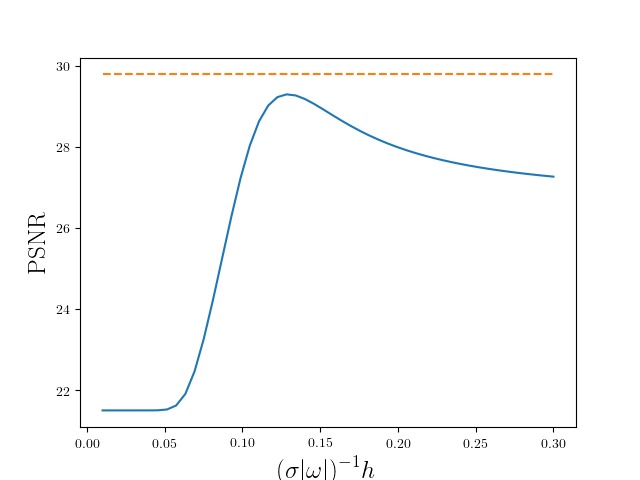}} \hfill
  \subfloat[$\sigma = 40$]{\includegraphics[width=.333\linewidth]{./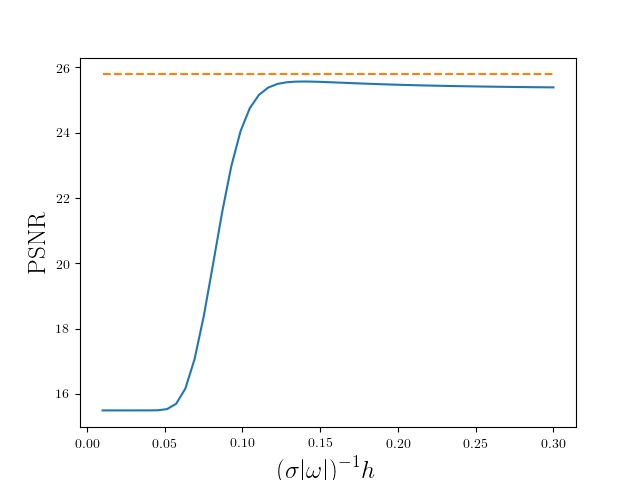}} \hfill
  \caption{\figuretitle{\PSNR \ study} In this figure we present the evolution of the \PSNR \ for different values of the parameter $h$ of the original NL-means method, see \eqref{eq:original_nlmeans}, in blue, computed on the Barbara image. The $x$-axis represents $\frac{h}{\sigma | \omega |}$. The orange dashed line is the \PSNR \ obtained for the threshold NL-means algorithm (Algorithm~\ref{alg:nlmeans_thres}) with $\NFAmaxmath / |T| = 0.01$. Except for low levels of noise the proposed method gives better \PSNR \ values than the original implementation of NL-means algorithm for every choice of $h$.}
  \label{fig:PSNR}  
\end{figure}

Let us emphasize that our goal is not to provide a new state-of-the-art denoising algorithm. Indeed we never obtain better denoising results than the BM3D algorithm. However, our algorithm slightly improves the original NL-means algorithm. It shows that statistical testing can be efficiently used to measure the similarity between patches and therefore provides a robust way to perform the weighted average in this algorithm.

\section{Periodicity analysis}
\label{sec:application to periodicity analysis}
\subsection{Existing algorithms}
\label{sec:existing_algorithms}
In the following sections we use our patch similarity detection algorithm, see Algorithm \ref{alg:auto-similaritydetection}, to analyze images exhibiting periodicity features. Let $\Omega \subset \Z^2$ be a finite domain and $\omega \subset \Omega$ a finite patch domain.

Periodicity detection is a long-standing problem in texture analysis \cite{zucker1980finding}. First algorithms used the quantization of images, relying on co-occurrence matrices and statistical tools like $\chi_2$ tests or $\kappa$ tests. Global methods extract peaks in the frequency domain (Fourier spectrum)~\cite{matsuyama1983structural} or in the spatial domain (autocorrelation). 
In \cite{haralick1973textural} the notion of inertia is introduced. It is defined for any $\veclet{t} \in \Omega$ by
$\mathcal{I}(\veclet{t}) = \sum_{(i,j) \in \llbracket 0,N_g \rrbracket^2}(i-j)^2 \left(\sum_{\veclet{z} \in \Omega}\mathbb{1}_{\dot{u}(\veclet{z}) = i}\mathbb{1}_{\dot{u}(\veclet{z+t}) = j}\right)$, 
where $u$ is a quantized image on $N_g+1$ gray levels. In \cite{conners1980toward}, the authors show that the local minima of the inertia measurement can be used to assess periodicity.
Similarly, we introduce the $\omega$-inertia for any $\veclet{t} \in \Omega$ by $\mathcal{I}_{\omega}(\veclet{t}) = \sum_{(i,j) \in \llbracket 0,N_g \rrbracket^2}(i-j)^2 \left(\sum_{\veclet{z} \in \omega}\mathbb{1}_{\dot{u}(\veclet{z}) = i}\mathbb{1}_{\dot{u}(\veclet{z+t}) = j}\right)$.
  The following proposition 
  extends to a local framework results from \cite{oh1999fast}.
\begin{prop}
  \label{prop:cooccurence}
  Let $u \in \R^{\Omega}$. Suppose that $u$ is quantized, \ie \ there exists $N_g \in \N$ such that for any $\veclet{x} \in \Omega$, $u(\veclet{x}) \in \llbracket 0,N_g \rrbracket$. We have $\mathcal{I}_{\omega}(\veclet{t}) = \autosim(u,\veclet{t},\omega)$.
\end{prop}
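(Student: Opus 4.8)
The plan is to expand both quantities and check that they coincide after a trivial exchange of finite sums. First I would rewrite the auto-similarity explicitly in terms of pixel values. By Definition \ref{def:autosim}, $P_{\omega}(u) = (\dot u(\veclet z))_{\veclet z \in \omega}$ and $P_{\veclet t + \omega}(u) = (\dot u(\veclet z + \veclet t))_{\veclet z \in \omega}$, so that
\[
\autosim(u, \veclet t, \omega) = \norm{P_{\veclet t + \omega}(u) - P_{\omega}(u)}_2^2 = \sum_{\veclet z \in \omega} \bigl(\dot u(\veclet z + \veclet t) - \dot u(\veclet z)\bigr)^2 \eqsp .
\]
Here it is worth noting that the quantization hypothesis passes to the periodic extension: $\dot u$, being a rearrangement of the values of $u$, still takes values in $\llbracket 0, N_g \rrbracket$, so $\dot u(\veclet z)$ and $\dot u(\veclet z + \veclet t)$ both lie in $\llbracket 0, N_g \rrbracket$ for every $\veclet z \in \omega$.

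The key observation is that, for a fixed $\veclet z \in \omega$, the family of products $\bigl(\mathbb{1}_{\dot u(\veclet z) = i}\,\mathbb{1}_{\dot u(\veclet z + \veclet t) = j}\bigr)_{(i,j) \in \llbracket 0, N_g \rrbracket^2}$ is a partition of unity: exactly one pair, namely $(i,j) = (\dot u(\veclet z), \dot u(\veclet z + \veclet t))$, makes it equal to $1$, and all other pairs give $0$. Consequently
\[
\sum_{(i,j) \in \llbracket 0, N_g \rrbracket^2} (i-j)^2\, \mathbb{1}_{\dot u(\veclet z) = i}\, \mathbb{1}_{\dot u(\veclet z + \veclet t) = j} = \bigl(\dot u(\veclet z) - \dot u(\veclet z + \veclet t)\bigr)^2 \eqsp .
\]

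Finally I would swap the two finite summations in the definition of $\mathcal{I}_\omega(\veclet t)$ and substitute the identity above:
\[
\mathcal{I}_\omega(\veclet t) = \sum_{\veclet z \in \omega}\ \sum_{(i,j) \in \llbracket 0, N_g \rrbracket^2} (i-j)^2\, \mathbb{1}_{\dot u(\veclet z) = i}\, \mathbb{1}_{\dot u(\veclet z + \veclet t) = j} = \sum_{\veclet z \in \omega} \bigl(\dot u(\veclet z) - \dot u(\veclet z + \veclet t)\bigr)^2 = \autosim(u, \veclet t, \omega) \eqsp ,
\]
which is the claim. There is essentially no obstacle here: all index sets are finite so the interchange of sums is immediate, and the only point to make explicit is why the quantization assumption is needed — it guarantees that the pair $(\dot u(\veclet z), \dot u(\veclet z + \veclet t))$ actually appears among the admissible labels $(i,j) \in \llbracket 0, N_g \rrbracket^2$, so that the inner sum recovers the squared difference exactly and loses nothing.
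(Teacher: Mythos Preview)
Your proof is correct and follows essentially the same approach as the paper: swap the two finite sums and use the fact that the indicator product $\mathbb{1}_{\dot u(\veclet z)=i}\,\mathbb{1}_{\dot u(\veclet z+\veclet t)=j}$ singles out the unique pair $(i,j)=(\dot u(\veclet z),\dot u(\veclet z+\veclet t))$, collapsing the inner sum to $(\dot u(\veclet z)-\dot u(\veclet z+\veclet t))^2$. Your write-up is more explicit about why quantization is needed, but the argument is the same.
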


\begin{proof}
  For any $\veclet{t} \in \Omega$ we have
  \[\al{\mathcal{I}_{\omega}(\veclet{t}) &= \summ{(i,j) \in \llbracket 0,N_g \rrbracket^2}{}{(i-j)^2\summ{\veclet{x} \in  \omega}{}{\mathbb{1}_{\dot{u}(\veclet{x}) = i}\mathbb{1}_{\dot{u}(\veclet{x+t}) = j}}} = \summ{\veclet{x} \in \omega, (i,j) \in \llbracket 0,N_g \rrbracket^2}{}{(i-j)^2 \mathbb{1}_{\dot{u}(\veclet{x}) = i}\mathbb{1}_{\dot{u}(\veclet{x+t}) = j}}\\  &= \summ{\veclet{x} \in \omega}{}{(\dot{u}(\veclet{x}) - \dot{u}(\veclet{x+t}))^2} = \autosim(u,\veclet{t},\omega).}\]
\end{proof}

If $\omega = \Omega$ then the $\omega$-inertia statistics is exactly the inertia introduced in \cite{haralick1973textural} and the result is due to \cite{oh1999fast}. 
\subsection{Algorithm and properties}
\label{sec:algorithm and properites}
Lattice detection is closely related to periodicity analysis, since identifying a lattice is similar to extracting periodic or pseudo-periodic structures up to deformations and approximations. A state-of-the-art algorithm proposed in \cite{park2009deformed} uses a recursive framework which consists in 1) a lattice model proposal based on detectors such as Kanade-Lucas-Tomasi (\KLT ) feature trackers \cite{lucas1981iterative},  2) spatial tracking using inference in a probabilistic graphical model, 3) spatial warping correcting the lattice deformations in the original image. In this section we propose a new algorithm for lattice detection. 
The lattice proposal step 1) is replaced by an Euclidean auto-similarity matching detection (see Section \ref{sec:detection-algorithm} and Algorithm~\ref{alg:auto-similaritydetection}) where the patch domain $\omega$ is fixed. Using these detections we build a graph with a few nodes (usually approximately $20$ nodes for a $256 \times 256$ image). We use the same notation for the detection mapping $\veclet{t} \mapsto \mathbb{1}_{\autosim_i(u,\veclet{t}, \omega) \leq a(\veclet{t})}$ \ie \ the $D_{map}$ output of Algorithm \ref{alg:auto-similaritydetection}, which is a binary function over the offsets, and the set of detected offsets. We recall that two pixel coordinates $\veclet{x}$ and $\veclet{y}$ are said to be $8$-connected if $\veclet{x} = \veclet{y} + (\delta_x, \delta_y)$ with $\delta_x, \delta_y \in \lbrace -1, 0, 1 \rbrace$. The graph $\mathscr{G} = (V,E)$ is then built as follows:

\begin{itemize}[label = $\blacktriangleright$]
\item \textbf{Vertices}: for each 8-connected component, $\mathscr{C}_k$ in $D_{map}$ we note $\veclet{v}_k$ one position for which the minimum of $\autosim(u,\veclet{t},\omega)$ over $\mathscr{C}_k$ is achieved. The set of vertices $V$ is defined as $V= \seq[1][N_{\mathscr{C}}]{\veclet{v}}{k}$ where $N_{\mathscr{C}}$ is the number of 8-connected components in $D_{map}$ ;
\item \textbf{Edges}: each vertex is linked with its four nearest neighbors in the sense of the Euclidean distance, defining four unoriented edges. 
\end{itemize}

Refering to the three steps of \cite{park2009deformed} we present our model to replace step 2) (\ie \ the inference in a probabilistic graphical model) and introduce the approximated lattice hypothesis defined on a graph. 
\begin{mydef}[Approximated lattice hypothesis]
  \label{def:approximated_lattice_hypothesis}
  Let $\mathscr{G} = (V,E)$ be a random graph with $V \subset \R^2$. We say that $\mathscr{G}$ follows the approximated lattice hypothesis if there exists a basis $B = (b_1,b_2)$ of $\R^2$ and, for each edge $\veclet{e} \in E$, a couple of integers $(m_{\veclet{e}},n_{\veclet{e}}) \in \Z^2$ such that $\veclet{e}$ has the same distribution as 
  $m_{\veclet{e}} b_1 + n_{\veclet{e}} b_2 + \sigma Z_{\veclet{e}}$,
  with $(Z_{\veclet{e}})_{\veclet{e} \in E}$ independent standard Gaussian random variables in $\R^2$ and $\sigma >0$. We denote by $M$ the vector of all coefficients $(m_{\veclet{e}},n_{\veclet{e}})_{\veclet{e} \in E}  \in\Z^{2\vertt{E}} $.
\end{mydef}
Our goal is to perform inference in the statistical model defined by the following log-posterior
\begin{equation}
  \mathscr{L}(B,M,\sigma^2 |E) = -2(\vertt{E} +1)\log(\sigma^2)  -\frac{1}{2\sigma^2}\underbrace{\left( \summ{\veclet{e} \in E}{}{\| m_{\veclet{e}}b_1 + n_{\veclet{e}}b_2 - \veclet{\veclet{e}}\|^2} + r(B,M) \right)}_{q(B,M | E)} \eqsp ,
  \label{eq:log-lik}
\end{equation}
where $r(B,M) = \delta_B \|B\|_2^2 + \delta_M \| M \|_2^2$ with $\delta_B, \delta_M \geq 0$.
A discussion on the dependence of the model on the hyperparameters $(\delta_B, \delta_M)$ is conducted in Figure \ref{fig:hyperparam}.
Finding the \MLE \ of this full log-posterior is a non-convex, integer problem. However performing the minimization alternatively on $B$ and $M$ is easier since at each step we only have a quadratic function to minimize. Minimizing a positive-definite quadratic function over $\Z^2$ is equivalent to finding the vector of minimum norm in a lattice. 
This last formulation is known as the Shortest Vector Problem (\SVP ) which is a challenging problem \cite{micciancio2001svp} (though it is not known if it is a \NP -hard problem). We replace this minimization procedure over a lattice by a minimization over $\R^2$ followed by a rounding of this relaxed solution. 
\begin{algorithm}
  \caption{Lattice detection -- Alternate minimization
    \label{alg:alternate}}
  \begin{algorithmic}[1]
    \Function{Alternate-minimization}{$E$, $\delta_B$, $\delta_M$, $N_{it}$}
    \Let{$M_0$}{0}
    \Let{$B_0$}{initialization procedure} \Comment{initialization discussed in the end of Section \ref{sec:algorithm and properites}}
    \For{$n \gets 0 \ \textrm{to} \ N_{it} - 1$}
    \Let{$\tilde{M}$}{$\underset{ M \in \R^{2\vertt{E}}}{\operatorname{argmin}} \ q(B_n, M | E)$} \Comment{expression given in Proposition \ref{prop:alternate_update}}
    \If{ $q\left(E | B_n, [\tilde{M}]\right) <q \left(E | B_n, M_n\right) $} \Comment{$[\cdot]$ is the nearest integer operator}
    \Let{$M_{n+1}$}{$[\tilde{M}]$}
    \Else
    \Let{$M_{n+1}$}{$M_n$}
    \EndIf
    \Let{$B_{n+1}$}{$\underset{B \in \mathbb{R}^4}{\operatorname{argmin}} \  q(B, M_{n+1}|E)$} \Comment{expression given in Proposition \ref{prop:alternate_update}}
    \EndFor
    \Let{$\sigma_{N_{it}}^2$}{$\underset{\sigma^2 \in \mathbb{R_+}}{\operatorname{argmin}} \ -\mathscr{L}( B_{N_{it}},M_{N_{it}},\sigma^2| E)$}
    \State \Return{$B_{N_{it}}, M_{N_{it}}, \sigma_{N_{it}}^2$}
    \EndFunction
  \end{algorithmic}
\end{algorithm}

For any $\sigma >0$ we denote by $\mathscr{L}_n(\sigma) = \mathscr{L}(B_n,M_n,\sigma^2| E)$, with $n \in \N$, the log-posterior sequence.

\begin{prop}[Alternate minimization update rule]
  \label{prop:alternate_update}
  In Algorithm \ref{alg:alternate}, we get for any $n \in \N$
  \[ \tilde{M} = \left( \Lambda_{B_n} \otimes \operatorname{Id}_{\vertt{E}}  \right)^{-1} E_{B_n} \in \R^{2 \vertt{E}} \eqsp , \qquad 
        B_{n+1} = \left( \Lambda_{M_{n+1}} \otimes \operatorname{Id}_{2} \right)^{-1} E_{M_{n+1}}\in \R^4 \eqsp ,
  \]
  with $\otimes$ the tensor product between matrices and
  \begin{enumerate}[label=(\alph*)]
\item $\Lambda_B = \left( \begin{matrix} \|b_1\|^2+\delta_B & \langle b_1,b_2 \rangle \\ \langle b_1,b_2 \rangle & \|b_2\|^2+\delta_B \end{matrix} \right) \eqsp , \qquad 
      \Lambda_M = \left( \begin{matrix} \|M_1\|^2+\delta_M & \langle M_1,M_2 \rangle \\ \langle M_1,M_2 \rangle & \|M_2\|^2+\delta_M \end{matrix} \right) \eqsp ;$
       \item $E_B = \left( \begin{matrix} (\langle \veclet{e}, b_1 \rangle)_{\veclet{e} \in E} \\ (\langle \veclet{e}, b_2 \rangle)_{\veclet{e} \in E} \end{matrix} \right) \eqsp , \qquad
      E_M = \left( \begin{matrix} \summ{\veclet{e} \in E}{}{m_{\veclet{e}}\veclet{e}}  \\ \summ{\veclet{e} \in E}{}{n_{\veclet{e}} \veclet{e}}  \end{matrix}\right) \eqsp .$
\end{enumerate}
\end{prop}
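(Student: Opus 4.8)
The plan is to exploit that, for fixed $B$, the map $M\mapsto q(B,M\mid E)$ is a convex quadratic on $\R^{2|E|}$, and symmetrically $B\mapsto q(B,M\mid E)$ is a convex quadratic on $\R^4$ for fixed $M$; hence each step of the alternate minimization in Algorithm~\ref{alg:alternate} is the unique solution of a single system of normal equations, and the work reduces to writing that system down, checking that it decouples, and reassembling it into the stated Kronecker form.

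First I would treat the $\tilde M$ update. Fix $B = B_n = (b_1,b_2)$. In $q(B_n,M\mid E) = \sum_{\veclet{e}\in E}\|m_{\veclet{e}}b_1 + n_{\veclet{e}}b_2 - \veclet{e}\|^2 + \delta_B\|B_n\|_2^2 + \delta_M\|M\|_2^2$, the term $\delta_B\|B_n\|_2^2$ is constant in $M$, and the remaining expression is a sum of independent per-edge contributions $g_{\veclet{e}}(m_{\veclet{e}},n_{\veclet{e}}) = \|m_{\veclet{e}}b_1 + n_{\veclet{e}}b_2 - \veclet{e}\|^2 + \delta_M(m_{\veclet{e}}^2 + n_{\veclet{e}}^2)$. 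Expanding $g_{\veclet{e}}$ and setting $\nabla_{(m_{\veclet{e}},n_{\veclet{e}})}g_{\veclet{e}} = 0$ gives, for each $\veclet{e}$, a $2\times 2$ linear system whose matrix is the Gram matrix of $(b_1,b_2)$ shifted on the diagonal by the regularization weight --- i.e. $\Lambda_{B_n}$ of item~(a) --- and whose right-hand side is $(\langle \veclet{e},b_1\rangle,\langle \veclet{e},b_2\rangle)^{\top}$. Ordering the unknowns as $M = (M_1,M_2)$ with $M_1 = (m_{\veclet{e}})_{\veclet{e}\in E}$ and $M_2 = (n_{\veclet{e}})_{\veclet{e}\in E}$, these $|E|$ identical $2\times 2$ systems stack into the single equation $(\Lambda_{B_n}\otimes\operatorname{Id}_{|E|})\tilde M = E_{B_n}$, and inverting yields the first formula.

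The $B_{n+1}$ update is symmetric. Fix $M = M_{n+1}$; now $\delta_M\|M\|_2^2$ is constant, and differentiating $q(B,M_{n+1}\mid E) = \sum_{\veclet{e}\in E}\|m_{\veclet{e}}b_1 + n_{\veclet{e}}b_2 - \veclet{e}\|^2 + \delta_B(\|b_1\|_2^2 + \|b_2\|_2^2)$ with respect to $b_1\in\R^2$ and $b_2\in\R^2$ produces a pair of $\R^2$-valued normal equations. Using $\sum_{\veclet{e}\in E} m_{\veclet{e}}^2 = \|M_1\|^2$, $\sum_{\veclet{e}\in E} m_{\veclet{e}}n_{\veclet{e}} = \langle M_1,M_2\rangle$ and $\sum_{\veclet{e}\in E} n_{\veclet{e}}^2 = \|M_2\|^2$, the $2\times 2$ coefficient matrix of this pair is $\Lambda_{M_{n+1}}$ (the Gram matrix of $M_1,M_2$ shifted on the diagonal by the regularization weight) and the right-hand side is $E_{M_{n+1}} = \big(\sum_{\veclet{e}} m_{\veclet{e}}\veclet{e},\ \sum_{\veclet{e}} n_{\veclet{e}}\veclet{e}\big)$; assembling the two $\R^2$-components gives $(\Lambda_{M_{n+1}}\otimes\operatorname{Id}_2)B_{n+1} = E_{M_{n+1}}$, and inverting gives the second formula.

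Finally I would justify the inversions: by Cauchy--Schwarz, $\det\Lambda_B = (\|b_1\|^2 + \delta)(\|b_2\|^2 + \delta) - \langle b_1,b_2\rangle^2 \geq \|b_1\|^2\|b_2\|^2 - \langle b_1,b_2\rangle^2 \geq 0$, strictly positive as soon as the regularization weight is positive or $(b_1,b_2)$ is linearly independent (and likewise for $\Lambda_M$ when $M_1,M_2$ are linearly independent), so, since a Kronecker product of invertible matrices is invertible, $\Lambda_{B_n}\otimes\operatorname{Id}_{|E|}$ and $\Lambda_{M_{n+1}}\otimes\operatorname{Id}_2$ are invertible for the iterates produced by the algorithm. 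The computation itself is entirely routine; the only point that needs genuine care --- and the closest this proof comes to an obstacle --- is the bookkeeping of the coordinate orderings of $M$ and of $B$, so that the decoupled per-edge (resp. per-basis-vector) $2\times 2$ systems reassemble exactly into the Kronecker-product form rather than a permuted version of it.
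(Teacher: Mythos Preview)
Your argument is correct and follows essentially the same route as the paper: both proofs recognise that $q(B,M\mid E)$ is quadratic in each block of variables and solve the resulting normal equations, the paper by completing the square directly in the Kronecker form $B^{\top}(\Lambda_M\otimes\operatorname{Id}_2)B - 2\langle B,E_M\rangle + \alpha(M)$, you by differentiating and then reassembling the decoupled per-edge $2\times 2$ systems into that same Kronecker structure. Your explicit check of invertibility via Cauchy--Schwarz is a small addition the paper omits.
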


\begin{proof}
  The proof is postponed to Appendix B.
\end{proof}

Note that if $B$ is orthogonal, \ie \ $\langle b_1, b_2 \rangle = 0$ then $\Lambda_B$ is diagonal and the proposed method is the exact solution to the minimization problem over $\Z^2$.
\begin{thm}[Convergence in finite time]
For any $\sigma >0$, $(\mathscr{L}_n(\sigma))_{n \in \N}$ is a non-decreasing sequence. In addition, $\seq{B}{n}$ and $\seq{M}{n}$ converge in a finite number of iterations.
\end{thm}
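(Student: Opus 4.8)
The plan is to read Algorithm~\ref{alg:alternate} as a block-coordinate descent on the quadratic energy $q(\cdot,\cdot \mid E)$ of \eqref{eq:log-lik}, and to exploit that one of the two blocks, $M$, is constrained to the discrete set $\Z^{2\vertt{E}}$. For monotonicity, note that $\mathscr{L}_n(\sigma) = -2(\vertt{E}+1)\log(\sigma^2) - \tfrac{1}{2\sigma^2}\, q(B_n,M_n \mid E)$ and the logarithmic term does not depend on $n$, so $(\mathscr{L}_n(\sigma))_{n\in\N}$ is non-decreasing if and only if $(q(B_n,M_n \mid E))_{n\in\N}$ is non-increasing. The latter follows from the two updates at iteration $n$: the $M$-update replaces $M_n$ by $[\tilde M]$ only when this strictly decreases $q(B_n,\cdot \mid E)$, hence $q(B_n,M_{n+1}\mid E)\le q(B_n,M_n\mid E)$; and $B_{n+1}$ is the minimiser over $B\in\R^4$ of $q(B,M_{n+1}\mid E)$, well defined and given in closed form by Proposition~\ref{prop:alternate_update}, hence $q(B_{n+1},M_{n+1}\mid E)\le q(B_n,M_{n+1}\mid E)$. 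Chaining the two gives $q(B_{n+1},M_{n+1}\mid E)\le q(B_n,M_n\mid E)$, i.e. $\mathscr{L}_{n+1}(\sigma)\ge\mathscr{L}_n(\sigma)$.

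For finite-time convergence, the first point I would make, again using Proposition~\ref{prop:alternate_update}, is that for $n\ge 1$ the iterate $B_n$ is a function of $M_n$ alone, say $B_n=\beta(M_n)$; consequently $\tilde M$ at step $n$, hence $M_{n+1}$, depends only on $M_n$, so $(M_n)_{n\ge 1}$ is a deterministic orbit in $\Z^{2\vertt{E}}$. Next, whenever $M_{n+1}\ne M_n$ the acceptance test was passed, so $q(B_n,M_{n+1}\mid E)<q(B_n,M_n\mid E)$, and then $q(B_{n+1},M_{n+1}\mid E)<q(B_n,M_n\mid E)$ as well: the energy drops \emph{strictly} at every iteration where $M$ changes. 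Since $q(B_j,M_j\mid E)=q(\beta(M_j),M_j\mid E)$ depends on $M_j$ only, a value of $M$ already attained cannot recur after a change has occurred, as the energy would then be simultaneously strictly smaller and equal at that index. Hence the orbit never revisits a value once it has moved.

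To conclude I would confine the orbit to a finite set: from $q(B_n,M_n\mid E)\le q(B_1,M_1\mid E)=:C_1$ and the regularisation term, $\delta_M\norm{M_n}_2^2\le r(B_n,M_n)\le C_1$, so every $M_n$ lies in a fixed ball, which meets $\Z^{2\vertt{E}}$ in finitely many points. An orbit taking finitely many values and never returning after a change must be eventually constant: there is $N$ with $M_n=M^\star$ for all $n\ge N$, whence $B_n=\beta(M^\star)$ for all $n\ge N+1$, and both sequences stabilise in a finite number of iterations.

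The step I expect to be the main obstacle is exactly this last confinement of the integer iterates: the implication ``finitely many values $+$ no revisits $\Rightarrow$ eventual constancy'' needs $(M_n)$ to stay bounded, which is immediate from the energy bound when the penalty $\delta_M$ is strictly positive, but in the limiting case $\delta_M=0$ one has to argue boundedness differently (e.g. from uniform smallness of the accepted rounding increments $[\tilde M]-\tilde M$ together with non-degeneracy of $\Lambda_{B_n}$). Everything else — existence and the explicit forms of the two argmin updates, and the strict energy decrease at a change — is routine once Proposition~\ref{prop:alternate_update} is in hand.
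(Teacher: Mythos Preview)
Your argument is correct and follows essentially the same route as the paper: non-increase of $q$ from the two block updates, boundedness of $(M_n)$ in $\Z^{2\vertt{E}}$ via the penalty $\delta_M>0$, and discreteness to force stationarity. The only difference is in the packaging of the last step---you use that $B_n=\beta(M_n)$ for $n\ge1$ makes $(M_n)_{n\ge1}$ a deterministic orbit with no revisits after a strict drop, while the paper extracts a stationary subsequence and sandwiches $\mathscr{L}_n$ between two copies of its limit---but both hinge on the same strict decrease at an accepted update (lines~6--7 of Algorithm~\ref{alg:alternate}) and both implicitly require $\delta_M>0$, as you rightly flag.
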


\begin{proof}
  $(\mathscr{L}_n(\sigma))_{n \in \N}$ is non-decreasing since for any $n\in \N$, $\mathscr{L}_n(\sigma) \leq  \mathscr{L}( B_{n},M_{n+1},\sigma^2 | E) \leq \mathscr{L}_{n+1}(\sigma)$. Let us show that the sequences $(M_n)_{n \in \N}$ and $(B_n)_{n \in \N}$ are bounded. Because $(\mathscr{L}_n(\sigma))_{n \in \N}$ is non-decreasing, the sequence $\left(q(B_n, M_n|E) \right)_{n \in \N}$ is non-increasing. We obtain that 
  \[ 
        \delta_M \| M_n \|^2 \leq q(B_0, M_0 | E) \eqsp , \qquad
        \delta_B \| B_n \|^2 \leq q(B_0, M_0 | E) \eqsp .
    \]  
  The sequence $\seq{M}{n}$ is bounded thus we can extract a converging subsequence. Since $\seq{M}{n}$ takes value in $\Z^{2 \vertt{E}}$, this subsequence is stationary with value $M$. Let $n_0 \in \N$ be the first time we hit value $M$. Let $n \in \N$, with $n \ge n_0+1$, there exists $n_1 \in \N$, with $n_1 \ge n$ such that $M_{n_1} = M_{n_0}$ thus
  \[ \mathscr{L}_{n_0}(\sigma) \leq \mathscr{L}_{n_0+1}(\sigma) \leq \mathscr{L}_n(\sigma) \leq \mathscr{L}(B_{n_1-1}, M_{n_1}, \sigma^2 | E) \leq \mathscr{L}(B_{n_1-1}, M_{n_0}, \sigma^2 | E) \leq \mathscr{L}_{n_0}(\sigma) \eqsp .\] Hence for every $n \ge n_0+1$, $\mathscr{L}_n(\sigma) = \mathscr{L}( B_n, M_n, \sigma^2 | E) = \tilde{\mathscr{L}}(\sigma)$. Suppose there exists $n \ge n_0+1$ such that $M_n \neq M_{n+1}$ this means that $\mathscr{L}(B_n,M_{n+1},\sigma^2|E) > \mathscr{L}_n(\sigma)$ (because of lines 6 and 7 of Algorithm \ref{alg:alternate}) which is absurd. Thus $\seq{M}{n}$ is stationary and so is $\seq{B}{n}$.
\end{proof}

In Algorithm \ref{alg:alternate} 
$M_0$ is initialized with zero and $B_0$ is defined as an orthonormal (up to a dilatation factor) direct basis where the first vector is given by an edge with median norm in $E$. 

\begin{figure}
  \centering
  \subfloat[$\delta_M =0$ $\delta_B =0$]{\includegraphics[width=.24\linewidth]{./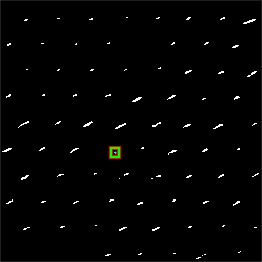}} \hfill
  \subfloat[$\delta_M = 5$ $\delta_B =10^{-1}$]{\includegraphics[width=.24\linewidth]{./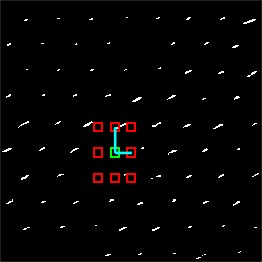}}  \hfill
  \subfloat[$\delta_M = 9$ $\delta_B =10^{-1}$]{\includegraphics[width=.24\linewidth]{./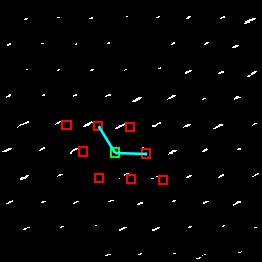}}  \\
  \caption{\figuretitle{Influence of hyperparameters} In this experiment we assess the importance of the hyperparameters. We consider Algorithm \ref{alg:alternate} with input graph a detection map, output of Algorithm \ref{alg:auto-similaritydetection}. The initialization in the three cases is the canonical basis $((0,1), (1,0))$. In (a), since the initial basis vectors are a local minimum to the optimization problem, the algorithm converges after one iteration. However, this is not perceptually satisfying. Setting $\delta_M = 5$ and $\delta_B = 10^{-1}$ in (b) the true observed lattice is a sub-lattice of the output lattice of Algorithm~\ref{alg:alternate}. Increasing $\delta_M$ up to 9, in (c) we obtain a perceptually correct lattice. For $\delta_M$ larger than 10, the basis vectors go to 0. Only the regularizing term is minimized by the optimization procedure and the data-attachment term is not taken into account. Experimentally we found that the choice of $\delta_M$ is more flexible and that $\delta_M \in (1,20)$ gives satisfying perceptual results if the initialization heuristics proposed in Section \ref{sec:algorithm and properites} is chosen.}
  \label{fig:hyperparam}
\end{figure}
\subsection{Experimental results}
\label{sec:experimental-results}
Combining the results of Section \ref{sec:algorithm and properites} and Section \ref{sec:detection-algorithm} we obtain an algorithm to extract lattices in images, see Figure \ref{fig:lattice_detec}. In what follows we perform lattice detection using Algorithm \ref{alg:auto-similaritydetection} in order to extract auto-similarity given a patch in an original image $u$, which implies that the patch domain $\omega$ is set by the user. Recall that in Algorithm~\ref{alg:auto-similaritydetection}, the eigenvalues of the covariance matrix in Proposition \ref{prop:squared_exact} are approximated, and that the cumulative distribution function of the quadratic form in Gaussian random variables is computed via the Wood F method \cite{wood1989f}. Lattice detection is performed using Algorithm~\ref{alg:alternate} with parameters $\delta_M = 10$ and $\delta_B = 10^{-2}$.
\begin{figure}
  \centering
  \input{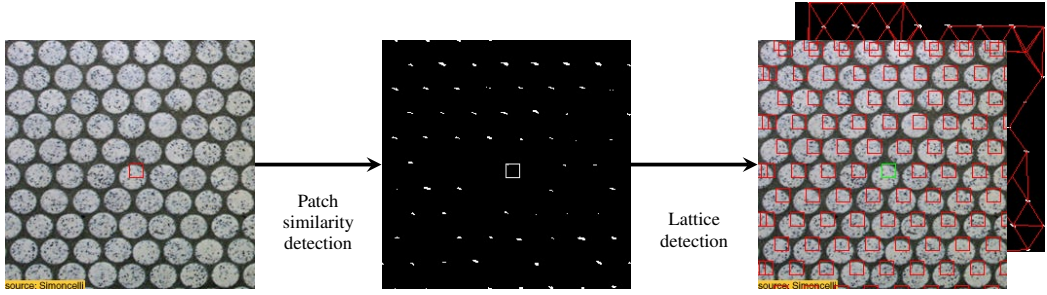}
  \caption{\figuretitle{Lattice proposal algorithm} Lattice detection and extraction in images require a patch from the user and compute a binary image containing all the offsets with correct similarity as well as a lattice matching the underlying graph. The patch auto-similarity detection step was presented in Section \ref{sec:detection-algorithm}. The lattice detection step was presented in Section \ref{sec:algorithm and properites}. The first image is the input, the second one is the output of the detection algorithm. In the last step we show the original image with red squares placed on the computed lattice. Behind this image, the unoriented edges of the graph are shown in red.}
  \label{fig:lattice_detec}
\end{figure}
\subsubsection{Escher paving}
\label{sec:escher_paving}
In this section we study art images, Escher pavings, with strongly periodic structure. We investigate the following parameters of our lattice detection algorithm:
\begin{enumerate}[label = (\alph*)]
\item background microtexture model $\micro$,
\item \NFAmax \ parameter in Algorithm \ref{alg:auto-similaritydetection},
\item patch domain $\omega$.
\end{enumerate}

\paragraph{Microtexture model}
We confirm that the choice of the microtexture model will influence the detected geometrical structures. 
The more structured is the background noise model the less we obtain detections. This situation is considered in Figure \ref{fig:microtexture_model}. 

\begin{figure}
  \centering
  \subfloat[]{\includegraphics[width=.24\linewidth]{./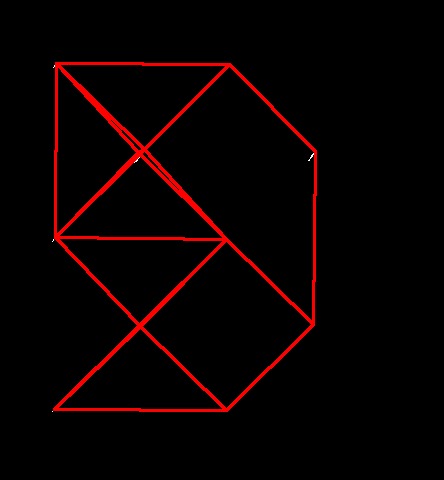}} \hspace{0.5cm}
  \subfloat[]{\includegraphics[width=.24\linewidth]{./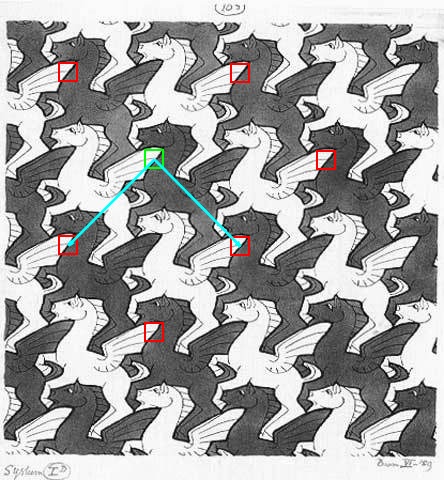}}  \hspace{0.5cm}
  \subfloat[]{\includegraphics[width=.24\linewidth]{./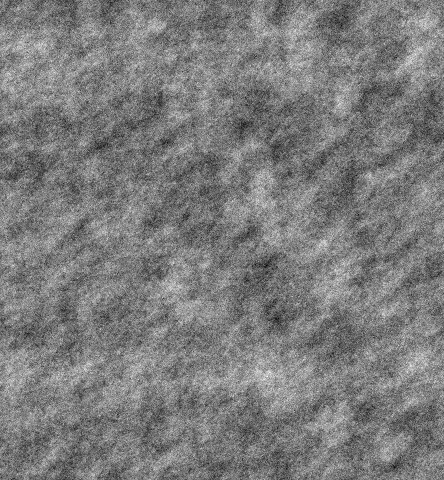}}  \\
  \subfloat[]{\includegraphics[width=.24\linewidth]{./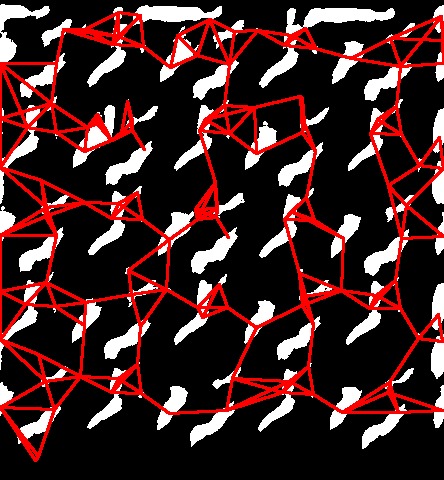}} \hspace{0.5cm}
  \subfloat[]{\includegraphics[width=.24\linewidth]{./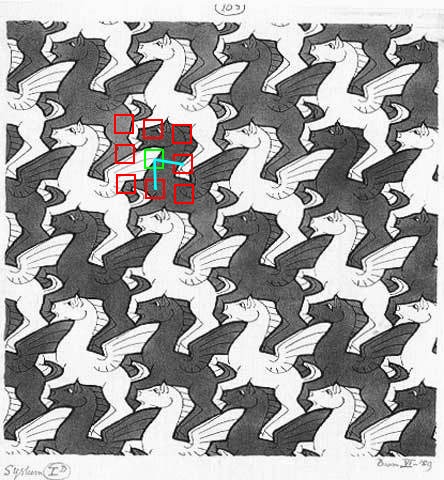}}  \hspace{0.5cm}
  \subfloat[]{\includegraphics[width=.24\linewidth]{./img/white_noise.jpg}}  \\  
  \subfloat[]{\includegraphics[width=.24\linewidth]{./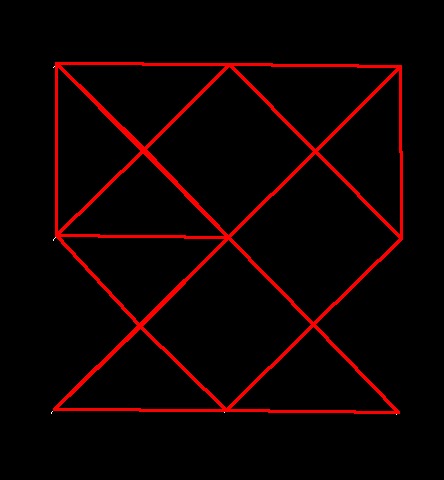}} \hspace{0.5cm}
  \subfloat[]{\includegraphics[width=.24\linewidth]{./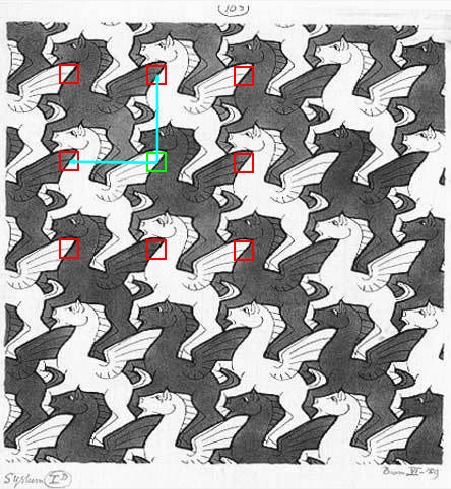}}    \hspace{0.5cm}
  \subfloat[]{\includegraphics[width=.24\linewidth]{./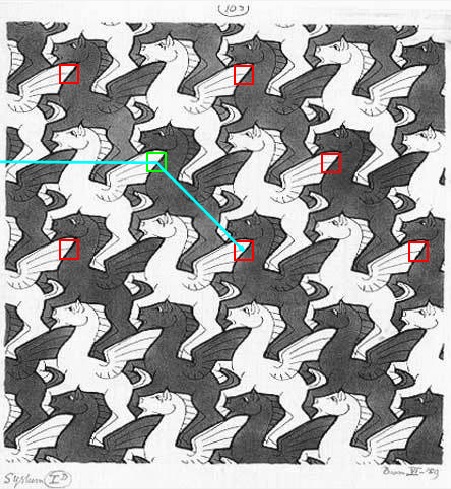}}  \\  
  \caption{\figuretitle{Choice of the microtexture model} In this experiment we discuss the choice of the \acontrario \ background microtexture model. In the left column we display the graph obtained after the detection step. In the middle column we superpose the proposed lattice on the original image. The original patch is drawn in green, obtained basis lattice vectors are in cyan, and red squares are placed onto the proposed lattice. In (a) and (b) the microtexture model is given by \eqref{eq:gaussian_model} 
    and \NFAmax \ is set to $10$. A sample of this model is presented in (c). Obtained results match the perceptual lattice. In (d), (e), (g) and (h) the microtexture model is a Gaussian white noise model with variance equal to the empirical variance of the original image. Sample from this Gaussian white noise is presented in (f). In (d) and (e), \NFAmax \ is set to $10$
    . This leads to an excessive number of detections in the input image. In order to obtain the perceptual lattice found in (b) with a Gaussian white noise model 
    we must set the \NFAmax \ parameter to $10^{-111}$. Results are presented in experiments (g), (h) and (i). Image (h) is also an example for which the median initialization for $B_0$ in Algorithm \ref{alg:alternate} identifies a non satisfying local minimum. 
    This situation is corrected in (i) with random initialization for $B_0$. In (h) final log-posterior value is $-565.5$ which is inferior to the final log-posterior value in (i): $-542.1$. Thus (i) gives a better local maximum of the full log-posterior than (h).}
  \label{fig:microtexture_model}
\end{figure}

\paragraph{\NFAmax \ parameter}
Using a more adapted microtexture model as background model we gain robustness compared to other less structured models such as a Gaussian white noise. However, \NFAmax \ must be set carefully otherwise two situations can occur:
\begin{enumerate}[label=(\alph*)]
\item if \NFAmax \ is too high, too many detections can be obtained (true perceptual detections are not differentiated from false positives) ;
\item if \NFAmax \ is too low, we fail to identify important perceptual structures in the image.
\end{enumerate}
We observe that a general good practice is to set \NFAmax \ equal to $10$, see Figure \ref{fig:NFA}. However, if the input patch is corrupted one may increase this parameter up to $10^2$ or $10^3$, see Figure \ref{fig:preprocessing} and Figure \ref{fig:homography}.

\begin{figure}
  \centering
  \subfloat{\includegraphics[width=.24\linewidth]{./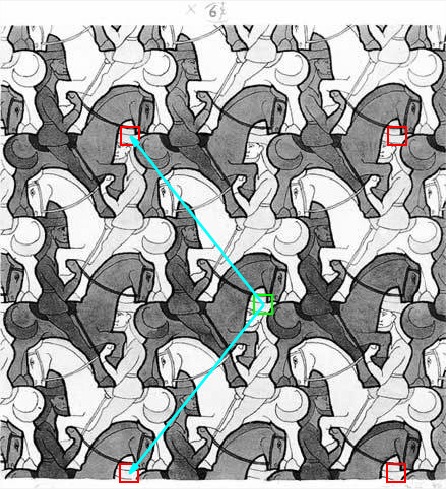}}  \hfill
  \subfloat{\includegraphics[width=.24\linewidth]{./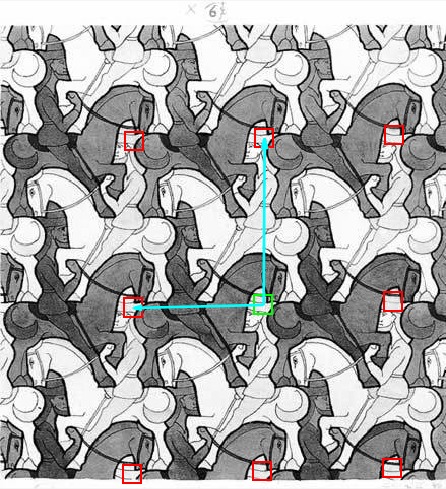}} \hfill
  \subfloat{\includegraphics[width=.24\linewidth]{./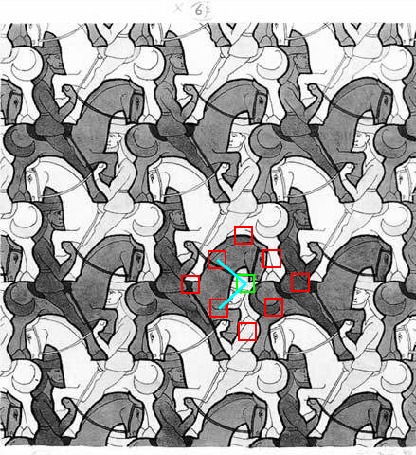}} \\
  \setcounter{subfigure}{0}
  \subfloat[]{\includegraphics[width=.24\linewidth]{./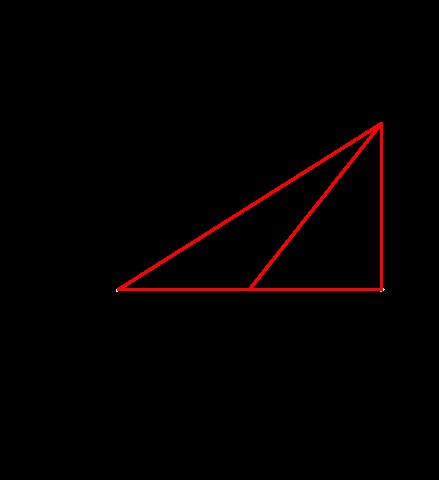}} \hfill
  \subfloat[]{\includegraphics[width=.24\linewidth]{./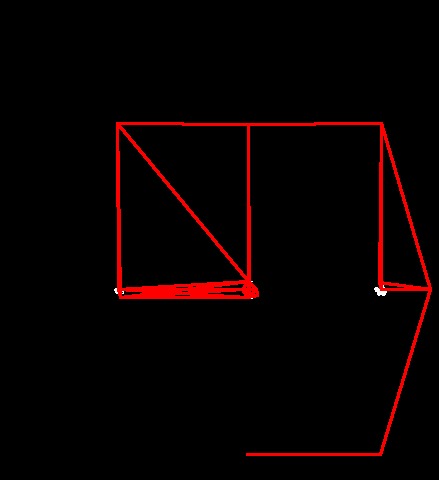}} \hfill  
  \subfloat[]{\includegraphics[width=.24\linewidth]{./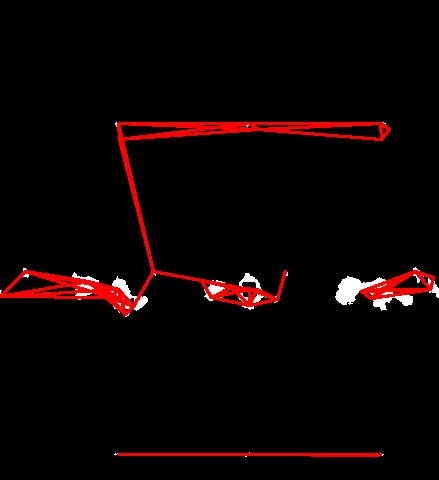}} \hfill  
  \caption{\figuretitle{Choice of Number of False Alarms} In this experiment we discuss the choice of the \NFAmax \ parameter in the \acontrario \ framework in the case where the underlying microtexture model is given by \eqref{eq:gaussian_model}
    . Each column corresponds to a pair of images: the returned lattice and its associated underlying graph. In (a), \NFAmax \ is set to 1. Detections are correct, there are not enough points to precisely retrieve the perceptual lattice. 
    In (b), \NFAmax \ is set to $10$. The estimated lattice is correct. In (c), \NFAmax \ is set to $10^3$. In this case we obtain false detections which lead to an incorrect final lattice. Note that large detection zones in the binary image (c) are due to the non-validity of the Wood F approximation for some offsets. This behavior is also present in (a) and (b) but less noticeable.}
  \label{fig:NFA}
\end{figure}

\paragraph{Patch position}
Patch position and size are crucial in our detection model, since we rely on local properties of the image. As shown in Figure \ref{fig:pos_size} these parameters should be carefully selected by the user. However, for particular applications such as lattice extraction for crystallographic purposes, there exist procedures to extract primitive cells \cite{mevenkamp2015unsupervised}.

\begin{figure}
  \centering
  \subfloat[]{\includegraphics[width=.24\linewidth]{./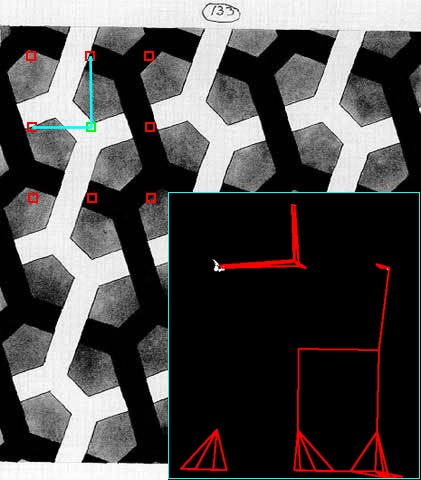}} \hspace{0.2cm}
  \subfloat[]{\includegraphics[width=.24\linewidth]{./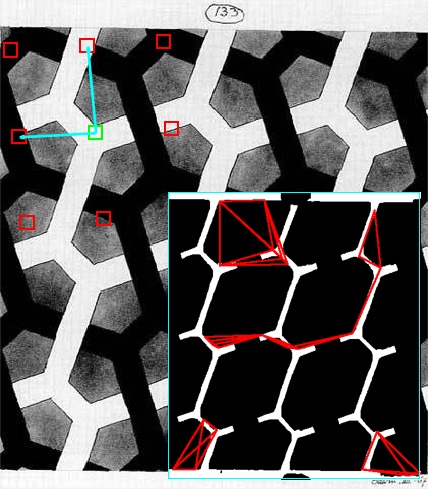}} \hspace{0.2cm}
  \subfloat[]{\includegraphics[width=.24\linewidth]{./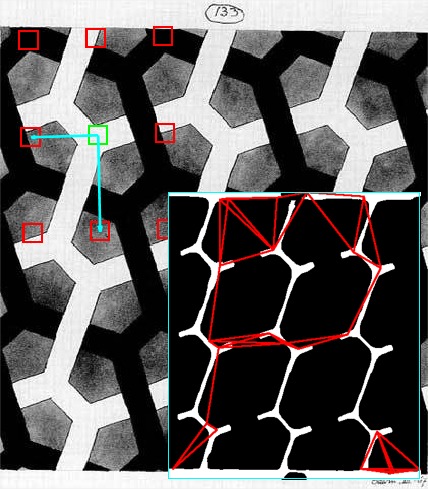}} \\
  \subfloat[]{\includegraphics[width=.24\linewidth]{./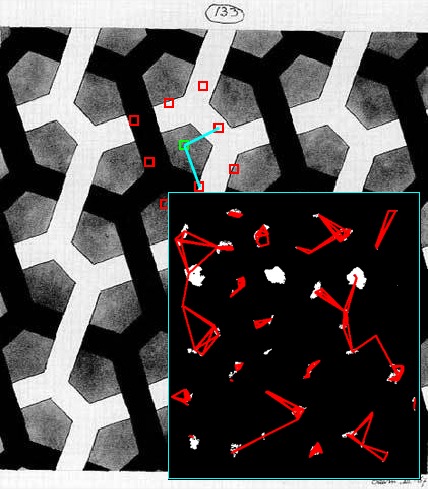}} \hspace{0.2cm}
  \subfloat[]{\includegraphics[width=.24\linewidth]{./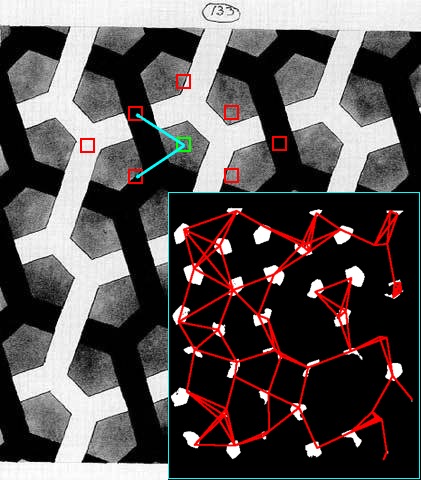}} \hspace{0.2cm}
  \subfloat[]{\includegraphics[width=.24\linewidth]{./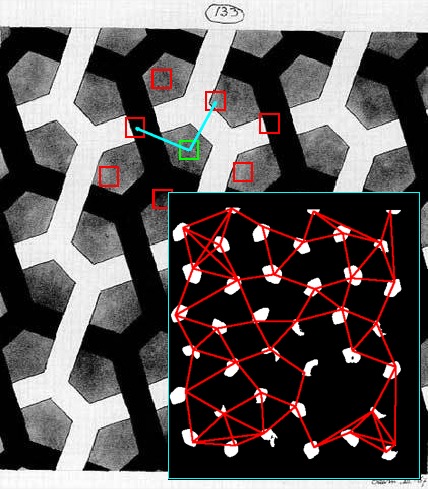}} \\    
  \subfloat[$10 \times 10$]{\includegraphics[width=.24\linewidth]{./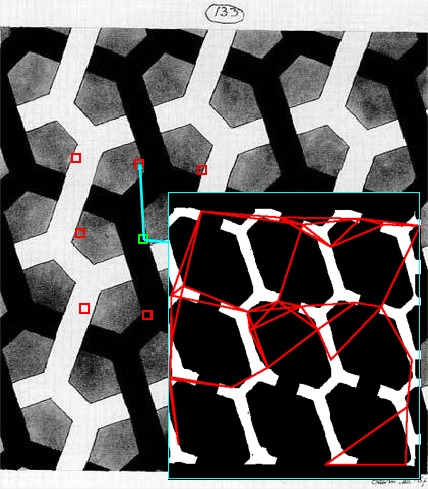}} \hspace{0.2cm}
  \subfloat[$15 \times 15$]{\includegraphics[width=.24\linewidth]{./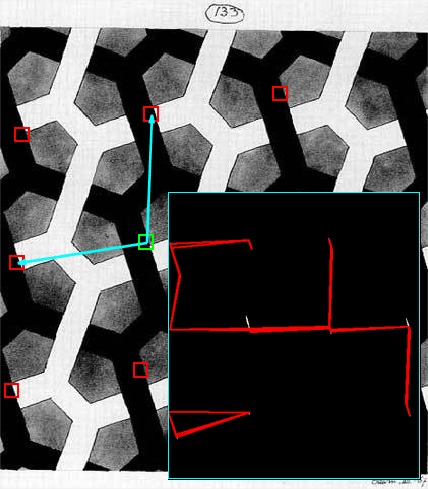}} \hspace{0.2cm}
  \subfloat[$20 \times 20$]{\includegraphics[width=.24\linewidth]{./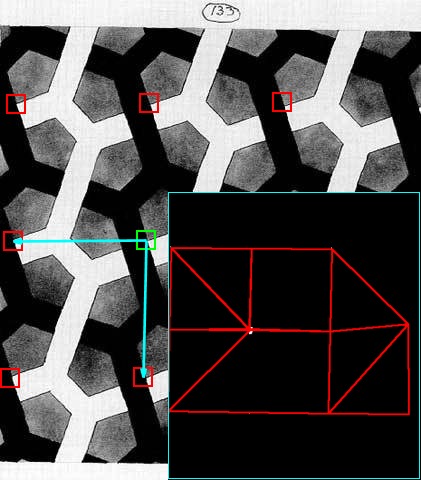}} \\
  \caption{\figuretitle{Influence of patch size and patch position} For each experiment \NFAmax \ is set to $10^4$, \ie \ 4 \% of the pixels. In most cases lower \NFAmax \ could be used but setting a high \NFAmax \ ensures that we always get detections even if the patch only contains microtexture information. Each row corresponds to a lattice proposal with same patch position but different patch sizes: $10 \times 10$ for the left column, $15 \times 15$ for the middle one and $20 \times 20$ for the right one. Each image represents the superposed proposed lattice on the original image. On the bottom-right of each image we display the underlying graph as well as the binary detection. On the first row the patch contains only a white region with a few gray pixels. The influence of these pixels is visible for small patch sizes (a) but is no longer taken into account for larger patch sizes, (b) and (c). On the second row the patch contains gray microtexture which has some local structure. We identify large similarity regions and no perceptual lattice is retrieved in (d), (e) and (f). The situation is different on the third row. The $10 \times 10$ patch contains only uniform black information in (g), but the situation changes as the patch sizes grows. In (h), the patch intersects black, gray and white zones. The graph is much sparser and the lattice is close to the perceptual one
    . In (i), the patch size is large enough to cover large areas of the three gray levels and the perceptual lattice is identified.}
  \label{fig:pos_size}
\end{figure}
\subsubsection{Crystallography images}

Defect localization, noise reduction, correction of crystalline structures in images are central tasks in crystallography. Usually, they require the knowledge of the geometry of a perfect underlying crystal. In our experiments we manually identify the geometry of the periodic crystal, which allows for multiple structures in one image, provided a user input of the primitive cell in a lattice. This primitive cell extraction could be automated \cite{mevenkamp2015unsupervised}. In Figure \ref{fig:lattices_algo}, we present an example of multiple geometry extraction. Statistics like angle and period can be retrieved using the estimated basis vectors. 
This image contains two lattices and the locality of our measurements allows for the detection of both structures. 
Using windowed Fourier transform could be efficient to obtain local measurements on the periodicity of these images since the information is highly frequential. However in order to obtain the same detection map as Algorithm \ref{alg:auto-similaritydetection} one must carefully set the threshold parameter, \NFAmax. This situation is illustrated in Figure \ref{fig:fourier_comp}.

Finally we assess the precision of our measurements by comparing our results with a model used in crystallography, see Figure \ref{fig:crystallo}. We indeed retrieve one of the possible bases used to describe these lattices. However, the symmetry constraints are not present in the identified basis. To obtain another basis, one must relax the regularization parameters. A more natural way to obtain the desired primitive cell would be to introduce symmetry constraints in the graphical model formulation in \eqref{eq:log-lik}.

\begin{figure}
  \centering
  \subfloat[]{\includegraphics[width=.24\linewidth]{./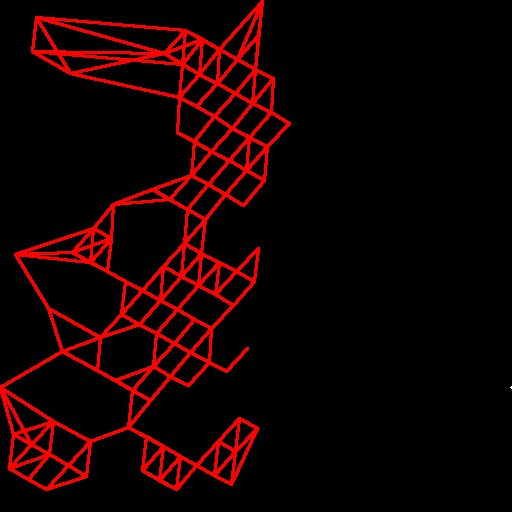}} \hfill
  \subfloat[]{\includegraphics[width=.24\linewidth]{./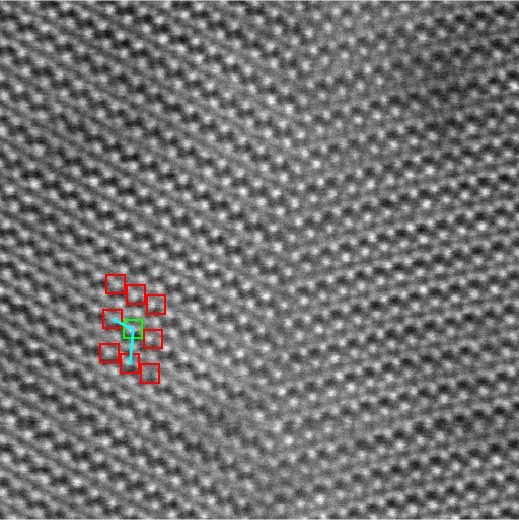}} \hfill
  \subfloat[]{\includegraphics[width=.24\linewidth]{./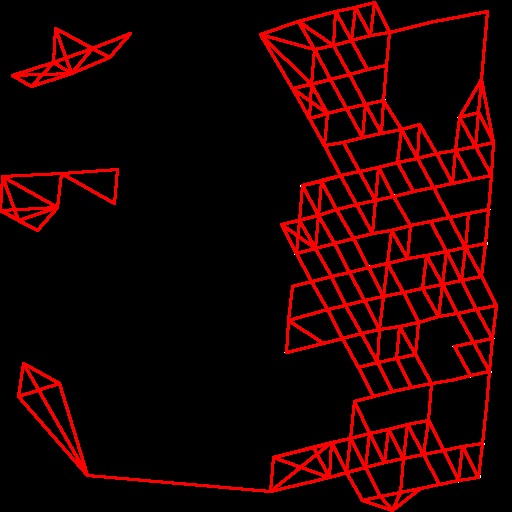}} \hfill
  \subfloat[]{\includegraphics[width=.24\linewidth]{./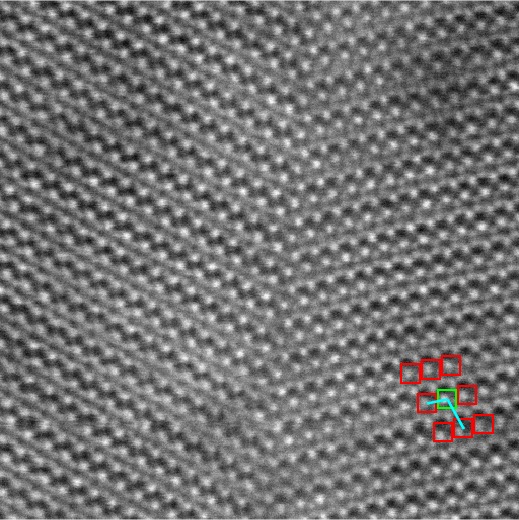}} 
  \caption{\figuretitle{Lattice extraction}In this experiment we consider a crystallographic image (an orthorombic $\textrm{NiZr}$ alloy) and set \NFAmax \ to $10^2$. Two lattices are present in this image and they are correctly identified in (b) and (d). Note that in (a), respectively in (c), mostly points in the left, respectively right, part of the image are identified, thus yielding correct lattice identification. Points which should be identified and are discarded nonetheless correspond to regions in which we observe contrast variation. Image courtesy of Denis Gratias.}
  \label{fig:lattices_algo}
\end{figure}

\begin{figure}
  \centering
  \subfloat[]{\begin{tikzpicture}[spy using outlines={rectangle, yellow,magnification=3, connect spies}]
  \node {\pgfimage[interpolate=true,height=3cm]{./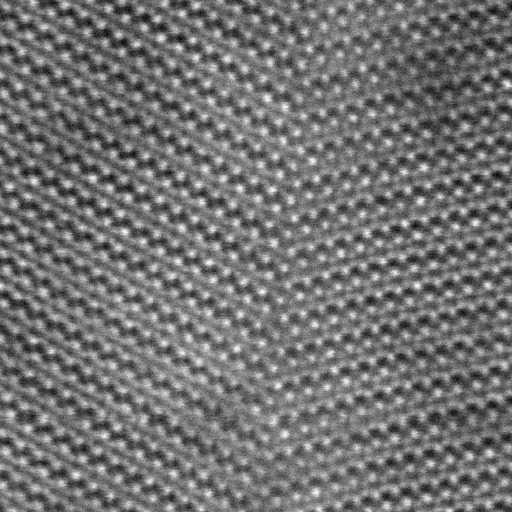}};
 \coordinate (spypoint) at (-1,1);
 \coordinate (spyviewer) at (1,-0.5);
 \spy[width=2cm,height=2cm] on (spypoint) in node [fill=white] at (spyviewer);
  
\end{tikzpicture}
  \subfloat[$90\%$]{\includegraphics[width=.2\linewidth]{./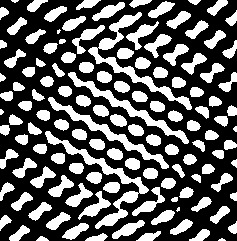}} \hfill
  \subfloat[$95\%$]{\includegraphics[width=.2\linewidth]{./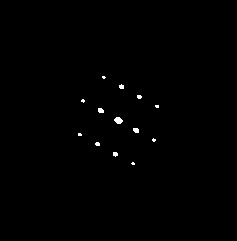}} \hfill
  \subfloat[$99\%$]{\includegraphics[width=.2\linewidth]{./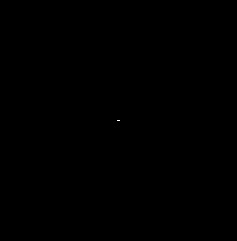}} \hfill
  \caption{\figuretitle{Comparison with Fourier based methods}Since the original image can be segmented in two highly periodic components, Fourier methods might be well-adapted to the lattice extraction task. In (a) we present a sub-image of the original alloy. We compute the autocorrelation of this sub-image and threshold it. This operation gives us a detection map, like Algorithm \ref{alg:auto-similaritydetection}. In (b) the threshold is set to $90\%$ percent of the maximum value of the autocorrelation. Too many points are identified. In (d) the threshold is set to $99\%$ and only one point is identified. Correct lattice is identified in (c). 
  }
  \label{fig:fourier_comp}
\end{figure}

\begin{figure}
  \centering
  \subfloat[]{\includegraphics[angle=90,width=.23\linewidth]{./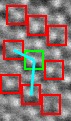}} \hspace{0.5cm}
  \subfloat[]{\includegraphics[width=.5\linewidth]{./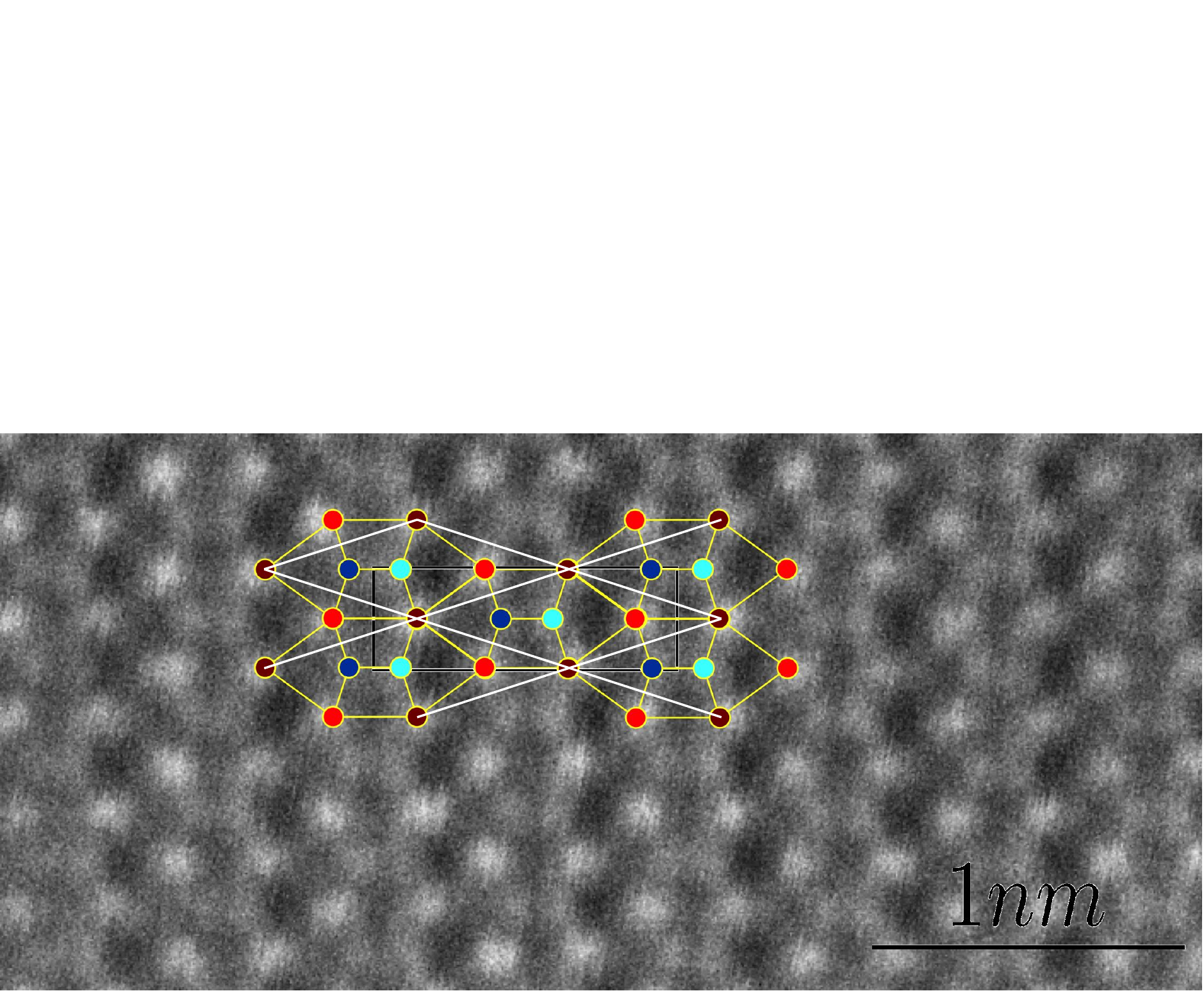}} \hfill
  \caption{\figuretitle{Agreement with crystallography models} In (a) we perform a zoom on of the lattice identified in Figure \ref{fig:lattices_algo} and compare it to the one identified by crystallographists in (b). (a) is a zoomed rotated version of a crystalline structure similar to (b). The output lattice in (a) is the same as the one in (b). Indeed in (b) the red points, for instance, form a lattice. A possible basis for this lattice is given by the vectors of a parallelogramm. Up to rotation these basis vectors match the one identified in (a). However, the parallelogramm basis is a symmetric and thus is not chosen by chemists since it does not reflect the geometry of the alloy. The preferred basis is given by the symmetric rhombus (white edges in (b)). Image courtesy of Denis Gratias.}
  \label{fig:crystallo}
\end{figure}

\subsubsection{Natural images}
\label{sec:natural_images}
Identifying lattices in natural images is a more challenging task since we have to deal with image artifacts. In this section we investigate the effect on the detection of the background clutter in natural images, see Figure \ref{fig:preprocessing}, and the effect of the camera position, see Figure \ref{fig:homography}.

\paragraph{Preprocessing}
Due to the occlusions occurring in natural images, if a lattice is superposed over a real photograph, carefully selecting structural elements might not be enough in order to retrieve the periodicity. Indeed, if we observe a repetition of the lattice pattern, the background does not necessarily contain any repetition and thus makes the detection more complicated. In order to avoid such a problem we propose to introduce a preprocessing step in our algorithm. This preprocessing step will be encoded in a linear filter $h$. Suppose $U$ is a sample from a Gaussian model with function $f$ then $h * U$ is a sample from a Gaussian model with function~$h * f$. Thus all the properties derived earlier remain valid with this linear operation. In Figure~\ref{fig:preprocessing}, we set $h$ to be a Laplacian operator \footnote{We use a discrete Laplacian operator $\Delta$ such that for any $\veclet{x} = (x_1,x_2)$, we get that $\Delta(u)(x_1,x_2) = \left(u(x_1+1,x_2) + u(x_1-1,x_2) + u(x_1,x_2+1) + u(x_1,x_2-1) - 4 u(\veclet{x})\right)/4$, where boundaries are handled periodically.}. This operation allows us to avoid contrast problems.

\begin{figure}
  \centering
  \subfloat[]{\includegraphics[width=.2\linewidth]{./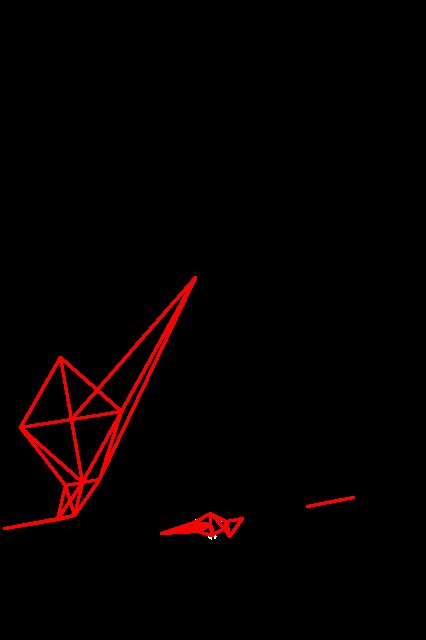}} \hfill
  \subfloat[]{\includegraphics[width=.2\linewidth]{./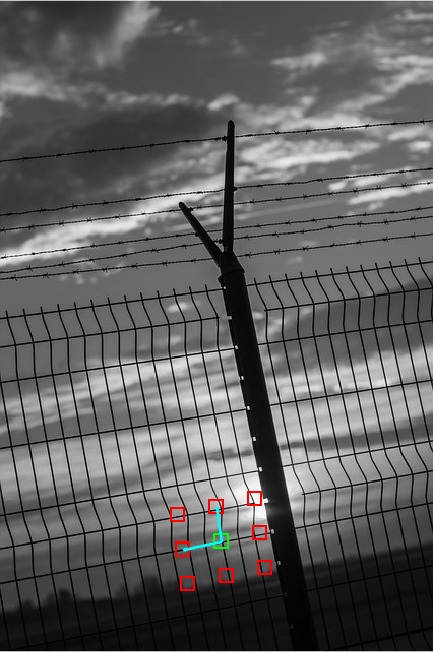}}  \hfill
  \subfloat[]{\includegraphics[width=.2\linewidth]{./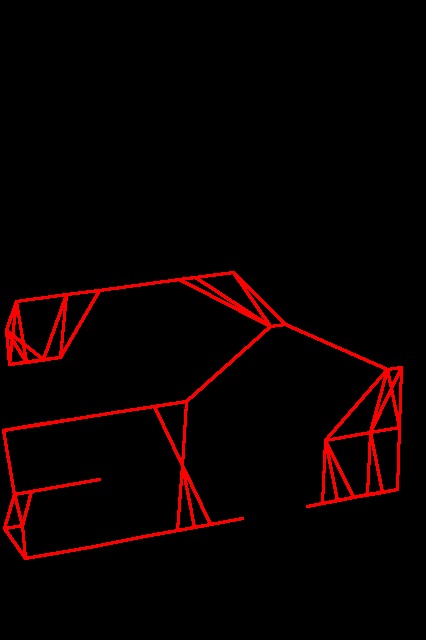}} \hfill
  \subfloat[]{\includegraphics[width=.2\linewidth]{./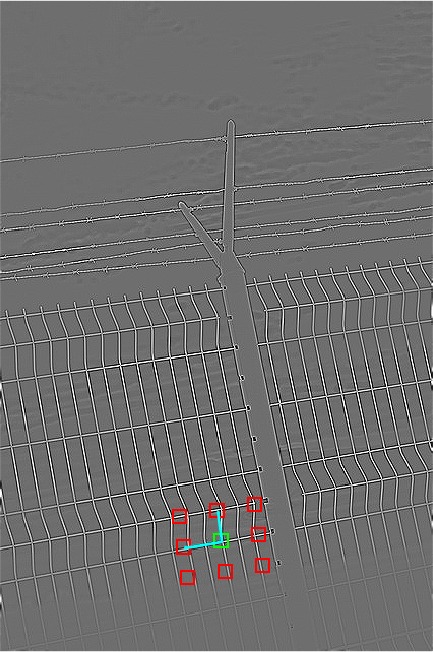}}  
  \caption{\figuretitle{Preprocessing and filtering} In (a) and (c) we display the graphs obtained with Algorithm \ref{alg:auto-similaritydetection} applied on images (b) and (d). In (b) and (d) the original image is superposed with the estimated lattice (vectors in cyan and proposed patches in red). In (a) and (b), \NFAmax \ was set to $10^5$ which corresponds to 35 \% of detection in the associated \acontrario \ model. Lower \NFAmax \ did not give enough points to conduct the lattice proposal step. We obtain a visually satisfying lattice. In (c) and (d) we apply a simple preprocessing, a Laplacian filter, to the image and set \NFAmax \ to 10. The detection figure is much cleaner and the estimation makes much more sense from a perceptual point of view. Note that, as in (b), the proposed lattice does not exactly match the fence periodicity. This is due to: 1) the initialization of the algorithm and the structure of the graph in the alternate minimization algorithm 2) the fact that the horizontal periodicity is broken by the black post.}
  \label{fig:preprocessing}
\end{figure}

\paragraph{Homography}
In the previous experiments we suppose that the lattice structure was in front of the camera. In many cases this assumption is not true and there exists an homography that matches the deformed lattice in the image to a true lattice. Our algorithm makes the assumption that the lattice is viewed in a frontal way and fails otherwise. However, locally, this assumption is true and we can observe partial match of the lattices in Figure \ref{fig:homography}.

\begin{figure}
  \centering
  \subfloat[]{\includegraphics[width=.24\linewidth]{./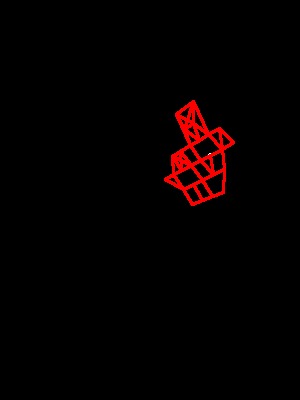}} \hspace{0.5cm}
  \subfloat[]{\includegraphics[width=.24\linewidth]{./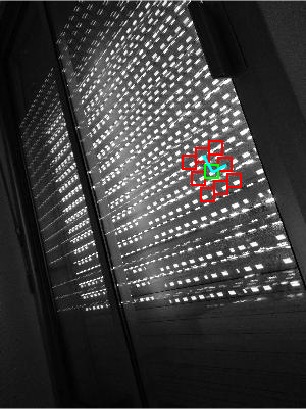}}  \\
  \caption{\figuretitle{Homography and locality} In this experiment \NFAmax \ was set to $10^3$. Note that the detected graph is localized around the original patch in (a). In (b) we superpose the proposed lattice onto the original image. The lattice proposal is valid in a small neighborhood around the original patch. However it is not valid for the whole image.}
  \label{fig:homography}
\end{figure}

\subsection{Texture ranking}
\label{sec:texture-rank} 
We conclude these experiments by showing that this simple graphical model can be used to perform ranking among texture images, sorting them according to their degree of periodicity. We say that an image has high periodicity degree if a lattice structure can be well fitted to the image. We introduce a criterion for evaluating the relevance of the lattice hypothesis. Let $u$ be an image over~$\Omega$, let $\omega \subset \Omega$ be a patch domain and $a$ be as in Proposition \ref{prop:a_contrario_bound} with \NFAmax \ set by the user.

\begin{mydef}[Periodicity criterion]
  Let $\lbrace \veclet{t} \in \Omega, \ \autosim(u,\veclet{t}, \omega) \leq a(\veclet{t}) \rbrace$ be the set of detected offsets and $N_{{\mathscr{C}}}$ its number of connected components as defined in Section \ref{sec:algorithm and properites}. Let also $(\widehat{B}, \widehat{M}, \widehat{\sigma})$ be the estimated parameters using Algorithm \ref{alg:alternate}. We define the following periodicity criterion $c_{per}$ as
  \begin{equation} c_{per}(u) = \frac{\pi \widehat{\sigma}^2}{N_{\mathscr{C}}\vertt{\operatorname{det}(\hat{b}_1,\hat{b}_2)}} \eqsp ,\label{eq:cper}\end{equation}
  where $\widehat{B} = (\hat{b}_1, \hat{b}_2)$.
\end{mydef}

The criterion $c_{per}$ simply computes the ratio between the error area of Algorithm \ref{alg:alternate}, \ie \ the error made when considering the approximated lattice hypothesis, see Definition \ref{def:approximated_lattice_hypothesis}, and the area of the parallelogram defined by the output basis vectors. If we have enough detections this quantity is supposed to be small when the approximated lattice hypothesis holds and large when it does not. Nonetheless, we introduce a dependency in the number of detections. Indeed, even if no lattice is perceived, the hypothesis in Definition \ref{def:approximated_lattice_hypothesis} may still hold if the number of detected offsets is small.

In the experiment presented in Figure \ref{fig:ranking} we sort 25 texture images based on the $c_{per}$ criterion. Images are of size ${256 \times 256}$. Since the identified graph highly depends on the patch position and the patch size, for each image we uniformly sample 150 patch positions and set the patch size to ${20 \times 20}$. For each set of parameters we find a lattice using Algorithm \ref{alg:auto-similaritydetection} and Algorithm \ref{alg:alternate} with parameters $\text{\NFAmax} = 1$, $\delta_M = 10$, $\delta_B = 10^{-2}$ and $N_{it} =10$. A statistical study of our ranking is presented in Figure \ref{fig:stat_ranking}. Note that, from a perceptual point of view, from (a) to (n) all textures are periodic except for (f), (j) and (k) which are examples for which our algorithm fails. However, from (o) to (y), no texture is periodic.

\begin{figure}
  \centering
  \subfloat[-9.75]{\includegraphics[width=.19\linewidth]{./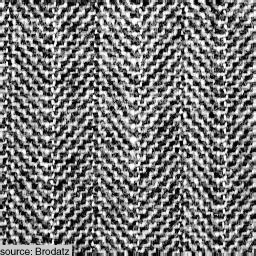}} \hfill
  \subfloat[-9.42]{\includegraphics[width=.19\linewidth]{./img/bw/img_01.jpg}} \hfill
  \subfloat[-9.12]{\includegraphics[width=.19\linewidth]{./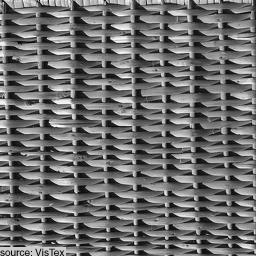}} \hfill
  \subfloat[-9.00]{\includegraphics[width=.19\linewidth]{./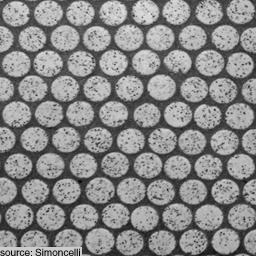}} \hfill
  \subfloat[-8.80]{\includegraphics[width=.19\linewidth]{./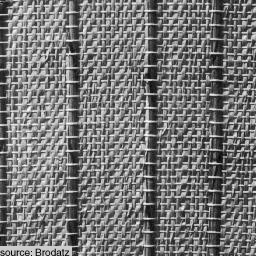}} \hfill \\
  \subfloat[-8.24]{\includegraphics[width=.19\linewidth]{./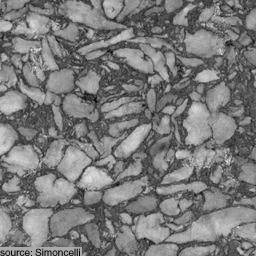}} \hfill
  \subfloat[-8.24]{\includegraphics[width=.19\linewidth]{./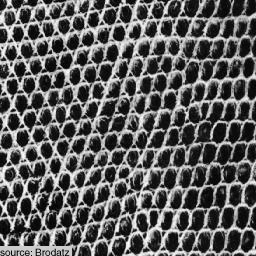}} \hfill
  \subfloat[-7.99]{\includegraphics[width=.19\linewidth]{./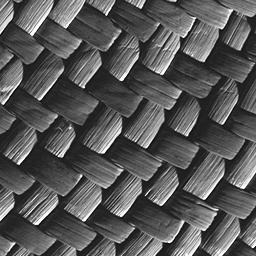}} \hfill
  \subfloat[-7.80]{\includegraphics[width=.19\linewidth]{./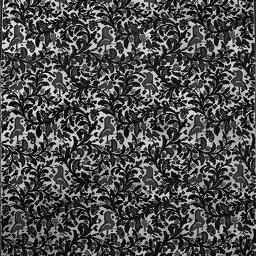}} \hfill
  \subfloat[-7.77]{\includegraphics[width=.19\linewidth]{./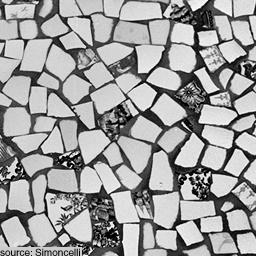}} \hfill \\
  \subfloat[-7.74]{\includegraphics[width=.19\linewidth]{./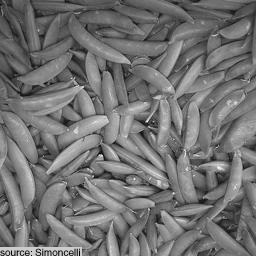}} \hfill
  \subfloat[-7.72]{\includegraphics[width=.19\linewidth]{./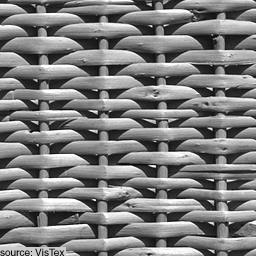}} \hfill
  \subfloat[-7.47]{\includegraphics[width=.19\linewidth]{./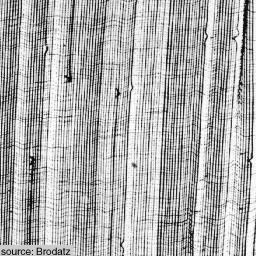}} \hfill
  \subfloat[-7.26]{\includegraphics[width=.19\linewidth]{./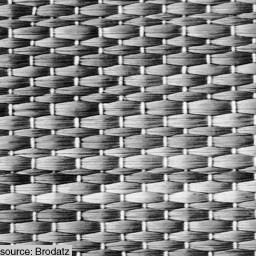}} \hfill
  \subfloat[-7.21]{\includegraphics[width=.19\linewidth]{./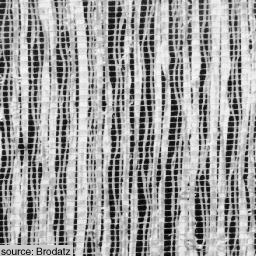}} \hfill \\
  \subfloat[-7.20]{\includegraphics[width=.19\linewidth]{./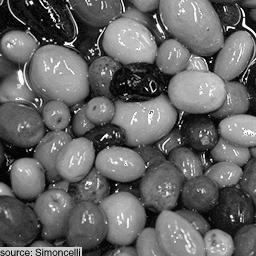}} \hfill
  \subfloat[-7.19]{\includegraphics[width=.19\linewidth]{./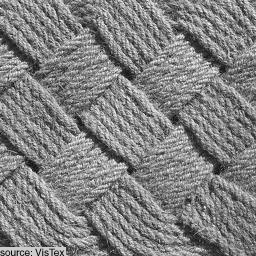}} \hfill
  \subfloat[-7.17]{\includegraphics[width=.19\linewidth]{./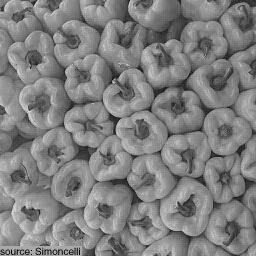}} \hfill
  \subfloat[-6.92]{\includegraphics[width=.19\linewidth]{./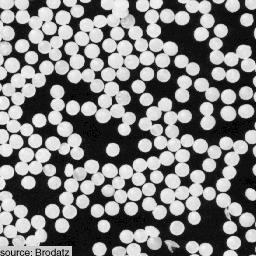}} \hfill
  \subfloat[-6.86]{\includegraphics[width=.19\linewidth]{./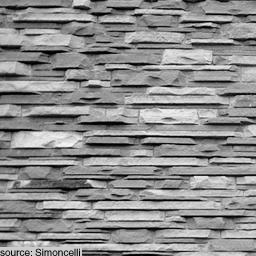}} \hfill \\
  \subfloat[-6.78]{\includegraphics[width=.19\linewidth]{./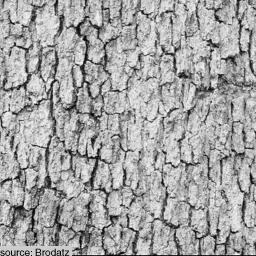}} \hfill
  \subfloat[-6.65]{\includegraphics[width=.19\linewidth]{./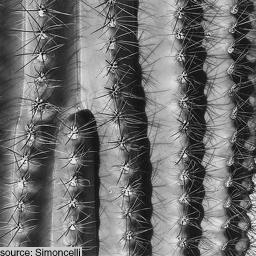}} \hfill
  \subfloat[-6.56]{\includegraphics[width=.19\linewidth]{./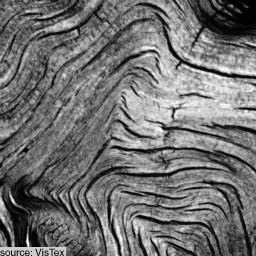}} \hfill
  \subfloat[-6.30]{\includegraphics[width=.19\linewidth]{./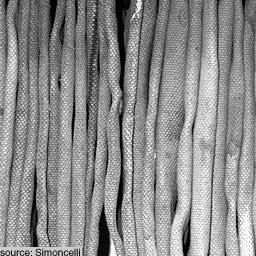}} \hfill
  \subfloat[-6.16]{\includegraphics[width=.19\linewidth]{./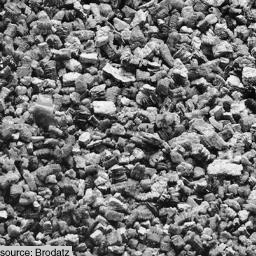}} \hfill \\  
  \caption{\figuretitle{Texture ranking}  The $c_{per}$ criterion, defined in \eqref{eq:cper}, is computed for each setting. We associate to each image the median of the 150 criterion values and sort the images accordingly. (a) corresponds to the lowest criterion, \ie \ the most periodic image according to $c_{per}$ criterion. (y) corresponds to the largest criterion, \ie \ the least periodic image according to $c_{per}$. Under each image we give the logarithm of the median $c_{per}$ values.} \label{fig:ranking}
\end{figure}

\epstopdfsetup{outdir=./img/}
\begin{figure}
  \centering
  \includegraphics[width=.5\linewidth]{./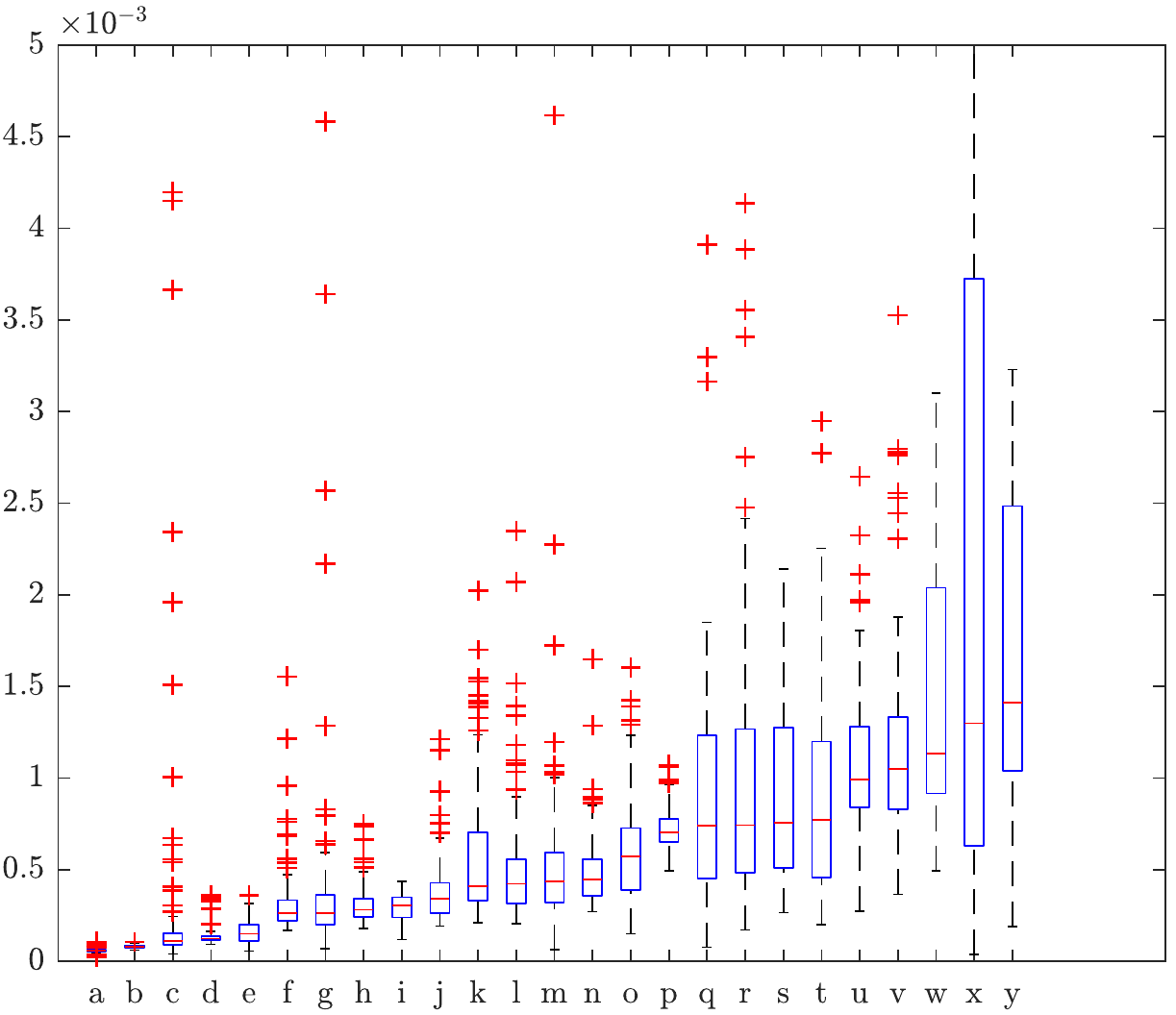}
  \caption{\figuretitle{Boxplot for $c_{per}$ values} In this figure we present a boxplot of the $c_{per}$ values, defined in \eqref{eq:cper}, used to rank textures images in Figure \ref{fig:ranking}. We recall that we use 150 random patch positions in order to compute the $c_{per}$ values. Letters on the $x$-axis correspond to the textures in Figure \ref{fig:ranking}. For each texture we present its median $c_{per}$ value. The lower, respectively upper, limit of the blue box corresponds to $25\%$, respectively $75\%$ of the computed $c_{per}$ values. The dashed line corresponds to the confidence interval with level $0.07$ under normality assumption. Points outside this interval are plotted in red and the graphics was clipped between $0$ and $5 \times 10^{-3}$. The size of the confidence interval grows with the median value. It must be noted that the overlapping of the blue boxes might explain some inconsistencies of our ranking. Another source of errors lie in the model which assumes that if a texture is periodic its pattern is described by a $20 \times 20$ patch. In order to perform a more robust ranking a multiscale approach should be preferred.} \label{fig:stat_ranking}
\end{figure}

\section{Conclusion}
\label{sec:conclusion}

In this paper we introduce a statistical model, the \acontrario \ framework, to analyze spatial redundancy in images. 
We propose a general algorithm for detecting redundancy in natural images. It relies on Gaussian random fields as background models and takes advantage of the links between the $\ell^2$ norm and Gaussian densities. The \acontrario \ formulation provides us with a statistically sound way of thresholding distances in order to assess similarity between patches. In this rationale we replace the task of manually setting thresholds by the selection of a Number of False Alarms. 

We illustrate our contribution with three examples in various domains of image processing. Introducing a simple modification of the NL-means algorithm we show that similarity detection (in this case, dissimilarity detection) in a theoretical \acontrario \ framework can easily be embedded in any image denoising pipeline. For instance, the threshold we introduced could be integrated into the Non-Local Bayes algorithm \cite{lebrun2013nonlocal} in order to estimate mean and covariance matrices with probabilistic guarantees. The generality of our model allows for several extensions for non-Gaussian noises \cite{deledalle2009iterative} or to take into account the geometry of the patch space~\cite{houdard2017high, wang2013sure}.

Turning to periodicity detection we propose a novel graphical model using the output of Algorithm \ref{alg:auto-similaritydetection} in order to extract lattices from images. In this model, lattice extraction is formulated as the maximization of some log-likelihood defined on a graph. We prove the finite-time convergence of Algorithm \ref{alg:alternate}
. We provide image experiments illustrating the role of the hyperparameters in our study and we assess the importance of selecting adaptive Gaussian random fields as background models. We remark that the expected number of false alarms parameter is linked to the choice of the input patch and give a range of possible values for \NFAmax \ settings. 
We also illustrate its possible application in crystallography as it correctly identifies underlying lattices in alloys. This rationale could be used to identify symmetry groups (wallpaper groups) in alloys, following the work of \cite{liu2004computational}. Finally our method is tested on natural images where some of its limits such as perspective defect or sensitivity to occlusion phenoma are identified. It must be noted that our method could easily be extended to color images by considering $\R^3$-valued instead of real-valued images.

Our last application consists in giving a quantitative criterion for periodicity texture ranking. This criterion is based on the parameters estimated in Algorithm \ref{alg:alternate}. Since we set our background models to be Gaussian random fields and remarking that these are good microtexture approximations we wish to explore the possibility to embed our \acontrario \ framework in texture analysis and texture synthesis algorithms. For instance an \acontrario \ methodology could be incorporated in the algorithm proposed by Raad et al. in \cite{raad2015conditional}. Another potential direction is to look at the behavior of the introduced dissimilarity functions for more general random fields in order to handle more complex and structured situations such as parametric texture synthesis.

\section{Acknowledgements}

The authors would like to thank Denis Gratias for the crystallography images, Jérémy Anger for some of natural images, Axel Davy who provided an OpenCL implementation of the NL-means algorithm and Thibaud Ehret for its insights and comments on denoising algorithms. 
\appendix
\section{Eigenvalues}
\label{sec:eigenvalues}

\begin{proof}[Proof of Proposition \ref{prop:eigenvalues}]
  \label{proof:eigenvalues}
  We fix $\veclet{t} \neq 0$ with $\| \veclet{t} \|_{\infty} < p$ and denote $C = C_{\veclet{t}}$. Without loss of generality we consider that $t_x >0$ and $t_y >0$. We consider $X$ an eigenvector of $C$. Let $\Omega_0 = \left( \Omega - \veclet{t} \right) \cap \Omega^{c}$ and the function $J: \ \Omega_0 \to \llbracket 2,+\infty \llbracket$ such that for any $\veclet{x}_0 \in \Omega_0$
  \begin{equation*}
    J(\veclet{x}_0) = \argmin \lbrace k \in  \N \without{0}, \ \veclet{x}_0 + k \veclet{t} \notin \Omega \rbrace \eqsp .
  \end{equation*}
  It is clear that $I = \lbrace (k,m), k \in \llbracket 1, m - 1 \rrbracket, \ m \in J(\Omega_0) \rbrace$ is in bijection with $\Omega$.
  Let $\veclet{x}_0 \in \Omega_0$, $m = J(\veclet{x}_0)$ and $k \in \llbracket 1,m-1 \rrbracket$. We define $X_{\veclet{x}_0, k}$ over $\Z^2$ such that
\begin{equation*}
  X_{\veclet{x}_0, k}(\veclet{x}_0 + \ell \veclet{t}) = \sin \left( \frac{\ell k \pi}{m} \right) \ \text{for} \ \ell \in \llbracket 1,m-1\rrbracket \eqsp , \qquad 0 \ \text{otherwise} \eqsp .
\end{equation*}
Using that $\sin(a+b) + \sin(a-b) = 2\sin(a) \cos(b)$,  we have for any $\veclet{x} \in \Z^2$
\begin{equation*}
  X_{\veclet{x}_0, k}(\veclet{x} + \veclet{t}) - 2 \cos \left( \frac{k\pi}{m} \right)X_{\veclet{x}_0, k}(\veclet{x}) + X_{\veclet{x}_0, k}(\veclet{x} - \veclet{t}) = 0 \eqsp .
\end{equation*}
This implies that for any $\veclet{x} \in \Z^2$ 
\begin{equation*}
  2X_{\veclet{x}_0, k}(\veclet{x}) - X_{\veclet{x}_0, k}(\veclet{x} + \veclet{t}) - X_{\veclet{x}_0, k}(\veclet{x} - \veclet{t}) = \left[2 - 2\cos\left( \frac{k\pi}{m} \right)\right] X_{\veclet{x}_0, k}(\veclet{x}) = 4\sin^2\left( \frac{k \pi}{m}\right)X_{\veclet{x}_0, k}(\veclet{x}) \eqsp .
\end{equation*}
Thus the one-dimensional vector (given by the raster-scan order on the $x$-axis) of the restriction of $X_{\veclet{x}_0, k}$ is an eigenvector of $C$ associated with eigenvalue $4\sin^2\left( \frac{k \pi}{m}\right)$.

Using that $I$ is in bijection with $\Omega$ we get that the number of vectors $(X_{\veclet{x}_0, k})$ is $|\Omega|$. We show that this family of vectors is linearly-independent. Let $\Lambda_{\veclet{x}_0, k} \in \R$ such that
\begin{equation*}
  \sum_{\veclet{x}_0 \in \Omega_0} \sum_{k=1}^{J(\veclet{x}_0) - 1} \Lambda_{\veclet{x}_0, k}X_{\veclet{x}_0,k} = 0 \eqsp .
\end{equation*}
Since $X_{\veclet{x}_0,k}$ and $X_{\veclet{y}_0,k'}$ have different support if and only if $\veclet{x}_0 \neq \veclet{y}_0$ we get that for any $\veclet{x}_0 \in \Omega_0$, $\sum_{k=1}^{J(\veclet{x}_0) -1} \Lambda_{\veclet{x}_0,k} X_{\veclet{x}_0,k} = 0$. This gives that $(\Lambda_{\veclet{x}_0,k})_{k \in \llbracket 1, J(\veclet{x}_0) - 1 \rrbracket}$ is in the kernel of the matrix $\left( \sin(\ell k \pi / (J(\veclet{x}_0) - 1)) \right)_{1 \leq j, \ell \leq J(\veclet{x}_0) - 1}$. Since the sinus discrete transform is invertible we obtain that for any $\veclet{x}_0 \in \Omega_0$ and $k \in \llbracket 1, J(\veclet{x}_0) - 1 \rrbracket$, $\Lambda_{\veclet{x}_0,k} = 0$. Thus the family $X_{\veclet{x}_0, k}$ is a basis of eigenvectors.

We aim at computing the cardinality of $K_{k,m} = \lbrace X_{\veclet{x}_0,k}, J(\veclet{x}_0) = m \rbrace$. By definition, in Proposition \ref{prop:eigenvalues}, $r_{k,m} = |K_{k,m}|$. First note that $|K_{k',m}| = |K_{k,m}|$. We give the following decomposition $\Omega_0 = \Omega_x \cup \Omega_y \cup \Omega_{x,y}$ with
\begin{equation*}
  \Omega_x = \llbracket -t_x, -1 \rrbracket \times \llbracket 0, p - 1 - t_y \rrbracket, \ \Omega_x = \llbracket 0, p-1-t_x  \rrbracket \times \llbracket -t_y, -1 \rrbracket, \ \Omega_{x,y} = \llbracket -t_x, -1 \rrbracket \times \llbracket -t_y, -1 \rrbracket \eqsp .
\end{equation*}
Note that for all $\veclet{x}_0 \in \Omega_0$ we have that $\veclet{x}_0 + (q+1)\veclet{t} \notin \Omega$, with $q = \lceil \frac{p}{|t_x| \vee |t_y|} \rceil $. Thus $J(\Omega_0) \subset \llbracket 2, q+1 \rrbracket$.  Let $m \in \llbracket 2, q-1 \rrbracket$. The cardinality of $K_{k,m}$ is the cardinality of $J^{-1}(m)$. Let $\veclet{x}_0 \in \Omega_x$ we have
\begin{equation*}
  \veclet{x}_0=(i_0, j_0)  \in K_{k,m} \ \Leftrightarrow \ \accolade{& i_0 + m t_x \geq p \ \\ &\text{or} \\ & j_0 + mt_y \geq p} \text{and} \accolade{& i_0 + (m-1) t_x \leq p-1 \ \\ &\text{and} \\ &j_0 + (m-1) t_y \leq p-1 \\} \eqsp .
\end{equation*}
Since $\veclet{x}_0 \in \Omega_x$ we have $i_0 + mtx \leq p-1$, hence
\begin{equation*}
  \veclet{x}_0=(i_0, j_0)  \in K_{k,m} \ \Leftrightarrow \  j_0 + mt_y \geq p \ \text{and} \ j_0 + (m-1) t_y \leq p-1 \eqsp .
\end{equation*}
Thus $| \Omega_x \cap J^{-1}(m) | = t_x t_y$. Similarly we get that $| \Omega_y \cap J^{-1}(m) | = t_x t_y$ and $\Omega_{x,y} \cap J^{-1}(m) = \emptyset$. Thus, $|K_{k,m}| = 2 t_x t_y$.

We have computed $|K_{k,m}|$ for every $m \in \llbracket 2, q-1 \rrbracket$. In order to complete our study it only remains to compute $|K_{k,q+1}|$, since $|K_{k,q}|$ can be deduced from the summability condition and from $|K_{k,m}| = |K_{k',m}|$. We only compute $|K_{k, q+1}|$. We remark that $\Omega_x \cap J^{-1}(q+1) = \Omega_y \cap J^{-1}(q+1) = \emptyset$. Let $\veclet{x}_0 \in \Omega_{x,y}$ then $\veclet{x}_0 = -\veclet{t} + (x,y)$ with $x \in \llbracket 0, t_x - 1 \rrbracket$ and $y \in \llbracket 0, t_y - 1 \rrbracket$. We obtain the following equivalence
\begin{equation*}
  \veclet{x}_0 \in J^{-1}(q+1) \ \Leftrightarrow \ \accolade{&-t_x + x + (q+1)t_x \geq p \ \\ &\text{or} \\ &-t_y + y + (q+1)t_y  \geq p} \text{and} \accolade{&-t_x + x + qt_x  \leq p-1 \ \\ & \text{and} \\ & -t_y + y + qt_y \leq p-1 \\} \eqsp .
\end{equation*}
Since $qt_x \geq p$ or $qt_y \geq p$ we obtain that the first condition is always satisfied. Thus we get
\begin{equation*}
  \veclet{x}_0 \in J^{-1}(q+1) \ \Leftrightarrow \ x \leq p - 1 - (q-1)t_x \ \text{and} \ y \leq p - 1 - (q-1)t_y \eqsp .
\end{equation*}
Using that $p-1-(q-1)t_x = \left( \lceil \frac{p}{t_x} \rceil - q \right) t_x + t_x -1 - p_x$, we conclude the proof.
\end{proof}
  
\section{Update rules}
We derive the proof of Proposition \ref{prop:alternate_update}.
  \begin{proof}
    \label{proof:alternate_update}
    Computing the minimum of $q(B,M | E)$ for fixed $B \in \R^4$, respectively fixed $M \in \R^{2|E|}$, gives the update rule for $M$, respectively for $B$. We obtain that 
    \[ \al{
        q(B,M | E) &= \summ{\veclet{e} \in E}{}{m_{\veclet{e}}^2\| b_1 \|^2} + \summ{\veclet{e} \in Eb}{}{n_{\veclet{e}}^2 \| b_2 \|^2} + 2 \summ{e \in E}{}{m_{\veclet{e}} n_{\veclet{e}} \langle b_1, b_2 \rangle} \\
        &- 2 \summ{e \in E}{}{m_{\veclet{e}} \langle b_1, \veclet{e} \rangle} - 2 \summ{e \in E}{}{n_{\veclet{e}} \langle b_2, \veclet{e} \rangle} + r(B,M)\\
        &=  B^T \left( \Lambda_M \otimes \operatorname{Id}_2 \right) B - 2 \langle B, E_M \rangle + \alpha(M)\\
        &= \| \left(\Lambda_M \otimes \operatorname{Id}_2 \right)^{\frac{1}{2}}  B - \left(\Lambda_M \otimes \operatorname{Id}_2 \right)^{\frac{-1}{2}}E_M \|^2 + \alpha(M) \\
        &= \| \left(\Lambda_M \otimes \operatorname{Id}_2 \right)^{\frac{1}{2}}  \left(B - \left(\Lambda_M \otimes \operatorname{Id}_2 \right)^{-1}E_M \right) \|^2 + \alpha(M) \eqsp ,} \]
    where $\alpha(M)$ depends only on $M$. Similar derivation goes for $B$ and we obtain the proposed update rules.
  \end{proof}


\bibliographystyle{plain}
\bibliography{research.bib}

\end{document}